\def\a{\alpha}
\def\b{\beta}
\def\d{\delta}
\def\eps{\ve}
\renewcommand{\epsilon}{\ve}
\def\ve{\varepsilon}
\def\k{\kappa}
\def\r{\rho}
\newcommand{\eqdef}{\mathrel{\mathop:}=}
\newcommand{\norm}[1]{\left\lVert #1 \right\rVert}
\newcommand{\onenorm}[1]{\left\lVert #1 \right\rVert_1}
\providecommand{\cX}{\mathcal{X}}
\providecommand{\cY}{\mathcal{Y}}
\providecommand{\cZ}{Z}
\providecommand{\cH}{\mathcal{H}}
\providecommand{\cF}{\mathcal{F}}
\providecommand{\VC}{\mathrm{VC}}
\providecommand{\cD}{D}
\providecommand{\cA}{\mathcal{A}}
\providecommand{\Id}{\mathrm{I}}
\providecommand{\normal}{\mathcal{N}}
\providecommand{\Rad}{\mathfrak{R}}
\providecommand{\GC}{\mathfrak{G}}
\providecommand{\eGC}{\widehat{\GC}}
\providecommand{\eRad}{\widehat{\Rad}}
\providecommand{\Com}{\mathfrak{O}}
\providecommand{\eCom}{\widehat{\Com}}
\providecommand{\EE}{\mathop{\mathbb{E}}}
\providecommand{\rv}[1]{\bm{#1}}
\newcommand{\abss}[1]{\left\lvert {#1} \right\rvert}
\DeclareMathOperator*{\argmin}{arg\,min}
\theoremstyle{plain}
\newtheorem{theorem}{Theorem}[section]
\newtheorem{lemma}[theorem]{Lemma}
\newtheorem{claim}[theorem]{Claim}
\newtheorem{informal}[theorem]{Informal Theorem}
\newtheorem{corollary}[theorem]{Corollary}
\newtheorem{definition}[theorem]{Definition}
\title{Learning from weakly dependent data under Dobrushin's condition}
\author {
	Yuval Dagan\thanks{Massachusetts Institute of Technology, EE\&CS, \texttt{dagan@mit.edu}}
	\and
	Constantinos Daskalakis\thanks{Massachusetts Institute of Technology, EE\&CS, \texttt{costis@csail.mit.edu}}
	\and
	Nishanth Dikkala\thanks{Massachusetts Institute of Technology, EE\&CS, \texttt{nishanthd@csail.mit.edu}}
	\and
	Siddhartha Jayanti\thanks{Massachusetts Institute of Technology, EE\&CS, \texttt{jayanti@mit.edu}}
}
\begin{document}
\maketitle
\begin{abstract} Statistical learning theory has largely focused on learning and generalization given independent and identically distributed (i.i.d.) samples. Motivated by applications involving time-series data, there has been a growing literature on learning and generalization in settings where data is sampled from an ergodic process. This work has also developed complexity measures, which appropriately extend the notion of Rademacher complexity to bound the generalization error and learning rates of hypothesis classes in this setting. Rather than time-series data, our work is  motivated  by settings where data  is sampled on a network or a spatial domain, and thus do not fit well within the framework of prior work. We provide learning and generalization bounds for data that are complexly dependent, yet their distribution satisfies the standard Dobrushin's condition. Indeed, we show that the standard complexity measures of Gaussian and Rademacher complexities and VC dimension are sufficient measures of complexity for the purposes of bounding the generalization error and learning rates of hypothesis classes in our setting. Moreover, our generalization bounds only degrade by constant factors compared to their i.i.d.~analogs, and our learnability bounds degrade by log factors in the size of the training set.
\end{abstract}
\section{Introduction} \label{sec:intro}

A main goal in statistical learning theory is understanding whether observations of some phenomenon of interest can be used to make confident predictions about future observations. Usually this question is studied in the setting where a training set $\rv{S}=(\rv{x}_i,\rv{y}_i)_{i=1}^m$, comprising pairs of covariate vectors $\rv{x}_i \in {\cal X}$ and response variables $\rv{y}_i \in {\cal Y}$, are drawn independently from some unknown distribution $D$, and the goal is to make predictions about a future sample $(\rv{x},\rv{y})$ drawn independently from the same distribution $D$. That is, we wish to predict $\rv{y}$ given $\rv{x}$.

Given some hypothesis class ${\cal H} \subset {\cal Y}^{\cal X}$, comprising predictors that map $\cal X$ to ${\cal Y}$ and a loss function $\ell: {\cal Y}^2 \rightarrow \mathbb{R}$ whose values $\ell(\hat{y},y)$ express how bad it is to predict $\hat{y}$ instead of $y$, a wealth of results characterize the relationship between the size  $m$ of the training set $\rv{S}$ and the approximation accuracy that is attainable for choosing some predictor $h \in {\cal H}$ whose expected loss, $L_D(h)=\mathbb{E}_{(\rv{x},\rv{y}) \sim D} \ell(h(\rv{x}),\rv{y})$, on a future sample, is as small as possible. A related question is understanding how well the training set $\rv{S}$ ``generalizes,''  in the sense of minimizing $\sup_{h \in {\cal H}}|L_S(h) - L_D(h)|$, where $L_S(h)$ is the average loss of $h$ on the training set $\rv{S}$. To characterize the learnability and generalization properties of hypotheses classes, standard complexity measures of function classes, such as the VC dimension~\citep{vapnik2015uniform} and the Rademacher complexity~\citep{bartlett2002rademacher}, have been developed.

\smallskip The assumption that the training examples $(\rv{x}_1,\rv{y}_1),\ldots,(\rv{x}_n,\rv{y}_n)$ as well as the future test sample $(\rv{x},\rv{y})$ are all independently and identically distributed (i.i.d.) is, however, too strong in many applications. Often, training data points are observed on nodes of a network, or some spatial or temporal domain, and are {\em dependent} both with respect to each other and with respect to  future observations. Examples abound in financial and meteorological applications, and dependencies naturally arise in social networks through {\em peer effects}, whose study has recently exploded in topics as diverse as criminal activity (see e.g.~\citealp{glaeser1996crime}), welfare participation (see e.g.~\citealp{bertrand2000network}), school achievement (see e.g.~\citealp{sacerdote2001peer}), participation in retirement plans (see~\citealp{duflo2003role}), and obesity (see e.g.~\citealp{trogdon2008peer,christakis2013social}). A prominent dataset where network effects are studied was collected by the National Longitudinal Study of Adolescent Health, a.k.a.~AddHealth study~\citep{harris2009waves}. This was a major national study of students in grades 7-12, who were asked to name their friends---up to 10, so that friendship networks can be constructed, and answer hundreds of questions about their personal and school life, and it also recorded information such as the age, gender, race, socio-economic background, and health of the students. Disentangling individual effects from network effects in such settings is a recognized challenge (see e.g.~the discussion by \citealt{manski1993identification} and \citealt{bramoulle2009identification}, and the discussion of prediction models for network-linked data by \citealt{li2016prediction}).

\smallskip Motivated by such applications, a growing literature has studied learning and generalization in settings where data is non-i.i.d. This work goes back to at least~\cite{yu1994rates}, and has grown quite significantly in the past decade. A central motivation  has been settings involving time-series data. As such, this literature has focused on  data sampled from an ergodic process. For this type of data, generalization and learnability bounds have been obtained whose quality depends on the mixing properties of the data generation process as well as the complexity of the hypothesis class under consideration, through appropriate generalizations of the Rademacher complexity. We discuss this literature in Section~\ref{sec:related}, and present precise generalization bounds derived from this literature in Section~\ref{sec:timeseries-comparison}. 

\smallskip In contrast to prior work, our main motivation is the study of networked data, due to their significance in  economy and society, including in the applications discussed above. The starting point of our investigation is that data observed on a network does not fit well the statistical learning frameworks proposed for non-i.i.d.~data in prior work, which targets time-series data. In particular, there is no natural ordering of  observations collected on a network with respect to which one may postulate a fast-mixing/correlation-decay property, which may be exploited for statistical power. We thus propose a different statistical learning framework that is better suited to networked data. 

We propose to study generalization and learnability when the training samples $\rv{S}=(\rv{x}_i,\rv{y}_i)_{i=1}^m$ are complexly dependent but their joint distribution satisfies Dobrushin's condition; see Definition~\ref{def:dob}. Dobrushin's condition was introduced by~\cite{dobruschin1968description} in the study of Gibbs measures, originally in the context of identifying conditions under which the Gibbs distribution has a unique equilibrium / stationary state and has since been well-studied in statistical physics and probability literature (see e.g.~\citealp{dobrushin1987completely,stroock1992logarithmic}) as it implies a number of desirable properties, such as fast mixing of Glauber dynamics \citep{kulske2003concentration}, concentration of measure \citep{marton1996bounding,kulske2003concentration,chatterjee2005concentration,DaskalakisDK18,GheissariLP17}, and correlation decay \citep{kunsch1982decay}. 
For a survey of  properties  resulting from Dobrushin's condition see \cite{weitz2005combinatorial}.

\subsection{Our Results}
\noindent \textbf{Setting:} Assuming that our training set $\rv{S}$ and test sample $(\rv{x},\rv{y})$ are drawn from a distribution $D^{(m)}$ satisfying Dobrushin's condition, as described above, we establish a number of learnability and generalization results. We make the assumption that every example in our training set $(\rv{x}_i,\rv{y}_i)$ comes from the same marginal distribution $D$ which is also the distribution from which we draw the test sample. This assumption is made to provide a uniform benchmark to measure the performance of our learning algorithms against. \\

Our first main result, presented as Theorem~\ref{thm:pac-learn}, provides an agnostic learnability bound for any hypothesis class that is learnable in the i.i.d.~setting, for instance, classes of finite VC dimension (Corollary~\ref{cor:vc-learn}). The dependence on the error and confidence in our bounds match those of the i.i.d.~setting up to logarithmic factors in the size of the training set. We focus on hypothesis classes which have small sample compression schemes in our paper. This property is known to imply learnability in the i.i.d. setting \cite{} and we will show learnability for these classes under Dobrushin distributed data as well. We provide an informal statement of our learnability result applied to finite VC dimension hypothesis classes here.
\begin{informal}[Learnability under Dobrushin Dependent Data]
	\label{thm:learnability-informal}
	Let $\cH$ be a hypothesis class such that $VC(\cH) = d$, and let $L_D$ be the expected 0/1 loss function evaluated on a sample from $D$. Given a training sample $\rv{S} \sim \cD^{(m)}$ where $\cD^{(m)}$ satisfies Dobrushin's condition, there exists a learning algorithm $\cA$ such that
	\begin{align*}
	\Pr\left[ L_D( \cA(\rv{S}) ) \le \inf_{h \in \cH} L_D(h) +  \epsilon \right] \ge 99/100, \text{ for } m = \widetilde{O}\left(\frac{d}{\epsilon^2}\right).
	\end{align*}
	
\end{informal}

Our second main result, presented as Theorem~\ref{thm:main}, provides a generalization bound for hypothesis classes, under stronger conditions on the distribution of $\rv{S}$, which we term \emph{bounded log-coefficient}, and define in Section~\ref{sec:uc}. We bound the maximal deviation $\sup_{h \in \cH} |L_D(h) - L_S(h)|$ in terms of the Gaussian complexity of $\cH$, a value which is closely related to the Rademacher complexity. We obtain a bound which is nearly as tight as if the training set $\rv{S}$ was drawn i.i.d.

\begin{informal}[Uniform Convergence under High Temperature Data]
	\label{thm:uc-informal}
		Let $\cH$ be a hypothesis class, and let $L_D(h)$ and $L_S(h)$ denote the training and expected loss, respectively, of a hypothesis $h$ with respect to some arbitrary loss function. Given a training sample $\rv{S} \sim \cD^{(m)}$ where $\cD^{(m)}$ has log-coefficient bounded by $1$, the following holds:
	\begin{align} \label{eq:1742}
	\EE \left[\sup_{h \in \cH} |L_D( h ) - L_S(h)|\right] \le O(\GC_{\cD^{(m)}}(\cH)),
	\end{align}
	where $\GC_{\cD^{(m)}}(\cH)$ is the Gaussian complexity of $\cH$. In particular, if $\VC(\cH) = d$, then the left hand side of \eqref{eq:1742} is bounded by $O(\sqrt{d/n})$.
\end{informal}

\subsection{Organization}
In Section~\ref{sec:related} we discuss the related studies along the direction of non i.i.d. generalization and learnability. 
In Section~\ref{sec:prelim} we state some preliminary notation and definitions and some lemmas from prior work that we use throughout the paper.
Section~\ref{sec:motivation} contains motivating examples for learning from data that satisfies Dobrushin's condition.
Section~\ref{sec:learnability} contains the learnability results for data satisfying Dobrushin's condition.
Section~\ref{sec:uc} contains the uniform convergence bound for data satisfying the stronger condition and the proofs for this section appear in Section~\ref{sec:pr-uc}.
Section~\ref{sec:timeseries-comparison} provides a comparison between our proposed framework and that of prior work on ergodic processes, and the benefits from our framework in terms of sharpness of generalization bounds. In particular, we show an example setting where our bounds are a significant improvement over the bounds implied by prior work.

\subsection{Related Work}
\label{sec:related}

Rademacher and Gaussian compexities for obtaining uniform convergence bounds on generalization of learning algorithms were first introduced in the work of \cite{bartlett2002rademacher} and have since been extensively studied in the literature on learning theory to characterize the sample complexity of learning for a wide range of problems.
Extending them beyond i.i.d. settings was mainly studied in the context of ergodic processes and exchangeable sequences. 
The bounds in the literature on ergodic processes typically depend on the $\a$ or $\b$ mixing coefficients of these processes. The work on studying learnability for stationary mixing empirical processes started with seminal work of \cite{yu1994rates} and was continued by \cite{mohri2009rademacher, kuznetsov2015learning, mohri2010stability} and the references therein. 
\cite{kuznetsov2015learning} studies non-stationary and non-mixing time series,
\cite{kuznetsov2014} and \cite{Kuznetsov2017} study non-stationary and mixing time series,
\cite{McDonald2017} studies stationary and non-mixing time series, and
\cite{mohri2009rademacher} studies stationary mixing time series.
Of these works, \cite{mohri2009rademacher} is most relevant to ours, since \cite{McDonald2017}'s work on non-mixing time series solves the forecasting problem, i.e. predicting $\rv{z}_{m+1}$  given previous data $\{ \rv{z}_i \}_{i=1}^m$, rather than on predicting $\rv{y}_{m+1}$ given $\rv{x}_{m+1}$ as in our setting. Moreover, since our work focuses on distributions with identical marginals, the most closely related time-series setting to ours is one where the series is stationary. Hence, we compare our results to previous work on stationary time series in Section~\ref{sec:timeseries-comparison}.

\cite{agarwal2013generalization} study the generalization properties of online algorithms in the context of stationary and mixing time series. 
Another direction in which dependent data have been considered is the setting of exchangeable sequences studied in the works of \cite{berti2009rate,pestov2010predictive} and references therein.
Apart from the above extensions to non i.i.d. data, notions of sequential Rademacher complexity were considered in the literature on online learning (see \cite{rakhlin2010online}). None of these settings capture the type of dependences we handle in our work which can have long-range correlations and no spatial mixing behavior in general.


\section{Preliminaries}
\label{sec:prelim}

\paragraph{Notational Conventions}
Random variables will be written in a bold font (say $\rv{x}$), as opposed to elements from the domain set, which are in a normal font (say, $x$). We will use the notation $C, C', C_1, c, c'$ etc. to denote positive universal constants without explicitly stating it. Given a vector $x = (x_1,\dots,x_m)$ and $i \in \{1,\dots, m\}$, $x_{-i}$ denotes the vector $x$ after omitting coordinate $i$. Given a random variables $\rv{z}, \rv{w}$ over $(\Omega, \mathcal{F})$ and $z,w \in \Omega$, denote by $P_{\rv{z}}(z)$ the probability that $\rv{z} = z$ if $\rv{z}$ is discrete and the density of $\rv{z}$ at $z$ if $\rv{z}$ is continuous. Additionally, define by $P_{\rv{z} \mid \rv{w}}(z \mid w)$ the probability $\Pr[\rv{z} = z \mid \rv{w} = w]$ if $\rv{z}$ and $\rv{w}$ are discrete and analogously if they are continuous.\footnote{For general random variables, one can define $P_{\rv{z}} = d\mu$ where $\rv{z} \sim \mu$, however, we will ignore this here. Additionally, we will assume that the density is properly defined on all the space (rather than being defined almost everywhere. Also, we assume that the conditional distributions are properly defined.}

\subsection{Learning}

Fix some feature set $\cX$, label set $\cY$, and a class of hypotheses $\cH$, containing functions from $\cX$ to $\cY$. Assume a loss function $\ell \colon \cY^2 \to \mathbb{R}$, where $\ell(\hat{y} , y)$ is the loss of predicting $\hat{y}$ when the true label is $y$. The simplest example of a loss function is the 0-1 loss, $\ell^{01}(\hat{y},y) = \mathbbm{1}_{\hat{y} \ne y}$. For any hypothesis $h \in \cH$, one can define the loss function $\ell_h \colon (\cX \times \cY) \to \mathbb{R}$ by taking $\ell_h(x,y) = \ell(h(x),y)$. Given some distribution $D$ over $\cX \times \cY$, one can define the expected loss of $h$, namely, $L_D(h) := \EE_{(x,y)\sim D} \ell_h(x,y)$. 

Let $\rv{S} = (\rv{s}_1, \dots, \rv{s}_m)\in (\cX\times \cY)^m$ be a training set of $m$ examples. Usually the coordinates of $\rv{S}$ are assumed independent and identically distributed (iid) according to $D$, but we consider more general measures; this will be discussed shortly. The goal of a learning algorithm is to choose a hypothesis $\hat{h} \in \cH$ given a sample $\rv{S}$ to (approximately) minimize the test error, $L_{D}(\hat{h})$. A common approach for doing so is taking the \emph{empirical risk minimizer} (ERM), namely, 
\[
\hat{h}_{\mathrm{ERM}}:= \arg\min_{h \in \cH} L_{\rv{S}}(h) ~; \quad
\text{where } L_S(h) = \frac{1}{m} \sum_{i=1}^m \ell_h(s_i).
\]

If the sample $\rv{S}$ ``represents well'' the test distribution $D$, then the learned hypothesis only suffers low error. To be precise, we say that $S$ is {\em $\varepsilon$-representative} if for all $h \in \cH$, $|L_D(h) - L_{S}(h)| \le \varepsilon$. From the triangle inequality, it follows that if $S$ is $\varepsilon$-representative, then the ERM is $2\varepsilon$-optimal with respect to $\cH$, namely
\[
L_D(\hat{h}_{\mathrm{ERM}}) \le \inf_{h \in \cH} L_D(h) + 2 \varepsilon.
\]

Thus, to prove learnability, it suffices to show that $\rv{S}$ is $\varepsilon$-representative. Note that representativeness is stronger than learnability: it implies that \emph{any} algorithm {\em generalizes}, namely, that the difference between the training and test errors, $L_D(\cdot)$ and $L_{\rv{S}}(\cdot)$, is  small point-wise.

\paragraph{Learning from dependent samples.}
Instead of assuming that the samples are iid, we assume that they are drawn from a \emph{dependent} (joint) distribution $D^{(m)}$ over $(\cX \times \cY)^m$, where all marginals are distributed according to the same distribution $D$ over $\cX\times \cY$. Given $\rv{S} \sim D^{(m)}$, the goal is to (approximately) minimize the test error $L_D(\hat{h})$.



\paragraph{Rademacher, Gaussian and $\rv{\tau}$ complexities.}

Given a sample $S = (s_1, \dots, s_m) \in \cZ^m$, a family $\cF$ of functions from $\cZ$ to $\mathbb{R}$, and a random variable $\rv{\tau} = (\rv{\tau}_1, \dots, \rv{\tau}_m)$ over $\mathbb{R}^m$, define the 
\emph{$\rv{\tau}$-complexity} of $\cF$ with respect to the sample $S$ by:
\[
\eCom_{S}^{\rv{\tau}}(\cF)
= \EE_{\rv{\tau}} \left[ \sup_{f \in \cF}
\frac{1}{m}\sum_{i=1}^m \rv{\tau}_i f(s_i)
\right].
\]
Define the \emph{Rademacher complexity} of $\cF$ by $\eRad_S(\cF) := \eCom^{\rv{\sigma}}_S(\cF)$ where $\rv{\sigma}$ is uniform over $\{-1,1\}^m$, and the \emph{Gaussian complexity} of $\cF$ by $\eGC_S(\cF) = \eCom^{\rv{g}}_S(\cF)$ where $\rv{g} \sim \mathcal{N}(0, I_m)$. Given a distribution $D^{(m)}$ over $\mathbb{R}^m$, define $\Com^{\rv{\tau}}_{D^{(m)}}(\cF) = \EE_{\rv{S} \sim D^{(m)}} \left[\eCom^{\rv{\sigma}}_{\rv{S}}(\cF)\right]$, and similarly define $\Rad_{D^{(m)}}(\cF)$ and $\GC_{D^{(m)}}(\cF)$.

\subsection{Weakly dependent distributions}

We define two conditions classifying weakly dependent distributions: Dobrushin's condition and high temperature in Markov Random Fields, the first being the weakest and the last being the strongest.

\subsubsection{Dobrushin's Condition \citep{dobruschin1968description}}

First, one defines influences between coordinates of a random variable $\rv{z} = (\rv{z}_1, \dots, \rv{z}_m)$. The influence from $\rv{z}_i$ to $\rv{z}_j$ captures how strong the value of $\rv{z}_j$ affects the conditional distribution of $\rv{z}_i$ when all other coordinates are fixed. Formally:
\begin{definition}[Influence in high dimensional distributions]
	\label{def:influence}
	Let $\rv{z} = (\rv{z}_1, \dots, \rv{z}_m)$ be a random variable over $Z^m$. For $i\ne j \in \{1,\dots,m\}$, define the influence of variable $\rv{z}_j$ on variable $\rv{z}_i$ as
	\begin{equation*}
	I_{j \to i}(\rv{z}) = \max_{\substack{z_{-i-j} \in Z^{m-2} \\ z_j , z_j' \in Z}} 
	d_{TV}\left( P_{\rv{z}_i \mid \rv{z}_{-i}}(\cdot \mid z_{-i-j} z_j),~ P_{\rv{z}_i \mid \rv{z}_{-i}}(\cdot \mid z_{-i-j} z_j') \right),
	\end{equation*}
	where $d_{TV}$ denotes the total variation distance.
\end{definition}

Dobrushin's condition as defined next, certifies that a weakly dependent random vector behaves as i.i.d with respect to some important properties.

\begin{definition}[Dobrushin's Uniqueness Condition]
	\label{def:dob}
	Consider a random variable $\rv{z}$ over $Z^m$. Define the \emph{Dobrushin coefficient} of $\rv{z}$ as
	$
	\a\left(\rv{z}\right) = \max_{1 \le i \le m} \sum_{j \ne i} I_{j\to i}(\rv{z}).
	$
	The variable $\rv{z}$ is said to satisfy Dobrushin's uniqueness condition if $\a\left(\rv{z}\right) < 1$.
\end{definition}

Note that the constant $1$ is important, and for $\epsilon > 0$ there are examples of vectors which deviate from the bound by $\epsilon$ and are extremely dependent.
Distributions satisfying the above condition satisfy McDiarmid-like inequalities and are $O(1/(1-\a))$-subGaussians, as presented next.

The following result builds upon the seminal studies on concentration of measure phenomenon for contracting Markov chains by \cite{marton1996bounding} which is one of the first results on concentration of measure for non-product, non-Haar measures. Theorem~\ref{thm:dob-conc} is from \cite{kulske2003concentration} and \cite{Chatterjee05}.
\begin{theorem}[Concentration of Measure under Dobrushin's Condition]
	\label{thm:dob-conc}
	Let $P^{(m)}$ be a distribution defined over $Z^m$ satisfying Dobrushin's condition with coefficient $\a$. Let $\rv{z} = (\rv{z}_1,\ldots,\rv{z}_m) \sim P^{(m)}$ and let $f: Z^m \to \mathbb{R}$ be a real-valued function with the following bounded differences property, with parameters $\lambda_1, \dots,\lambda_m \ge 0$:
	\[
	\forall z, z' \in Z^m \colon \quad
	|f(z) - f(z')| \le \sum_{i=1}^m \mathbbm{1}_{z_i \ne z'_i} \lambda_i.
	\]
	Then, for all $t > 0$,
	\begin{align*}
		\Pr\left[\abss{f(\rv{z}) - \EE[f(\rv{z})]} \ge t \right] \le 2\exp\left(-\frac{(1-\a)t^2}{2\sum_{i=1}^m \lambda_i^2}\right).
	\end{align*}
\end{theorem}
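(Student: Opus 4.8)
The plan is to prove the tail bound by building a Doob martingale along the coordinates of $\rv{z}$ and applying the Azuma--Hoeffding inequality, the only twist being that the effective Lipschitz constants of the martingale differences must absorb the dependence structure through Dobrushin's condition. Concretely, reveal the coordinates one at a time and set $M_0 = \EE[f(\rv{z})]$ and $M_k = \EE[f(\rv{z}) \mid \rv{z}_1, \dots, \rv{z}_k]$, so that $M_m = f(\rv{z})$ and $f(\rv{z}) - \EE[f(\rv{z})] = \sum_{k=1}^m D_k$ with martingale differences $D_k := M_k - M_{k-1}$. If I can produce constants $c_k$ with $|D_k| \le c_k$ almost surely, then Azuma--Hoeffding gives $\Pr[|f(\rv{z}) - \EE f(\rv{z})| \ge t] \le 2\exp(-t^2/(2\sum_k c_k^2))$, and the entire problem reduces to exhibiting such $c_k$ with $\sum_k c_k^2 \le \frac{1}{1-\alpha}\sum_{i=1}^m \lambda_i^2$.

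The heart of the argument is bounding each $D_k$. Writing $M_{k-1} = \EE_{\rv{z}_k'}[\EE[f \mid \rv{z}_{<k}, \rv{z}_k']]$ with $\rv{z}_k' \sim P_{\rv{z}_k \mid \rv{z}_{<k}}$, we get $D_k = \EE_{\rv{z}_k'}[\EE[f\mid \rv{z}_{\le k}] - \EE[f \mid \rv{z}_{<k}, \rv{z}_k']]$, whose inner difference compares the conditional law of the unrevealed block $\rv{z}_{>k}$ under two boundary conditions that differ only in coordinate $k$. Coupling these two conditional laws and invoking the bounded-differences hypothesis, $|\EE[f\mid\cdots z_k] - \EE[f\mid\cdots z_k']| \le \lambda_k + \sum_{j>k}\lambda_j \Pr[\text{coordinate } j \text{ disagrees under the coupling}]$. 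These disagreement probabilities are exactly what Dobrushin's condition controls: a change at coordinate $k$ propagates to $j$ with first-order strength $I_{k\to j}$, and iterating along all propagation paths bounds the total disagreement probability by the $(j,k)$ entry of the Neumann series $\sum_{r\ge 0} A^r = (I-A)^{-1}$, where $A_{ij} = I_{j\to i}(\rv{z})$ is the influence matrix. Since $\max_i \sum_j A_{ij} = \alpha < 1$, the series converges and the row sums of $(I-A)^{-1}$ are at most $1/(1-\alpha)$; this is the Dobrushin comparison estimate that renders the $c_k$ square-summable.

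The step I expect to be genuinely delicate is sharpening the constant. Plugging $c_k = \lambda_k + \sum_j \lambda_j\,[(I-A)^{-1}]_{jk}$ into Azuma and estimating $\sum_k c_k^2$ by a crude matrix-norm bound yields only the weaker exponent $(1-\alpha)^2 t^2/(2\sum\lambda_i^2)$, losing a factor of $(1-\alpha)$ relative to the stated bound. Recovering the sharp $(1-\alpha)$ is the crux. The clean route is not the crude operator-norm estimate but Marton's coupling recursion, which tracks the \emph{vector} of coupling-disagreement probabilities and contracts it by $\alpha$ at each step, producing a transportation--information inequality with constant $1/(1-\alpha)$; by Bobkov--G\"otze duality this transportation inequality is equivalent to the claimed sub-Gaussian tail. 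Alternatively, Chatterjee's method of exchangeable pairs applied to the Glauber dynamics of $P^{(m)}$ (resample a uniformly random coordinate from its conditional law) yields a Stein antisymmetric kernel whose variance proxy is bounded, under Dobrushin's condition, by $\frac{1}{2(1-\alpha)}\sum_i\lambda_i^2$, giving the theorem with the correct constant directly. I would treat the martingale/coupling skeleton and the Azuma step as routine, and concentrate the real work on this final sharpening.
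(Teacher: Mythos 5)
First, a point of reference: the paper never proves this theorem. It is imported as a known result --- the text immediately preceding it states that Theorem~\ref{thm:dob-conc} ``is from'' K\"ulske (2003) and Chatterjee (2005), building on Marton's work on contracting Markov chains. So there is no internal proof to compare you against; the relevant comparison is with those cited proofs, and your proposal, at its crux, simply invokes them. Your skeleton --- Doob martingale, Azuma--Hoeffding, disagreement propagation through the Neumann series $(I-A)^{-1}$ of the influence matrix --- is the natural first attack, and your diagnosis that it loses a factor of $(1-\alpha)$ in the exponent is accurate and perceptive. But your resolution of that loss is a pointer, not an argument: the transportation--information inequality with constant $1/(1-\alpha)$ (via Marton's recursion and Bobkov--G\"otze duality) and the exchangeable-pair variance proxy $\frac{1}{2(1-\alpha)}\sum_i \lambda_i^2$ for Glauber dynamics \emph{are} the theorems of Marton and Chatterjee whose proofs constitute the result being claimed. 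Asserting them is assuming what you set out to prove, so as a self-contained proof there is a genuine gap at exactly the step you yourself flag as ``the real work'' --- though, to be fair, this lands you on the same literature the paper itself leans on.

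A second, more technical caution: even the part you call routine is not routine under the stated hypothesis. The Dobrushin coefficient in Definition~\ref{def:dob} bounds \emph{incoming} influence, $\max_i \sum_{j \ne i} I_{j \to i}$, i.e., the row sums of $A$. Your martingale-difference bound $c_k \le \lambda_k + \sum_{j>k} \lambda_j\,[(I-A)^{-1}A]_{jk}$ involves the $k$-th \emph{column} of $(I-A)^{-1}A$, and passing from there to $\sum_k c_k^2 \lesssim \|\lambda\|_2^2/(1-\alpha)^2$ requires control of $\|A\|_{2\to 2}$ (or of column sums, i.e., outgoing influence), neither of which follows from the row-sum condition alone; with only row sums one gets $\ell_\infty$- or $\ell_1$-type control of the vector $c$, which can be far weaker in $\ell_2$ when the $\lambda_i$ are non-uniform. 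So even the conceded $(1-\alpha)^2$ exponent does not actually follow from the sketch as written. This asymmetry between incoming and outgoing influence is one reason the cited proofs are organized around couplings of the Glauber dynamics and exchangeable pairs rather than around a coordinate-revealing martingale, and any complete write-up would need to confront it explicitly.
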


\subsubsection{Markov random fields (MRFs) with pairwise potentials}

A common way to define a random vector is by a Markov Random Field (MRF). They are defined by potential functions, which are define the correlations between the vector entries. We will be using the definition of an MRF with pairwise potentials, as defined below:

\begin{definition}[Markov Random Field (MRF) with pairwise potentials]
The random vector $\rv{z} = (\rv{z}_1, \dots, \rv{z}_m)$ over $Z^m$ is an MRF with pairwise potentials if there exist functions $\varphi_i \colon Z \to \mathbb{R}$  and $\psi_{ij} \colon \cZ^2 \to \mathbb{R}$ for $i \ne j \in \{1,\dots,m\}$ such that for all $z \in \cZ^m$,
\[
\Pr_{\rv{z} \sim P^{(m)}}[\rv{z} = z] = \prod_{i=1}^m e^{\varphi_i(z_i)}
\prod_{1 \le i < j \le m} e^{\psi_{ij}(z_i, z_j)}.
\]
The functions $\varphi_i$ are called as \emph{element-wise potentials} and $\psi_{ij}$ are \emph{pairwise potentials}.
\end{definition}

Analogous to Dobrushin's coefficient, one can define the inverse temperature of an MRF with pairwise potentials, where low inverse temperature implies weak correlations.

\begin{definition}[High Temperature MRFs]
	\label{def:high-temp}
	Given an MRF $\rv{z}$ with potentials $\{\varphi_i\}$ and $\{\psi_{ij}\}$, define
	\[
	\beta_{i,j}(\rv{z}) = \sup_{z_i, z_j \in Z} |\psi_{ij}(z_i z_j)| ~;\quad
	\beta(\rv{z}) = \max_{1 \le i \le m} \sum_{j \ne i} \beta_{ij}(P^{(m)}).
	\]
	We say that $\rv{z}$ is \emph{high temperature} if the inverse temperature, $\rv{z}$, is less than $1$.
\end{definition}
The inverse temperature is bounded by Dobrushin's coefficient, as presented below.
The proof is a simple calculation that can be found in \cite{Chatterjee05} after the statement of Theorem 3.8.
\begin{lemma}
	\label{lem:hightemp-implies-dobrushin}
	Given an MRF $\rv{z}$ with pairwise potentials, for any $i \ne j$, $I_{j \to i}(\rv{z}) \le \beta_{j,i}(\rv{z})$. Hence, $\alpha(\rv{z}) \le \beta(\rv{z})$.
\end{lemma}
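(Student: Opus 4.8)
The plan is to reduce the influence $I_{j\to i}$, which is a maximum of total-variation distances between two conditional laws of $\rv{z}_i$, to a one-dimensional estimate on exponential tilts, and then extract the constant $\beta_{ij}$ sharply. First I would write down the single-site conditional from the MRF factorization: fixing $z_{-i}$, all factors not involving coordinate $i$ cancel against the normalization, so $P_{\rv{z}_i\mid\rv{z}_{-i}}(z_i\mid z_{-i}) \propto \exp\!\big(\varphi_i(z_i) + \sum_{k\ne i}\psi_{ik}(z_i,z_k)\big)$, a Gibbs measure whose ``external field'' on $z_i$ is $\varphi_i(\cdot)+\sum_{k\ne i}\psi_{ik}(\cdot,z_k)$. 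The crucial observation is that when we change the conditioning value $z_j\mapsto z_j'$ while keeping $z_{-i-j}$ fixed, the only factor that changes is $\psi_{ij}(\cdot,z_j)\mapsto\psi_{ij}(\cdot,z_j')$. Writing $Q(z_i)\propto \exp(\varphi_i(z_i)+\sum_{k\ne i,j}\psi_{ik}(z_i,z_k))$ for the common, unchanged part, the two conditionals are $P\propto Q\,e^{\psi_{ij}(\cdot,z_j)}$ and $P'\propto Q\,e^{\psi_{ij}(\cdot,z_j')}$; equivalently, $P$ is the exponential tilt of $P'$ by $\phi(z_i):=\psi_{ij}(z_i,z_j)-\psi_{ij}(z_i,z_j')$.

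Next I would prove a clean one-dimensional bound: for any base law $\mu$ and any bounded $\phi$, the tilt $\mu_\phi(x)\propto\mu(x)e^{\phi(x)}$ satisfies $d_{TV}(\mu,\mu_\phi)\le\tanh(\mathrm{osc}(\phi)/4)$, where $\mathrm{osc}(\phi)=\max\phi-\min\phi$. Since the tilt is invariant under adding a constant to $\phi$, only $\mathrm{osc}(\phi)$ matters; normalizing $\phi\in[0,\Delta]$ with $\Delta=\mathrm{osc}(\phi)$ and setting $Z=\EE_\mu[e^\phi]\in[1,e^\Delta]$, one has $d_{TV}(\mu,\mu_\phi)=\tfrac1{2Z}\EE_\mu|e^\phi-Z|$. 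The random variable $e^\phi$ lies in $[1,e^\Delta]$ with mean $Z$, and the mean absolute deviation of a bounded random variable with fixed mean is maximized by the two-point law on the endpoints, giving $\EE_\mu|e^\phi-Z|\le 2(Z-1)(e^\Delta-Z)/(e^\Delta-1)$; optimizing the resulting bound over $Z$ (the optimum is $Z=e^{\Delta/2}$) yields exactly $\tanh(\Delta/4)$. Applying this with $\phi$ as above, and bounding $\mathrm{osc}(\phi)\le 4\beta_{ij}$ — each of the four $\psi$-values in $\phi(a)-\phi(b)$ has magnitude at most $\beta_{ij}$ — gives $d_{TV}(P,P')\le\tanh(\beta_{ij})$ uniformly over $z_{-i-j},z_j,z_j'$.

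Taking the maximum over those arguments and using $\tanh(x)\le x$ then gives $I_{j\to i}(\rv{z})\le\tanh(\beta_{ij})\le\beta_{ij}=\beta_{j,i}$ (the two index orderings agree by symmetry of $\psi_{ij}$). Summing over $j\ne i$ and maximizing over $i$ turns this into $\alpha(\rv{z})=\max_i\sum_{j\ne i}I_{j\to i}\le\max_i\sum_{j\ne i}\beta_{ij}=\beta(\rv{z})$, which is the second claim. The one genuinely delicate point is getting the constant right: the naive estimates — bounding the likelihood ratio $P(z_i)/P'(z_i)\in[e^{-4\beta_{ij}},e^{4\beta_{ij}}]$, or bounding $|e^\phi-Z|$ by the full range — only yield something like $2\beta_{ij}$ or worse and can even blow up for large $\beta_{ij}$. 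Obtaining the sharp factor-one bound forces the use of shift-invariance (so that only the oscillation of $\phi$ enters) together with the two-point extremality of the mean absolute deviation; everything else is the routine computation referenced after Theorem~3.8 of \cite{Chatterjee05}.
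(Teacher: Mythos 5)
Your proof is correct, and it is worth noting that the paper itself does not actually prove this lemma: it defers to ``a simple calculation'' after Theorem 3.8 of Chatterjee (2005). So your argument is a self-contained replacement for that citation, and every step checks out. The single-site conditional $P_{\rv{z}_i\mid \rv{z}_{-i}}(\cdot\mid z_{-i})\propto \exp\bigl(\varphi_i(\cdot)+\sum_{k\ne i}\psi_{ik}(\cdot,z_k)\bigr)$ is right, as is the observation that changing $z_j\mapsto z_j'$ is an exponential tilt by $\phi(\cdot)=\psi_{ij}(\cdot,z_j)-\psi_{ij}(\cdot,z_j')$. Your one-dimensional lemma $d_{TV}(\mu,\mu_\phi)\le \tanh(\mathrm{osc}(\phi)/4)$ is correct: the identity $d_{TV}=\tfrac{1}{2Z}\EE_\mu\lvert e^\phi - Z\rvert$, the chord (two-point extremality) bound $\EE_\mu\lvert e^\phi-Z\rvert\le 2(Z-1)(e^\Delta-Z)/(e^\Delta-1)$ for a convex function of a $[1,e^\Delta]$-valued variable, and the optimization at $Z=e^{\Delta/2}$ giving $(e^{\Delta/2}-1)/(e^{\Delta/2}+1)=\tanh(\Delta/4)$ are all verified; combined with $\mathrm{osc}(\phi)\le 4\beta_{ij}$ and $\tanh(x)\le x$ this yields $I_{j\to i}\le\tanh(\beta_{ij})\le\beta_{j,i}$, and the second claim follows by summing and maximizing. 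Two remarks on what your route buys. First, your intermediate bound $\tanh(\beta_{ij})$ is strictly sharper than the stated $\beta_{ij}$ and is in fact tight: for the zero-field Ising pair $\Pr[(a,b)]\propto e^{ab\theta}$ the influence is exactly $\tanh(\lvert\theta\rvert)$, which matches $\tanh(\mathrm{osc}/4)$ with $\mathrm{osc}=4\lvert\theta\rvert$. Second, your tilt lemma essentially subsumes Lemma~\ref{lem:8} of the paper (used in the proof of Lemma~\ref{lem:inf-bnd}): there the paper rewrites a $2\times 2$ table as an Ising model and then invokes the present lemma, i.e., the very citation being replaced, whereas your argument proves that two-point estimate directly. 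You are also right that the naive likelihood-ratio bound only gives constants like $2\beta_{ij}$ (or worse, blowing up for large $\beta_{ij}$), so the shift-invariance plus mean-absolute-deviation extremality is genuinely needed to get the constant $1$.
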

Lemma~\ref{lem:hightemp-implies-dobrushin} implies that if the inverse temperature is less than $1$, then the random variable has i.i.d-like properties. Similarly to the case with Dobrushin's condition, the smallest excess in the inverse temperature over the threshold of $1$ may cause the vector to be extremely correlated.

\section{Motivation and examples}
\label{sec:motivation}


In this section, we present some tangible networked data models that would benefit from the learnability results that we prove.
Consider the problem of predicting which of many possible choices a person in a social network will make:
who will she vote for in a presidential election? what brand of smart phone will he buy? or what major will she study in college?
Each individual's choice would, of course, be dependent on her own features;
but realistically it would also depend on the choices of her friends and acquaintances.
These situations are well studied, and often modeled as opinion dynamics \citep{montanari2010}, and as autoregressive models \citep{sacerdote2001peer}. 
The following is a natural opinion dynamics for a binary decision (for instance, voting Democrat versus Republican), given the graph of a friend network:
\begin{enumerate}
\item
each individual starts with an initial preference (Democrat or Republican)
\item
and at each time step, a random individual stochastically updates his preference conditioned on the current preferences of his friend
\end{enumerate}
The stationary distribution of these dynamics is a pairwise graphical model,
and thus our learnability and generalization results would apply to the model if the influences are not too high. 

Another prominent econometric model for peer effects hinges on autoregression.
The standard linear regressive stochastic process is described by the equation $Y = XB + E$.
This equation, states that the $n \times d$ feature matrix $X$ of $n$ samples with $d$ features each has a linear relationship (given by $B$) to the response variables coded as the $n \times 1$ response vector $Y$,
up to a small stochastic error $E$ (dimension $n \times 1$).
The linear {\em autoregressive} process is described by the equation  $Y = XB + E + AY$, where the vector of responses $Y$ appears on both sides of the equation.
Thus, the model ultimately states that the responses are dependent linearly on the features and other responses.
Given a training set satisfying these equations, the task is to predict the response $y_{n+1}$ for a new sample $x_{n+1}$.
This problem falls under our model when the auto-regression matrix $A$ has entries that are sufficiently small.
Sacerdote uses this type of autoregressive model to analyze relationships between college roommate assignment and academic achievement \citep{sacerdote2001peer}.
Our results can potentially help in predictive analysis for the same question.

Our results can potentially be applied to meteorological sensor data, since values from sensors that are geographically close are likely to be correlated. 
In contexts where the influences are observed to be small enough, there is scope to leverage our results.
The AddHealth example cited in the introduction provides a setting of networked data where if certain covariates are weakly correlated across students, then one can obtain generalization bounds using our theory for prediction and regression tasks.

\section{Agnostic Learnability of Dobrushin Dependent Data}
\label{sec:learnability}

In this Section, we study learnability under data which is weakly dependent. 
Qualitatively, learnability implies the existence of a learning algorithm (not necessarily efficient) whose error goes to 0 with confidence going to 1 as the sample size increases. An algorithm $\cA$ can be shown to be a learner by showing two properties: (a) Given a training set $\rv{S}$, $\cA$ achieves a small training error on the training set, i.e. $L_{\rv{S}}(\cA(\rv{S})) \le \inf_{h \in \cH} L_{\rv{S}}(h) + O(\epsilon)$ and (b) the hypothesis output by $\cA$ generalizes, i.e. 
$\abss{L_{\rv{S}}(\cA(\rv{S})) - L_D(\cA(\rv{S}))} \le O(\epsilon)$ where $\lim_{m \to \infty} \epsilon = 0$. Here we show that we can achieve the same rates of convergence (up to log factors) for the error of the learning algorithm and the confidence bounds as in the i.i.d. setting.
For simplifying the exposition of our proof, we focus on the setting where our loss function is 0/1. Our learnability result can be extended to more general loss functions as well using techniques described in Section~\ref{sec:general-loss}.\\

We first characterize certian properties of the joint distribution of our samples which we show are sufficient to achieve learnability. Then we show that these properties hold under Dobrushin's condition (\ref{def:dob}). 
For binary hypothesis classes, in the i.i.d. setting, it is well-known that having a finite VC-dimension is equivalent to learnability. For more general hypothesis classes under the 0/1 loss function, \cite{moran2016sample} show that learnability is equivalent to having a finite size sample compression scheme. In our work, we employ the technique of sample compression to understand learnability, generalizing the results of \cite{moran2016sample} to the setting with dependent data (Theorem \ref{thm:pac-learn}). This generalization immediately also implies a generalization of the result for binary hypothesis classes (Corollary \ref{cor:vc-learn}).

A sample compression scheme is a specific type of learner which works by first carefully selecting a small subset of the training samples and then returning a hypothesis which depends only on this subset but performs well on the entire training set. The careful selection is to ensure the existence of a hypothesis which depends only on the selected small subset whole loss is minimized over the whole training set. And if the selected subset is of size $o(m)$, then we can show that any hypothesis chosen based solely on this subset will necessarily have a small generalization error. Together we get learnability.
In the i.i.d. setting, for multiclass hypotheses and the 0/1 loss function, \cite{littlestone1986relating,david2016statistical} show that agnostic learnability is equivalent to the existence of a sublinear size sample compression scheme. We extend this result to the setting of Dobrushin dependent data achieving nearly the same asymptotic rates as in the i.i.d. setting.

To proceed formalizing the discussion above, we begin with the definition of a sample compression scheme for a certain hypothesis class $\cH$  in the general agnostic setting. It consists of two functions: a compressor $\k$ which carefully subsamples the training set and a reconstructor $\r$ which outputs a hypothesis based on this subsample chosen by the compressor. The compressor's job is to select a subsample of the training set such that it allows the reconstructor to output a hypothesis which attains optimal loss on the entire training set. Intuitively, the `simpler' the true underlying function of the data, the compressor should be able to compress the training set to a smaller size. We give a formal definition of sample compression schemes below.

\begin{definition}[Agnostic Sample Compression Scheme]
	\label{def:compression-scheme}
	Fix a hypothesis class $\cH$, integers $0<k<m$ and functions $\kappa \colon (\cX\times \cY)^m \to (\cX \times \cY)^{k}$ and $\rho \colon (\cX \times \cY)^{k} \to \cY^{\cX}$. We say that $(\kappa, \rho)$ is an agnostic sample compression scheme for $\cH$ of size $k$ with respect to a sample-size $m$ if the following hold:
	\begin{itemize}
		\item For all samples $S$, $\kappa(S) \subseteq S$. 
		\item For all samples $S$, $L_S(\r(\k(S))) \le \inf_{h \in \cH} L_S(h)$.
	\end{itemize}
\end{definition}


To understand what hypothesis classes $\cH$ have small sample compression schemes one can look at the instructive setting of binary hypothesis classes, i.e. $\cY = \{0,1\}$. As we point out in Theorems~\ref{thm:compression-vc} and \ref{thm:vc-compression}, having a small VC-dimension is equivalent to having a small compression scheme.
\begin{theorem}[Folklore]
	\label{thm:compression-vc}
	If a class $\cH$ has a sample compression scheme of size $d \in \mathbb{N}$ then $\VC(\cH) = O(d)$.
\end{theorem}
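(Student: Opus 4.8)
The plan is to establish the contrapositive-style bound by showing that a size-$d$ compression scheme forces any shattered set to have size $O(d)$. Suppose $(\kappa,\rho)$ is a compression scheme of size $d$, and let $T = \{x_1,\dots,x_n\} \subseteq \cX$ be a set that is shattered by $\cH$; the goal is to prove $n = O(d)$. First I would use shattering to produce, for every labeling $b \in \{0,1\}^n$, a sample $S_b = ((x_1,b_1),\dots,(x_n,b_n))$ on which some $h \in \cH$ achieves zero empirical $0/1$ loss, so that $\inf_{h \in \cH} L_{S_b}(h) = 0$. By the defining property of the compression scheme, $L_{S_b}(\rho(\kappa(S_b))) \le \inf_{h} L_{S_b}(h) = 0$, meaning the reconstructed hypothesis $\rho(\kappa(S_b))$ perfectly labels $S_b$, i.e. $\rho(\kappa(S_b))(x_i) = b_i$ for all $i$.

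The heart of the argument is a counting bound. Since $\kappa(S_b) \subseteq S_b$ is a subsample of size $d$, it is determined by choosing which $d$ of the $n$ points appear (together with their labels from $b$). I would bound the number of distinct possible outputs of $\kappa$ across all $2^n$ labelings $b$: the compressed set consists of $d$ points each drawn from $T$ with a binary label, so the number of distinct values $\kappa$ can take is at most $\binom{n}{d} 2^d$ (or more crudely $(2n)^d$), counting the choice of the $d$ indices and their $d$ labels. Crucially, the full labeling $b$ is recoverable from $\rho(\kappa(S_b))$, since $b_i = \rho(\kappa(S_b))(x_i)$; hence the map $b \mapsto \kappa(S_b)$ must be injective on $\{0,1\}^n$, because two labelings with the same compressed image would yield the same reconstructed hypothesis and therefore the same labeling of $T$.

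Combining injectivity with the counting bound gives $2^n \le \binom{n}{d} 2^d \le (2n)^d$ (using $\binom{n}{d} \le n^d$). Taking logarithms yields $n \le d(1 + \log_2 n)$, and a standard manipulation of this inequality (for instance, noting that $n/\log_2 n$ grows, so $n = O(d \log n)$ forces $\log n = O(\log d)$ and hence $n = O(d \log d)$) gives $n = O(d)$ up to the logarithmic slack; to get the clean $O(d)$ one uses that $\VC(\cH) = \sup\{n : T \text{ of size } n \text{ is shattered}\}$ and that the inequality $2^n \le (2n)^d$ can only hold for $n \le O(d)$. Since $T$ was an arbitrary shattered set, this bounds $\VC(\cH) = O(d)$.

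The main obstacle I anticipate is the final quantitative step: the inequality $2^n \le (2n)^d$ does not immediately give $n = O(d)$ but rather $n = O(d \log d)$, so some care is needed to extract a clean linear bound, or alternatively the statement should be read as $\VC(\cH) = O(d)$ with the understanding that the constant absorbs the logarithmic factor — indeed the folklore bound is typically stated as $\VC(\cH) = O(d \log d)$ or, with a sharper compression-to-VC argument, $\VC(\cH) = O(d)$. I would verify which constant the intended statement uses and, if the tight linear bound is required, invoke the refined injectivity argument where one observes that the label information in $\kappa(S_b)$ is redundant (the labels are determined by the points and $b$), reducing the count to $\binom{n}{d}$ and yielding $2^n \le \binom{n}{d} \le n^d$, still giving $n = O(d \log n)$; the genuinely linear bound $\VC = O(d)$ then follows from the asymptotic solution of $n \le d \log_2 n$.
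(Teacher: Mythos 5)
The paper states this theorem as folklore and never proves it, so the only comparison available is to the standard Littlestone--Warmuth counting argument---which is exactly the skeleton you set up: shattering gives realizable samples $S_b$, the zero-loss property of the scheme forces $\rho(\kappa(S_b))$ to recover $b$ on $T$, hence $b \mapsto \kappa(S_b)$ is injective, and the image of this map is small. All of that is correct. The genuine gap is in your final quantitative step, and it is not merely ``slack'': from $2^n \le \binom{n}{d}2^d \le (2n)^d$ you get $n \le d(1+\log_2 n)$, and you then assert that the linear bound $\VC(\cH)=O(d)$ ``follows from the asymptotic solution of $n \le d \log_2 n$.'' That assertion is false: $n = d\log_2 d$ satisfies $n \le d\log_2 n$ (up to lower-order terms), so the asymptotic solution of that inequality is $\Theta(d\log d)$, not $O(d)$; your argument as written proves only $\VC(\cH) = O(d\log d)$. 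Your fallback reduction ``to $\binom{n}{d}$'' is also invalid as stated: across different labelings $b$, the same $d$-point subset can appear in $\kappa(S_b)$ with different labels, so the $2^d$ factor cannot be dropped---and dropping it would not change the $d\log d$ outcome anyway, since the loss comes from bounding $\binom{n}{d}$ by $n^d$.

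The repair is to keep the $d^{-d}$ factor in the binomial coefficient: use $\binom{n}{d} \le (en/d)^d$, so that injectivity gives $2^n \le \binom{n}{d}2^d \le (2en/d)^d$, i.e. $n/d \le \log_2(2en/d)$. Writing $x = n/d$, the inequality $x \le \log_2(2ex)$ fails for every $x \ge 5$ (indeed $2^5 = 32 > 10e$, and $2^x$ outgrows $2ex$ from there on), so $n < 5d$ and $\VC(\cH) = O(d)$ with no logarithmic loss. One further technicality deserves a sentence in a complete write-up: the paper's Definition~\ref{def:compression-scheme} fixes the sample length $m$, so to apply the scheme to a shattered set of size $n < m$ you should pad $S_b$ with repeated labeled points; repetitions change nothing in the injectivity argument or the count.
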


\begin{theorem} [Theorem 1.3 of \cite{moran2016sample}]
	\label{thm:vc-compression}
	Any class of VC dimension $d$ has a sample compression scheme of size $O\left(d2^{(d+1)}\right)$.
\end{theorem}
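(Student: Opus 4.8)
The plan is to reconstruct the argument of \cite{moran2016sample}, whose engine is a boosting/minimax scheme converted into a compression scheme via a dual VC-dimension bound. Since the statement concerns binary classes under $0/1$ loss, I would first reduce to the realizable case: given an arbitrary sample $S$, replace its labels by those of an ERM hypothesis $h^\star \in \cH$, obtaining a sample realizable by $\cH$; a scheme that reconstructs a hypothesis agreeing with $h^\star$ on all of $S$ then attains loss $\le \inf_{h\in\cH} L_S(h)$, as required by Definition~\ref{def:compression-scheme}. So it suffices to build, for any sample realizable by a class of VC dimension $d$, a compression scheme of size $O(d\,2^{d+1})$ whose reconstruction agrees with $S$ everywhere.

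The base objects are weak hypotheses obtained by ERM on short subsamples. First I would record the weak-learning guarantee: for every distribution $w$ over the $m$ sample points, a subsample of $r = O(d)$ points is, with positive probability, an $\varepsilon$-net for the error regions of $\cH$ with $\varepsilon = 1/3$; hence any concept consistent with that subsample (one exists by realizability) has $w$-error at most $1/3$. Letting $\mathcal{B}$ denote the finite family of labelings on $S$ achievable by ERM on $r$-subsamples, this says $\max_w \min_{b\in\mathcal{B}}\Pr_{i\sim w}[b(x_i)\ne y_i]\le 1/3$. Applying the von Neumann minimax theorem to this zero-sum game yields a distribution $P$ over $\mathcal{B}$ with $\Pr_{b\sim P}[b(x_i)=y_i]\ge 2/3$ for every point $i$, i.e.\ a constant margin $\gamma$.

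The crux — and the place where a naive argument fails — is to pass from $P$ to a \emph{finite} multiset of weak hypotheses whose plain majority vote is correct on all $m$ points, with a bound independent of $m$. Sampling hypotheses i.i.d.\ from $P$ and taking a union bound over the $m$ points would force $O(\log m)$ draws, giving only size $O(d\log m)$. Instead I would pass to the dual: consider the set system on ground set $\mathcal{B}$ whose sets are $R_i=\{b\in\mathcal{B}:\ b(x_i)=y_i\}$, one per sample point, with $P(R_i)\ge 2/3$. The dual VC dimension of a class of primal VC dimension $d$ is less than $2^{d+1}$, so the $\varepsilon$-approximation (relative VC) theorem produces a multiset $T\subseteq\mathcal{B}$ of size $O(2^{d+1}/\gamma^2)=O(2^{d+1})$ on which the empirical measure of each $R_i$ is within $\gamma$ of $P(R_i)$, hence exceeds $1/2$. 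Thus the majority vote over $T$ labels every $x_i$ correctly. The scheme then stores, for each of the $O(2^{d+1})$ weak hypotheses in $T$, the $O(d)$ sample points defining it — a total of $O(d\,2^{d+1})$ examples; the reconstructor runs ERM on each stored block and outputs the majority vote.

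I expect the main obstacle to be exactly this $m$-independence step: controlling the number of weak hypotheses so that it depends only on $d$. The resolution is the dual VC bound together with the $\varepsilon$-approximation theorem, which replaces the $\log m$ from a union bound by the constant $O(2^{d+1})$. A secondary technical point is the bookkeeping required to match Definition~\ref{def:compression-scheme} literally: the reconstruction above relies on a partition of the stored examples into blocks (and on handling repeated hypotheses in the multiset $T$), which is standard to encode with a small amount of side information but must be stated carefully so that $\kappa(S)\subseteq S$ and the claimed size bound hold as written.
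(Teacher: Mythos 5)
The paper offers no proof of this statement---it is imported verbatim as Theorem~1.3 of \cite{moran2016sample}---so the comparison is against that source. Your reconstruction of the realizable case is faithful to it: weak hypotheses obtained by ERM on $O(d)$-point subsamples (via the $\varepsilon$-net theorem), von Neumann's minimax theorem to produce a distribution $P$ over such hypotheses under which every sample point is correctly labeled with probability at least $2/3$, and then the key $m$-independence step: replacing i.i.d.\ sampling from $P$ (which would cost a $\log m$ factor) by an $\varepsilon$-approximation of size $O(d^{*}/\gamma^{2})$ for the dual range space, with dual VC dimension $d^{*} < 2^{d+1}$, so that a plain majority vote over the resulting multiset is correct on all of $S$. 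The stored set is the union of the $O(2^{d+1})$ defining blocks of $O(d)$ points each, giving $O(d\,2^{d+1})$. This is exactly the mechanism of \cite{moran2016sample}, and your identification of the dual VC plus $\varepsilon$-approximation step as the crux is accurate.

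One step, however, would fail as written: the agnostic-to-realizable reduction by relabeling. If you relabel $S$ with the labels of an ERM hypothesis $h^{\star}$, the examples your scheme ultimately stores are of the form $(x_i, h^{\star}(x_i))$, which in general do not belong to $S$, so the requirement $\kappa(S) \subseteq S$ of Definition~\ref{def:compression-scheme} is violated; and storing the original labels instead breaks the argument, because the reconstructor, seeing only the stored points, cannot recover the $h^{\star}$-labels on which the weak hypotheses were trained. The standard fix (used in \cite{david2016statistical}) is to restrict rather than relabel: let $S''\subseteq S$ be the subsample of points that $h^{\star}$ classifies correctly. Then $S''$ is realizable with its \emph{original} labels, the realizable scheme applied to $S''$ stores only points of $S''\subseteq S$, and a reconstruction correct on all of $S''$ has $L_S$-loss at most $L_S(h^{\star}) = \inf_{h\in\cH} L_S(h)$, as Definition~\ref{def:compression-scheme} requires.
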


For many reasonable values of the sample size $m$, Theorem~\ref{thm:vc-compression2} gives a better guarantee than Theorem~\ref{thm:vc-compression}.
\begin{theorem}[\cite{freund1995boosting}]
	\label{thm:vc-compression2}
	Any class of VC dimension $d$ has a sample compression scheme of size $O(d \log m)$.
\end{theorem}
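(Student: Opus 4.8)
The plan is to establish the realizable case by boosting and then reduce the agnostic statement to it. The engine is the elementary fact that finite VC dimension yields a \emph{weak learner}: if a sample $R$ of size $n$ is realizable by some $h^\star \in \cH$, then for \emph{any} reweighting $D$ of the points of $R$ there is a subset $T \subseteq R$ with $|T| = O(d)$ such that the canonical hypothesis in $\cH$ consistent with $T$ has $D$-weighted error at most $1/3$. I would prove this by the probabilistic method: drawing $O(d)$ points i.i.d.\ from $D$, the realizable generalization bound (Sauer's lemma, giving growth function $(en/d)^{O(d)}$, combined with the standard consistent-hypothesis tail bound) shows the consistent hypothesis has $D$-error exceeding $1/3$ with probability below $1/2$, so some subset of that size must achieve error $\le 1/3$.

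Given the weak learner, I would run a boosting procedure (e.g.\ AdaBoost, or Freund's boost-by-majority) on the reweightings $D_1, D_2, \dots$ it generates. At round $t$ the compressor, which has access to the full sample and to the current weights $D_t$, selects a block $T_t \subseteq R$ of size $O(d)$ whose consistent hypothesis $h_t$ has $D_t$-error $\le 1/3$, i.e.\ a constant edge $\gamma = 1/6$ over random guessing. The boosting guarantee then drives the \emph{unweighted} error of the majority vote $\mathrm{Maj}(h_1,\dots,h_T)$ below $1/n$ after $T = O(\gamma^{-2}\log n) = O(\log m)$ rounds, which on an $n \le m$ point sample means \emph{zero} misclassifications. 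The compressor outputs the concatenation $T_1,\dots,T_T$, of total size $O(d\log m)$. Crucially, the reconstructor needs neither the weights nor the rest of the sample: each $h_t$ is the deterministic consistent hypothesis of its block $T_t$, so $\rho$ simply reconstructs $h_1,\dots,h_T$ block by block and returns their majority vote.

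To obtain the agnostic guarantee, let $h^\star = \argmin_{h\in\cH} L_S(h)$ with error $\eta = L_S(h^\star)$, and let $R = \{(x_i,y_i)\in S : h^\star(x_i) = y_i\}$ be the (at least $(1-\eta)m$) points on which the optimum is correct. By construction $R$ is realizable by $h^\star$ with its \emph{original} labels, so I can run the realizable scheme above on $R$; every selected point lies in $R \subseteq S$, so $\kappa(S)\subseteq S$ holds and the size remains $O(d\log|R|) = O(d\log m)$. The reconstructed $g = \rho(\kappa(S))$ is consistent with all of $R$, hence errs only on the $\le \eta m$ points of $S\setminus R$, giving $L_S(g) \le \eta = \inf_{h\in\cH} L_S(h)$, which is exactly the requirement of Definition~\ref{def:compression-scheme}.

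The main obstacle is conceptual rather than computational: it is ensuring the construction is a \emph{bona fide} compression scheme, i.e.\ that $\rho$ can regenerate the whole majority vote from the selected points alone. This forces the weak hypotheses to be a fixed deterministic function of their blocks (the consistent/ERM hypothesis), and forces all the adaptivity---choosing which block yields a good weak hypothesis against the current boosting distribution---to reside in the compressor. The second delicate point is the existence half of the weak-learning lemma: one must verify that a constant-error subset of size $O(d)$ exists against \emph{every} distribution $D_t$ the booster can produce, which is where finiteness of the VC dimension (through the growth-function bound) is essential; everything else is the standard boosting potential argument together with the averaging step above.
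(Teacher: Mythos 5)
Your proposal is correct and coincides with the intended argument: the paper does not prove Theorem~\ref{thm:vc-compression2} itself but cites \cite{freund1995boosting}, whose construction is exactly the one you give---a weak learner obtained from $O(d)$ points via the realizable PAC/growth-function bound, $O(\log m)$ rounds of (simple-majority) boosting with all adaptivity placed in the compressor, and reconstruction of each weak hypothesis as the canonical consistent hypothesis of its block. Your final agnostic reduction---running the realizable scheme on the subsample where the empirical minimizer is correct---is also the standard step needed to meet the paper's Definition~\ref{def:compression-scheme}, so nothing is missing.
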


Our main result of this Section is Theorem~\ref{thm:pac-learn} which states that hypothesis classes with sample compression schemes of size $k$ are agnostic PAC-learnable to error $\epsilon$ from $\widetilde{O}(k/\epsilon^2)$ samples.

\begin{theorem}[Agnostic PAC-Learning for Compressible Hypothesis Classes]
	\label{thm:pac-learn}
	Let $\cH$ be a hypothesis class with a sample compression scheme $(\k,\r)$ of size $k$ and let $\ell$ denote the 0/1 loss function. Given a sample $\rv{S}=\{ (\rv{x}_1,\rv{y}_1),\ldots,(\rv{x}_m,\rv{y}_m) \} \sim \cD^{(m)}$ where $\cD^{(m)}$ satisfies Dobrushin's condition with coefficient $\a$, there exists a constant $C(\a)$ such that,
	\begin{align*}
		\Pr\left[ L_D( \r(\k(\rv{S})) ) \ge \inf_{h \in \cH} L_D(h) +  \epsilon \right] \le \d,
	\end{align*}
	\[ \text{for } \; \; \; \; m = \frac{C(\a)k\log (k/\epsilon^2) + \log(1/\d)}{(1-\a)\epsilon^2}. \]
\end{theorem}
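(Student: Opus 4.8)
The plan is to establish \emph{representativeness} of the compressed hypothesis and combine it with the near-empirical-optimality guaranteed by the compression scheme. Write $h^* \in \argmin_{h \in \cH} L_D(h)$ and $\hat h = \r(\k(\rv{S}))$. By the compression property (Definition~\ref{def:compression-scheme}), $L_{\rv{S}}(\hat h) \le L_{\rv{S}}(h^*)$, so
\begin{align*}
L_D(\hat h) \le \bigl(L_D(\hat h) - L_{\rv{S}}(\hat h)\bigr) + \bigl(L_{\rv{S}}(h^*) - L_D(h^*)\bigr) + L_D(h^*).
\end{align*}
It therefore suffices to bound the two parenthesized deviations by $\epsilon/2$ each with probability $1-\d$. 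The second involves a single fixed hypothesis: applying Theorem~\ref{thm:dob-conc} to $f(S) = L_S(h^*)$, which has bounded differences $\lambda_i = 1/m$ so that $\sum_i \lambda_i^2 = 1/m$, and using that every marginal of $\cD^{(m)}$ equals $D$ so $\EE[L_{\rv{S}}(h^*)] = L_D(h^*)$, gives a subgaussian tail of width $O(1/\sqrt{(1-\a)m})$, which is below $\epsilon/2$ for the stated $m$.

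First I would control the harder term, $L_D(\hat h) - L_{\rv{S}}(\hat h)$, by a union bound over the index set chosen by the compressor. Since $\k(\rv{S}) \subseteq \rv{S}$ has size $k$, we have $\hat h = \r(\rv{S}_I)$ for some $I \subseteq \{1,\dots,m\}$ with $|I| = k$, and there are at most $\binom{m}{k} \le m^k$ such sets. Fixing $I$ and \emph{conditioning} on $\rv{S}_I = s_I$ freezes the hypothesis $h_I := \r(s_I)$, and the key structural fact is that the conditional law of $\rv{S}_{-I}$ given $\rv{S}_I = s_I$ again satisfies Dobrushin's condition with coefficient at most $\a$: conditioning only restricts the maximum in Definition~\ref{def:influence}, so no influence, and hence no row sum, can increase. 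Applying Theorem~\ref{thm:dob-conc} to the conditional law and the fixed function $\frac1m \sum_{i \notin I} \ell_{h_I}(s_i)$ (bounded differences $1/m$) yields concentration of the empirical loss of $h_I$ around its \emph{conditional} mean, again of width $O(1/\sqrt{(1-\a)m})$; the $k/m$ contribution of the frozen coordinates $i \in I$ to $L_{\rv{S}}(\hat h)$ is negligible for the stated $m$. A union bound over the $m^k$ choices of $I$ inflates the tail by $\exp(k \log m)$, which is exactly what produces the $k\log(k/\epsilon^2)$ term in the sample complexity once $m \asymp k/((1-\a)\epsilon^2)$.

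The main obstacle is that conditioning on $\rv{S}_I$ changes the marginals, so the conditional mean of $\ell_{h_I}$ at a coordinate $i \notin I$ need not equal $L_D(h_I)$; the concentration above controls deviations from the conditional mean, whereas what we need is proximity to $L_D(h_I)$. I would therefore prove a \emph{bias lemma} bounding the averaged marginal shift
\begin{align*}
\frac1m \sum_{i \notin I} \dtv\bigl(P_{\rv{s}_i \mid \rv{S}_I = s_I},\, P_{\rv{s}_i}\bigr),
\end{align*}
since, for the $[0,1]$-valued $0/1$ loss, each term dominates $|\EE[\ell_{h_I}(\rv{s}_i) \mid \rv{S}_I = s_I] - L_D(h_I)|$. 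The delicate point is that a single highly influential coordinate can shift one marginal by a constant, so a worst-case per-coordinate bound is useless; instead I would bound the \emph{total} shift by $O(k/(1-\a))$ via a coupling / correlation-decay argument, propagating the disagreement seeded on the $k$ conditioned coordinates through the influence matrix and using that Dobrushin's condition contracts this propagation by the factor $1-\a$. This yields an average bias of $O(k/((1-\a)m)) = o(\epsilon)$ for the stated $m$, so it is absorbed into the $\epsilon/2$ budget. I expect verifying this total-shift bound—and in particular ruling out the pathological influence matrices that have bounded row sums but large column (out-)influence, by appealing to the fact that such configurations are not realizable under Dobrushin's condition—to be the technical crux; everything else is bookkeeping that combines the three estimates and solves for $m$.
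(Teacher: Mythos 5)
Your proposal follows essentially the same route as the paper's proof: the same three-way error decomposition with $h^*$ handled by Theorem~\ref{thm:dob-conc}, the same $m^k$ union bound over compression sets (Lemma~\ref{lem:compression-generalization-conditions}), the same observation that conditioning preserves Dobrushin's condition with the same coefficient (Lemma~\ref{lem:dobrushin-implies-cond-dobrushin}) feeding into conditional concentration, and a bias bound of order $k/((1-\alpha)m)$ proved by exactly the greedy Gibbs-sampler coupling / disagreement-propagation argument you sketch (Lemma~\ref{lem:hamm-exp-bound}, combined with a convexity step to pass from two distinct conditionings to conditional-versus-unconditional means). The ``technical crux'' you flag---that the contraction step appears to need control of out-influences (column sums of the influence matrix), which Dobrushin's condition as defined (row sums) does not obviously supply---is a genuine subtlety, but the paper's own proof of Lemma~\ref{lem:hamm-exp-bound} passes over it silently (the step bounding $\sum_{j \in [k] \cup I_k^{\ne}} \sum_{i \in [m]\setminus[k]} I(j \to i)$ by $\left(k + d_H(\rv{U}_t,\rv{V}_t)\right)\alpha$ implicitly treats $\alpha$ as a bound on column sums), so your plan is faithful to, and no less complete than, the argument given in the paper.
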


Given Theorem~\ref{thm:pac-learn}, Theorems~\ref{thm:vc-compression} and \ref{thm:vc-compression2} immediately give Corollary~\ref{cor:vc-learn}.
\begin{corollary}[Agnostic PAC-Learning for Finite VC-Dimension Classes]
	\label{cor:vc-learn}
	Let $\cH$ be a binary hypothesis class with $VC(\cH) = d$, and let $\ell$ be the 0/1 loss function. Given a sample \\$\rv{S}=\{ (\rv{x}_1,\rv{y}_1),\ldots,(\rv{x}_m,\rv{y}_m) \} \sim \cD^{(m)}$ where $\cD^{(m)}$ satisfies Dobrushin's condition with Dobrushin coefficient $\a$, we have
	\begin{align*}
		\Pr\left[ L_D( \r(\k(\rv{S})) ) \ge \inf_{h \in \cH} L_D(h) +  \epsilon \right] \le \d,
	\end{align*}
	\[ \text{for }\; \; \; \; m = \frac{C(\a)k\log (k/\epsilon^2) + \log(1/\d)}{(1-\a)\epsilon^2} \]
	for some constant $C(\a)$ and for $k = \min\left(d\log m, d2^{(d+1)} \right)$.
\end{corollary}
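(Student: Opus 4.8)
The plan is to follow the classical compression-to-generalization recipe, adapting the i.i.d.\ held-out argument to the dependent setting through Theorem~\ref{thm:dob-conc}. Write $\hat h = \r(\k(\rv S))$. By the second bullet of Definition~\ref{def:compression-scheme}, $L_{\rv S}(\hat h)\le \inf_{h\in\cH}L_{\rv S}(h)\le L_{\rv S}(h^*)$, where $h^*\in\argmin_{h\in\cH}L_D(h)$. Since $h^*$ is fixed and every marginal of $\cD^{(m)}$ equals $D$, the map $s\mapsto L_s(h^*)$ has bounded differences $\lambda_i=1/m$ and mean $L_D(h^*)=\inf_h L_D(h)$, so Theorem~\ref{thm:dob-conc} gives $L_{\rv S}(h^*)\le \inf_h L_D(h)+\epsilon/4$ except with probability $\delta/2$. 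It then remains to bound the generalization gap of the random hypothesis $\hat h$, i.e.\ to show $L_D(\hat h)\le L_{\rv S}(\hat h)+3\epsilon/4$ with high probability, after which the inequalities chain to the claim.

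First I would reduce the data-dependent reconstruction to a finite union bound. The compressor outputs a sub-tuple indexed by some $J\subseteq\{1,\dots,m\}$ with $|J|=k$; conditioned on the examples $\rv S_J$, the hypothesis $h_J\eqdef\r(\rv S_J)$ is a \emph{fixed} function, and the held-out loss $L_{\rv S\setminus J}(h_J)=\frac{1}{m-k}\sum_{i\notin J}\ell_{h_J}(\rv s_i)$ is a function of $\rv S_{-J}$ alone with bounded differences $1/(m-k)$. Conditioning a Dobrushin distribution on a subset of coordinates preserves Dobrushin's condition with no larger coefficient (each conditional influence is a special case of the maximum defining $I_{j\to i}$, hence $\le I_{j\to i}$), so Theorem~\ref{thm:dob-conc} applies to the conditional law of $\rv S_{-J}$ and yields, with $\mu_J\eqdef\EE[L_{\rv S\setminus J}(h_J)\mid \rv S_J]$ and $t=\epsilon/4$, the bound $\Pr[\mu_J-L_{\rv S\setminus J}(h_J)\ge t\mid \rv S_J]\le \exp(-\tfrac{(1-\a)(m-k)t^2}{2})$. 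Taking expectation over $\rv S_J$ and a union bound over the at most $m^k$ choices of $J$ shows that simultaneously for all $J$ one has $\mu_J\le L_{\rv S\setminus J}(h_J)+\epsilon/4$ except with probability $m^k\exp(-\tfrac{(1-\a)(m-k)\epsilon^2}{32})$; forcing this to be $\le\delta/2$ is precisely what produces $m=\widetilde O\!\big(\tfrac{k\log(k/\epsilon^2)+\log(1/\delta)}{(1-\a)\epsilon^2}\big)$.

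The main obstacle is the gap between the conditional mean $\mu_J$ and the quantity we actually want, $L_D(h_J)$: because the samples are dependent, conditioning on $\rv S_J$ distorts the marginals of the held-out coordinates, so $\mu_J\ne L_D(h_J)$ in general. Controlling this bias is the heart of the argument and is where Dobrushin's condition, rather than mere concentration, becomes essential. As $\ell$ is $0/1$-valued, $|\mu_J-L_D(h_J)|\le\frac{1}{m-k}\sum_{i\notin J}\dtv\!\big(P_{\rv s_i\mid \rv S_J},P_{\rv s_i}\big)$, so it suffices to show that conditioning on any $k$ coordinates perturbs the remaining single-coordinate marginals by at most $O_\a(k)$ in aggregate total variation. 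I would prove this via the influence matrix $A=(I_{j\to i})$: revealing the coordinates of $J$ one at a time and propagating each perturbation through the resolvent $(\mathrm I-A)^{-1}=\sum_{t\ge0}A^t$ bounds the total displacement by $\sum_{j\in J}\sum_i[(\mathrm I-A)^{-1}]_{ij}$. The delicate point is that $\a(\rv z)<1$ controls the \emph{row} sums of $A$ (influence a coordinate receives), whereas the bias is governed by \emph{column} sums (influence a coordinate exerts); the argument must instead exploit that the \emph{total} influence is globally bounded, $\mathbf 1^{\top}A^t\mathbf 1\le \a^t m$ (since $A\mathbf 1\le \a\mathbf 1$ entrywise), so that the aggregate perturbation over the held-out set is $O(k/(1-\a))$ and the per-example bias is $O_\a(k/m)=o(\epsilon)$ at the stated sample size. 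Establishing this aggregate-influence bound under Dobrushin's condition alone is the step I expect to require the most care.

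Finally I would assemble the pieces on the joint high-probability event: using $L_D(h_J)=\mu_J-(\mu_J-L_D(h_J))$ together with $\mu_J\le L_{\rv S\setminus J}(h_J)+\epsilon/4$, the bias bound $\epsilon/4$, and the elementary inequality $L_{\rv S\setminus J}(\hat h)\le \tfrac{m}{m-k}L_{\rv S}(\hat h)\le L_{\rv S}(\hat h)+\tfrac{k}{m-k}$ (discarding the at most $k$ in-sample terms), I obtain $L_D(\hat h)\le L_{\rv S}(\hat h)+\epsilon/2+\tfrac{k}{m-k}$. Combining with $L_{\rv S}(\hat h)\le L_{\rv S}(h^*)\le \inf_h L_D(h)+\epsilon/4$ from the first paragraph, and absorbing the $O(k/m)=O(\epsilon^2)$ term into the remaining budget, gives $L_D(\hat h)\le \inf_h L_D(h)+\epsilon$. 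A union bound over the two failure events of total probability $\delta$ completes the proof, with $m$ as stated.
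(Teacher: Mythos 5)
Your overall architecture tracks the paper's proof of Theorem~\ref{thm:pac-learn} quite closely: your first paragraph (concentration of $L_{\rv S}(h^*)$ for the fixed minimizer $h^*$ via Theorem~\ref{thm:dob-conc}) is the paper's final step; your union bound over the at most $m^k$ index sets, with conditional concentration justified by ``conditioning preserves Dobrushin's condition,'' is exactly the paper's Lemma~\ref{lem:dobrushin-implies-cond-dobrushin} combined with property 2 of Lemma~\ref{lem:dobrushin-properties}; and you correctly isolate the crux of the whole argument, namely the bias $|\mu_J - L_D(h_J)|$ between the conditional mean of the held-out loss and the true mean, which is property 1 of Lemma~\ref{lem:dobrushin-properties}. (One minor omission: the corollary also needs Theorems~\ref{thm:vc-compression} and~\ref{thm:vc-compression2} to supply the compression scheme of size $k = \min(d\log m, d2^{d+1})$ from $\VC(\cH) = d$; you take the scheme as given.)

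The genuine gap is in your proposed resolution of that crux. You correctly observe that Dobrushin's condition bounds the \emph{row} sums of the influence matrix $A$ (influence received), while your bias bound is governed by the \emph{column} sums of the resolvent over the $k$ columns indexed by $J$ (influence exerted by the revealed coordinates), and you propose to bridge this with the aggregate inequality $\mathbf{1}^{\top} A^t \mathbf{1} \le \a^t m$. That inequality is true, but it only controls the total over all $m$ columns --- summing over $t$ gives $m/(1-\a)$ --- and it cannot be localized to the particular $k$ columns of $J$. Nothing in Dobrushin's condition forces individual column sums to be small: one can construct distributions (a ``hub'' coordinate with a large alphabet influencing many binary coordinates) in which every row sum is $O(\g)$ yet the hub's column sums to $\Omega(\g m)$. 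Since the compressor is adversarial and may place exactly such hub coordinates into $J$, your bound on the bias comes out as $O(\a)$ per held-out example rather than the needed $O_\a(k/m)$, and the proof does not close; concluding ``the aggregate perturbation over the held-out set is $O(k/(1-\a))$'' from the global bound is a non sequitur. The paper instead proves Lemma~\ref{lem:hamm-exp-bound}: for \emph{any} two conditionings $a, a'$ of the $k$ revealed coordinates, there is a coupling of the two conditional laws whose expected Hamming distance is at most $k\a/(1-\a)$. This is established by running two greedily coupled Gibbs samplers from a common configuration and showing a drift inequality that keeps the expected Hamming distance at most $k\a/(1-\a)$ at every step, hence also at stationarity; the bias bound then follows from boundedness of $\ell$ (this is the chain \eqref{eq:cig8}--\eqref{eq:cig9} in Lemma~\ref{lem:dobrushin-properties}). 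That coupling/drift argument delivers a per-conditioning, worst-case statement --- precisely the localized control that your global-average inequality cannot provide --- and it is the ingredient your proposal is missing.
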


The proof of Theorem~\ref{thm:pac-learn} proceeds in three steps. Our ultimate goal is to bound the quantity $L_D(\r(\k(\rv{S}))) - \inf_{h \in \cH} L_D(h)$.
\begin{enumerate}
	\item The first step outlines conditions which suffice to show that any compression scheme of size $k = o(m)$ will generalize, i.e. $L_D(\r(\k(\rv{S}))) - L_{\rv{S}}(\r(\k(\rv{S})))$ is small. We believe the identification of sufficient conditions for generalization in Lemma~\ref{lem:compression-generalization-conditions} could lead to the study of learnability under other types of dependencies. 
	\item The second step involves showing that Dobrushin's condition implies the pre-conditions of Lemma~\ref{lem:compression-generalization-conditions} yielding the conclusion that sample compression schemes for Dobrushin distributed data generalize. These two steps are the crucial parts of the proof which differ significantly from the i.i.d. case.
	\item  The third and final step is showing that $L_{\rv{S}}(\r(\k(\rv{S}))) - \inf_{h \in \cH} L_D(h)$ is small. This follows from (a) the definition of a valid compression scheme that it must achieve optimal training error: $L_S(\r(\k(\rv{S})))  \le \inf_{h \in \cH} L_S(h)$; (b) a tail bound on $\inf_{h \in \cH} L_S(h) - \inf_{h \in \cH} L_D(h)$.

\end{enumerate}
To simplify notation we will refer to the Dobrushin coefficient $\a\left(D^{(m)}\right)$ as just $\a$ in the proof.
 
 To show the first step, we first present Lemma~\ref{lem:compression-generalization-conditions}.

\begin{lemma}[Conditions for Generalization of Sample Compression Schemes]
	\label{lem:compression-generalization-conditions}
	Consider a sample $\rv{S}=\{ (\rv{x}_1,\rv{y}_1),\ldots,(\rv{x}_m,\rv{y}_m) \} \sim D^{(m)}$ and any loss function $\ell$ bounded by $R \ge 0$. For any subset of indices $I \subseteq [m]$, let $ \rv{S}_I = \{ (x_i,y_i) \colon i \in I \}$. If we have that for any $I \subseteq [m]$, and for constants $C_1$ and $C_2$,
	\begin{align*}
	& \text{1. } \abss{\EE\left[ L_{\rv{S}}(h) \right] - \EE\left[ L_{\rv{S}}(h) \vert \rv{S}_I \right]  } \le \frac{C_2R|I|}{m},\\
	&\text{2. } \Pr\left[ \abss{L_{\rv{S}}(h) - \EE[L_{\rv{S}}(h) \vert \rv{S}_I] } \ge t \bigm\vert \rv{S}_I\right]  \le 2\exp\left(-\frac{t^2m}{2C_1R^2}\right),
	\end{align*}
	then, for any agnostic sample compression scheme $(\k,\r)$ of size $k$ on $\rv{S}$, we have that, for some constant $C$,
	\begin{align*}
	\Pr\left[ \bigr\vert L_{\rv{S}}( \r(\k(\rv{S})) ) - L_D( \r(\k(\rv{S})) ) \bigl\vert  \ge CR\sqrt{ \frac{\left(k\log m + \log(1/\d)\right)}{m}} \right] \le \d.
	\end{align*}
\end{lemma}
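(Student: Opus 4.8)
The plan is to reduce the generalization of the \emph{data-dependent} compressed hypothesis to the two hypotheses of the lemma, which concern \emph{fixed} hypotheses, via a union bound over the choice of compressed indices. First I would record the elementary identity that, because every marginal of $D^{(m)}$ equals $D$, we have $\EE[L_{\rv{S}}(h)] = L_D(h)$ for every fixed hypothesis $h$: indeed $\EE[L_{\rv{S}}(h)] = \frac{1}{m}\sum_i \EE[\ell_h(\rv{s}_i)] = L_D(h)$. Hence condition~1 can be reread as $|L_D(h) - \EE[L_{\rv{S}}(h)\mid \rv{S}_I]| \le C_2 R|I|/m$, which is the form I actually need.

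The central difficulty is that the output hypothesis $\r(\k(\rv{S}))$ depends on the \emph{entire} sample, so neither inequality can be applied to it directly. The resolution is the standard compression-scheme observation: write $I = \k(\rv{S}) \subseteq [m]$ for the selected index set (of size $k$), and note that conditioned on $\rv{S}_I$ the reconstructed hypothesis $h_I := \r(\rv{S}_I)$ is \emph{determined}, hence measurable with respect to $\rv{S}_I$. Thus for each \emph{fixed} index set $I$, conditions~1 and~2 apply to the frozen hypothesis $h_I$, and the triangle inequality gives
\[
|L_{\rv{S}}(h_I) - L_D(h_I)| \le \underbrace{|L_{\rv{S}}(h_I) - \EE[L_{\rv{S}}(h_I)\mid \rv{S}_I]|}_{\text{controlled by condition 2}} + \underbrace{|\EE[L_{\rv{S}}(h_I)\mid \rv{S}_I] - L_D(h_I)|}_{\le\, C_2 Rk/m \text{ by condition 1}}.
\]

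To pass from a fixed $I$ to the data-selected $I = \k(\rv{S})$, I would union bound the concentration event over all possible selections: there are at most $\binom{m}{k}\le m^k$ index sets of size $k$, and for each, taking expectations of the conditional bound in condition~2 over $\rv{S}_I$ yields the unconditional estimate $\Pr[|L_{\rv{S}}(h_I) - \EE[L_{\rv{S}}(h_I)\mid\rv{S}_I]| \ge t] \le 2\exp(-t^2 m/(2C_1 R^2))$. A union bound then gives that, with probability at least $1 - m^k\cdot 2\exp(-t^2 m/(2C_1 R^2))$, the concentration term is below $t$ \emph{simultaneously} for every $I$, and in particular for the selected $I = \k(\rv{S})$. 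Choosing $t = \Theta(R\sqrt{(k\log m + \log(1/\delta))/m})$ makes the failure probability at most $\delta$, and since the bias term $C_2 Rk/m$ is of smaller order than this $t$ (as $k/m \le \sqrt{(k\log m)/m}$ whenever $k \le m$), both contributions are absorbed into the claimed bound $CR\sqrt{(k\log m + \log(1/\delta))/m}$.

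The step I expect to be the main obstacle is making the union bound fully rigorous: one must invoke conditions~1 and~2 only for hypotheses measurable with respect to $\rv{S}_I$ (so that conditioning on $\rv{S}_I$ genuinely freezes them), and define the event whose probability is union-bounded for each \emph{fixed} $I$ rather than for the random $\k(\rv{S})$; the validity of plugging the data-selected $I$ into the simultaneous high-probability statement is exactly what lets the data-dependence be handled. A minor bookkeeping point is counting the admissible outputs of $\k$ correctly (ordered versus unordered selection, and any side information), but in all standard cases this contributes only an $O(k\log m)$ additive term in the exponent and does not affect the final rate.
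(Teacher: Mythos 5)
Your proposal is correct and follows essentially the same route as the paper's proof: the same decomposition of $|L_{\rv{S}}(\hat{h}_I) - L_D(\hat{h}_I)|$ into a concentration term (condition~2) and a bias term (condition~1) for each fixed index set $I$, followed by a union bound over the at most $\binom{m}{k} \le m^k$ possible compressions and the same choice of $t = \Theta\bigl(R\sqrt{(k\log m + \log(1/\d))/m}\bigr)$. Your explicit handling of the measurability of $\r(\rv{S}_I)$ with respect to $\rv{S}_I$ and the clean absorption of the bias term via $k/m \le \sqrt{(k\log m)/m}$ are, if anything, slightly more careful than the paper's own exposition.
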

\begin{proof}	
	Given a sample of size $m$, any compression scheme $(\k,\r)$ of size $k$ can select one among ${m \choose k}$ choices of subsets of $\rv{S}$ of size $k$. The total number of different choices $\k$ can make are at most ${m \choose k} \le m^k$.
	For any set of indices $I \subseteq [m]$, where $|I| = k$, define $\hat{h}_I$ as follows: given a sample $\rv{S}$, let $ \rv{S}_I = \{ (x_i,y_i) \colon i \in I \}$. Then, $\hat{h}_I = \rho(\rv{S}_I)$. We will show that for each $I \subset [m]$, $\hat{h}_I$ generalizes. That is, we will show that for any $t > 0$ and any $I$
	\begin{align}
	\Pr\left[ \left| L_{\rv{S}}(\hat{h}_I) - L_D(\hat{h}_I)\right| > \frac{RkC_2 + \sqrt{m}t}{m} \right] \le 2\exp\left(-\frac{t^2}{2C_1R^2}\right). \label{eq:oblivious-gen}
	\end{align}
	Assume without loss of generality that $I = \{ 1, \dots, k\}$. 
	Let $\rv{S}_I = \{ (\rv{x}_1,\rv{y}_1), \dots, (\rv{x}_{k},\rv{y}_{k}) \}$. We look at $m\abss{L_{\rv{S}}(\hat{h}_I) - L_D(\hat{h}_I)}$.
	\begin{align}
		m\left|L_{\rv{S}}(\hat{h}_I) - L_D(\hat{h}_I)\right| &\le \left|\sum_{i=1}^m \left(\ell(\hat{h}_I(\rv{x}_i), \rv{y}_i) - \EE[\ell(\hat{h}_I(\rv{x}_i), \rv{y}_i) \mid \rv{S}_I]\right) \right| \label{eq:decom1}\\
		&+ \left|\sum_{i=1}^m \left(\EE[\ell(\hat{h}_I(\rv{x}_i), \rv{y}_i) \mid \rv{S}_I] - \EE[\ell(\hat{h}_I(\rv{x}_i), \rv{y}_i)]\right) \right|. \label{eq:decom2}
	\end{align}
From property 1 of $D^{(m)}$, we get that $\eqref{eq:decom2} \le C_2Rk$. It is easy to see that $\EE[\eqref{eq:decom1} | \rv{S}_I] = 0$.
From property 2 of $D^{(m)}$, we get that the tail of \eqref{eq:decom1} is bounded as follows:
$$\Pr\left[ \eqref{eq:decom1} \ge t \bigm\vert \rv{S}_I\right]  \le 2\exp\left(-\frac{t^2}{2C_1R^2m}\right).$$
Combining the two we get \eqref{eq:oblivious-gen}.
	Given (\ref{eq:oblivious-gen}), we can union bound over all the possible choices of indices $I$ our compression scheme could make.  The number of such choices, we recall, is upper bounded by $m^k$. Let $\epsilon = RkC_2/m + R\sqrt{2C_1\left(k\log m + \log(1/\d)\right)/m}$. Then,
	\begin{align}
	&\Pr\left[ \abss{L_{\rv{S}}( \r(\k(\rv{S})) ) - L_D( \r(\k(\rv{S})) )} \ge  \epsilon \right] \\
	&\le \Pr\left[\exists \: I \subset [m], |I| = k \colon \abss{ L_{\rv{S}}(\hat{h}_I) - L_D(\hat{h}_I) } \ge  \epsilon \right] \\
	&\le \sum_{I \subset [m], |I| = k} \Pr\left[ \abss{ L_{\rv{S}}(\hat{h}_I) - L_D(\hat{h}_I) } \ge  \epsilon \right] \le m^k \frac{\d}{m^k} \le \d. \label{eq:cig12}
	\end{align}
	Since $k = o(m)$ and $(1-\a)$ is a constant bounded away from 0, the dominant term in the value of $\epsilon$ is the second one as $m$ grows. Hence we can re-write \eqref{eq:cig12} as
	\begin{align}
	\Pr\left[ \abss{L_{\rv{S}}( \r(\k(\rv{S})) ) - L_D( \r(\k(\rv{S})) )} \ge \frac{C(\a)R\sqrt{2m\left(k\log m + \log(1/\d)\right)}}{m}  \right] \le \d,
	\end{align}
	for a large enough constant $C$.
\end{proof}

Next, we show that if the data distribution is Dobrushin, the conditions of  f Lemma~\ref{lem:compression-generalization-conditions} are satisfied.
We will use two properties of Dobrushin distributions to show this. The first, stated as Lemma~\ref{lem:dobrushin-implies-cond-dobrushin} states that all conditional distributions of a Dobrushin distribution also satisfy Dobrushin's condition with the same coefficient.
\begin{restatable}{lemma}{doblemmaone}[Conditionining Preserves Low Influence Property]
	\label{lem:dobrushin-implies-cond-dobrushin}
	Consider a distribution $\pi$ defined on $\Omega^n$ which satisfies Dobrushin's condition with coefficient $\alpha$. Let $\rv{x} \sim \pi$ and let $0 < k \le n$. Let $(a_1,a_2,\ldots,a_k) \in \Omega^k$ be such that $\Pr_{\pi}[(x_1,x_2,\ldots,x_k)=(a_1,a_2,\ldots,a_k)] > 0$.
	Then the conditional probability distribution $\pi_{\vec{a}} \eqdef \Pr_{\pi}[. | (x_1,x_2,\ldots,x_k)=(a_1,a_2,\ldots,a_k)]$ also satisfies Dobrushin's condition with coefficient $\alpha$.
\end{restatable}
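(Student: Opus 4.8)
The plan is to show that every pairwise influence in the conditioned distribution $\pi_{\vec{a}}$ is bounded above by the corresponding influence in the original distribution $\pi$, after which the bound on the Dobrushin coefficient follows immediately from its definition. The conceptual heart of the argument is that conditioning is associative: fixing the first $k$ coordinates to $\vec{a}$ and then conditioning one of the remaining coordinates on the others yields exactly the same conditional law as conditioning that coordinate on all other coordinates in $\pi$ while pinning $x_1,\ldots,x_k = a_1,\ldots,a_k$.

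First I would set up notation. The distribution $\pi_{\vec{a}}$ is supported on the coordinates indexed by $J = \{k+1,\ldots,n\}$ (its Dobrushin coefficient is unaffected by any relabeling of these indices). Fixing $i \ne j \in J$ and writing $w_{-i}$ for an assignment to $J \setminus \{i\}$, I would verify the identity
\[
P^{\pi_{\vec{a}}}_{\rv{x}_i \mid \rv{x}_{-i}}(\cdot \mid w_{-i})
= P^{\pi}_{\rv{x}_i \mid \rv{x}_{-i}}(\cdot \mid a_1,\ldots,a_k, w_{-i}).
\]
This is a direct manipulation of conditional densities: conditioning on $(x_1,\ldots,x_k)=\vec{a}$ and subsequently on $\{x_\ell : \ell \in J\setminus\{i\}\}$ is the same as conditioning on the union of these events at once, which is precisely the conditional law of $\rv{x}_i$ given all other coordinates of $\rv{x}$ with the first $k$ pinned.

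Given this identity, $I_{j\to i}(\pi_{\vec{a}})$ is a maximum of a total variation distance taken over assignments to $J\setminus\{i,j\}$ together with two values $w_j, w_j'$ for coordinate $j$, all while the first $k$ coordinates are held fixed at $\vec{a}$. The original influence $I_{j\to i}(\pi)$ is exactly the same maximum, except the first $k$ coordinates are additionally free to vary. Since a maximum over a subset of configurations is at most the maximum over the full set of configurations, I obtain $I_{j\to i}(\pi_{\vec{a}}) \le I_{j\to i}(\pi)$ for every $i \ne j \in J$.

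Finally I would assemble the coefficient bound. For each $i \in J$,
\[
\sum_{j \in J,\, j \ne i} I_{j\to i}(\pi_{\vec{a}})
\le \sum_{j \in J,\, j \ne i} I_{j\to i}(\pi)
\le \sum_{j \ne i} I_{j\to i}(\pi) \le \alpha,
\]
where the middle step merely adjoins the nonnegative influence terms with $j \le k$ and the last step is Dobrushin's condition for $\pi$. Taking the maximum over $i \in J$ gives $\alpha(\pi_{\vec{a}}) \le \alpha$. I do not expect a genuine obstacle here; the only step requiring care is the associativity-of-conditioning identity, which is pure bookkeeping with conditional densities, and the key structural observation that passing to the conditional distribution amounts to optimizing each influence over a \emph{restricted} configuration set, which can only shrink it.
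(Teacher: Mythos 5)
Your proposal is correct and follows essentially the same route as the paper's proof: both arguments observe that each conditional influence $I_{j\to i}(\pi_{\vec{a}})$ is a supremum of total variation distances over a \emph{restricted} set of configurations (those with the first $k$ coordinates pinned to $\vec{a}$), hence is bounded by $I_{j\to i}(\pi)$, and then sum over $j$ using Dobrushin's condition for $\pi$. Your write-up is slightly more explicit about the associativity-of-conditioning identity and about discarding the nonnegative terms with $j \le k$, both of which the paper leaves implicit, but there is no substantive difference in the argument.
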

We give the proof in Section \ref{sec:app-learnability}.
The next property is more technically involved. It considers the Hamming distance between two samples drawn from a conditional distribution under two distinct conditionings. It shows that this quantity can be bounded to be of the order of the size of the set of conditioned variables.
\begin{lemma}[Bounding Expected Hamming Distance Between two Conditional Measures]
	\label{lem:hamm-exp-bound}
	Let $a,a' \in (\cX \times \cY)^k$ and let $\rv{U} \sim D^{(m)}\left( . |(z_i)_{i=1}^k = (a_i)_{i=1}^k  \right)$ and $\rv{V}\sim D^{(m)}\left( . |(z_i)_{i=1}^k = (a_i')_{i=1}^k  \right)$. Then, there exists a coupling $(\rv{Z_1},\rv{Z_2})$ such that,
	$$\EE[d_H(\rv{U},\rv{V})] \le \frac{k\a}{1-\a},$$
	where $d_H(x,y) = \sum_{i=k+1}^m \mathbbm{1}_{x_i \ne y_i}$ is the Hamming distance between $x$ and $y$.
\end{lemma}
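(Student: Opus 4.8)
The plan is to exhibit the coupling via a coupled Gibbs sampler (Glauber dynamics) run to stationarity, and to bound the expected number of disagreements it leaves on the free coordinates $\{k+1,\dots,m\}$. Concretely, I would run two chains in lockstep: both freeze their first $k$ coordinates (one to $a$, the other to $a'$) and, at each step, pick a free coordinate $i$ uniformly at random and resample $U_i$ and $V_i$ together using a maximal coupling of the two conditional laws $P_{\rv{z}_i\mid\rv{z}_{-i}}(\cdot\mid U_{-i})$ and $P_{\rv{z}_i\mid\rv{z}_{-i}}(\cdot\mid V_{-i})$. By Lemma~\ref{lem:dobrushin-implies-cond-dobrushin}, both conditioned measures inherit Dobrushin's condition with the same coefficient $\alpha<1$, so each chain mixes to its intended target $\rv{U}$, resp.\ $\rv{V}$, and the coupled chain has a well-defined stationary law that is a genuine coupling of $\rv{U}$ and $\rv{V}$. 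It therefore suffices to bound $\EE[D]$, where $D$ is the number of disagreeing free coordinates under this stationary coupling.

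The key per-step estimate is that when coordinate $i$ is resampled, the probability the two chains disagree there afterwards equals the total variation distance between the two conditionals, which the influence bound controls:
\[
p_i \;:=\; \dtv\!\left(P_{\rv{z}_i\mid\rv{z}_{-i}}(\cdot\mid U_{-i}),\,P_{\rv{z}_i\mid\rv{z}_{-i}}(\cdot\mid V_{-i})\right)\;\le\;\sum_{j\ne i} I_{j\to i}\,\mathbbm{1}_{U_j\ne V_j}.
\]
Writing $\rho_i=\Pr[U_i\ne V_i]$ at stationarity and using that resampling $i$ leaves the stationary marginal at $i$ invariant, one gets the balance relation $\rho_i=\EE[p_i]\le\sum_{j\ne i}I_{j\to i}\rho_j$, where for a frozen boundary coordinate $j\le k$ we read $\rho_j=\mathbbm{1}_{a_j\ne a_j'}$. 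Summing over the free coordinates and interchanging the order of summation,
\[
\EE[D]=\sum_{i>k}\rho_i\;\le\;\sum_{j}\rho_j\sum_{i>k}I_{j\to i}.
\]
If each coordinate's total outgoing influence $\sum_{i}I_{j\to i}$ is at most $\alpha$, then splitting $j$ into the at most $k$ disagreeing boundary coordinates and the free coordinates yields $\EE[D]\le \alpha k+\alpha\,\EE[D]$, i.e.\ the claimed $\EE[D]\le k\alpha/(1-\alpha)=k\sum_{t\ge1}\alpha^t$; the geometric series is exactly the cumulative effect of disagreement propagating outward through successive influence ``generations.''

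The step I expect to be the real obstacle is precisely the one bounding the aggregate \emph{outgoing} influence $\sum_{i}I_{j\to i}$ by $\alpha$. Dobrushin's coefficient as defined here controls the \emph{incoming} influence $\sum_{j}I_{j\to i}\le\alpha$ (a row sum of the influence matrix), whereas the summation above produces column sums, and for a general nonnegative matrix these are unrelated. I would address this either by establishing that for a genuine influence matrix the outgoing sums are also controlled by $\alpha$ (the maximum over adversarial conditionings in the definition of $I_{j\to i}$ seems to force a coordinate that exerts large influence to also receive large influence at a ``pivotal'' configuration, so the naive row/column counterexamples are not realizable), or, failing a clean such statement, by first reducing via the triangle inequality for couplings to the case where $a$ and $a'$ differ in a single boundary coordinate (compose optimal couplings along a path $a=a^{(0)},\dots,a^{(k)}=a'$ of single-coordinate edits so that Hamming distances telescope); this isolates the difficulty to bounding the total outgoing influence of one coordinate, the cleanest form in which to confront the row-versus-column-sum issue. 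Secondary technical points, which I view as routine, are justifying existence and uniqueness of the stationary coupling on a general space $Z$ (invoking fast mixing of Glauber dynamics under Dobrushin's condition) and the invariance argument giving $\rho_i=\EE[p_i]$.
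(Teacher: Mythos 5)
Your proposal is, in structure, the paper's own proof: run the two conditioned Gibbs samplers, couple each single-site update greedily (maximal coupling of the two conditionals), bound the per-step disagreement probability by a sum of incoming influences, and conclude by passing to the stationary limit via ergodicity. The only real difference is bookkeeping: the paper starts both chains at a common configuration and proves by induction on $t$ that $\EE[d_H(\rv{U}_t,\rv{V}_t)]\le k\a/(1-\a)$ (the chain of displays \eqref{eq:ehb11}--\eqref{eq:ehb14}), whereas you work directly at stationarity with the balance relation $\rho_i \le \sum_{j\ne i} I_{j\to i}\,\rho_j$ and solve the linear inequality; both hinge on the identical per-step estimate.

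The obstacle you flag is genuine, and you should know the paper does not resolve it: its proof commits exactly the step you declined to take. In passing from \eqref{eq:ehb11} to the bound $\bigl((k+d_H(\rv{U}_t,\rv{V}_t))\a - d_H(\rv{U}_t,\rv{V}_t)\bigr)/(m-k)$, the paper swaps the order of summation and then bounds, for each fixed $j \in [k]\cup I_k^{\ne}$, the \emph{outgoing} sum $\sum_{i\in[m]\setminus[k]} I(j\to i)$ by $\a$; Definition~\ref{def:dob} only controls the \emph{incoming} sums $\sum_{j\ne i} I_{j\to i}$, and with those alone the same display yields only the useless bound $(m-k)\a$. (Indeed, the appendix proof of Lemma~\ref{lem:dobrushin-implies-cond-dobrushin} even restates Dobrushin's condition as ``for each $i$, $\sum_j I_{\pi}(i\to j)\le\alpha$,'' i.e.\ the transpose condition, so the paper is implicitly working with column sums throughout.) The two coefficients agree when the influence matrix is symmetric---for pairwise MRFs this follows from $I_{j\to i}\le\beta_{ij}=\beta_{ji}$ (Lemma~\ref{lem:hightemp-implies-dobrushin}), and under bounded log-coefficient from the symmetry of $I^{\log}$ (Lemma~\ref{lem:inf-bnd})---but not for a general distribution satisfying Definition~\ref{def:dob}. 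Also, your first proposed repair is too optimistic: total-variation influence can be maximally asymmetric pointwise (e.g., with $\rv{z}_i$ binary and $\rv{z}_j$ uniform on a set of $N$ values whose support shifts with $\rv{z}_i$, one gets $I_{j\to i}$ of constant order while $I_{i\to j}=O(1/N)$), so any fix must argue about sums rather than individual entries; and your second repair (telescoping over single-coordinate edits of the boundary) lands on the same column-sum question, as you yourself note. In short, your write-up reproduces the paper's argument and makes explicit the one inequality in it that is unjustified under the stated hypothesis: either the lemma should be read under the transpose/symmetric-influence condition, or this step needs a new idea.
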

 The full proof of Lemma~\ref{lem:hamm-exp-bound} is given in Section \ref{sec:app-learnability}. We give an outline here. The key technical ingredent is Markov chain coupling. We start by observing that one way to generate a sample from the conditional distribution is to start at an arbitrary configuration and run a Gibbs sampler until mixing. Given the two conditionings, we consider two such Markov chains starting from the same configuration and hence the configurations have a Hamming distance of 0 in the beginning. The chains run in a coupled manner so that the Hamming distance between the configurations can be shown to remain small in expectation after each step of updates. Once we run until both chains have mixed the random configurations at that point correspond to samples from the conditional distributions but at every step along the way the Hamming distance between them has been bounded and hence it will remain bounded after the chains have mixed as well giving the result. To achieve this, it was necessary to find the appropriate coupling. We use what is known as the greedy coupling and is popularly used to show fast mixing of Gibbs sampling for Dobrushin distributions \cite{LevinPW09}. This coupling tries to maximize the probability of agreement of the two chains updates' at each step. 
 
 With Lemmas \ref{lem:dobrushin-implies-cond-dobrushin} and \ref{lem:hamm-exp-bound}, we are ready to show Lemma \ref{lem:dobrushin-properties}.
 
\begin{lemma}
	\label{lem:dobrushin-properties}
	Let $D^{(m)}$ be a distribution over $m$ variables which satisfies Dobrushin's condition with coefficient $\a$ such that the marginal of every variable is $D$. Let $\rv{S} \sim D^{(m)}$. Then we have
	\begin{align*}
	& \text{1. } \abss{\EE\left[ L_{\rv{S}}(h) \right] - \EE\left[ L_{\rv{S}}(h) \vert \rv{S}_I \right]  } \le \frac{(2-\a) R|I|}{(1-\a)m},\\
	&\text{2. } \Pr\left[ \abss{L_{\rv{S}}(h) - \EE[L_{\rv{S}}(h) \vert \rv{S}_I]} \ge t \bigm\vert \rv{S}_I\right]  \le 2\exp\left(-\frac{t^2m(1-\a)}{2R^2}\right),
	\end{align*}
\end{lemma}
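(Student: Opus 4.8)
The plan is to verify the two stated properties separately, invoking the two structural facts about Dobrushin distributions established just above: that conditioning preserves the Dobrushin coefficient (Lemma~\ref{lem:dobrushin-implies-cond-dobrushin}), and that two conditional copies can be coupled with small expected Hamming distance (Lemma~\ref{lem:hamm-exp-bound}). Throughout I fix $h$ and write the empirical loss as $f(z) = \frac{1}{m}\sum_{i=1}^m \ell_h(z_i)$, so that $L_{\rv{S}}(h) = f(\rv{S})$; since $\ell$ takes values in $[0,R]$, the map $z \mapsto f(z)$ has the bounded-differences property with parameter $R/m$ in each coordinate. I also use that, because every marginal of $D^{(m)}$ equals $D$, the unconditional mean is $\EE[L_{\rv{S}}(h)] = \frac{1}{m}\sum_i \EE[\ell_h(\rv{s}_i)] = L_D(h)$. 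Without loss of generality I take $I = \{1,\dots,k\}$ with $k=\abss{I}$.

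Property 2 (concentration) is the more direct of the two. I condition on $\rv{S}_I = a$. By Lemma~\ref{lem:dobrushin-implies-cond-dobrushin}, the conditional law $D^{(m)}(\cdot \mid \rv{S}_I = a)$ over the free coordinates $\{k+1,\dots,m\}$ still satisfies Dobrushin's condition with coefficient $\a$. Viewed as a function of the free coordinates only, $f$ has bounded differences $\lambda_i = R/m$ for $i \notin I$ and $\lambda_i = 0$ for $i \in I$, so $\sum_i \lambda_i^2 = (m-k)R^2/m^2 \le R^2/m$. Applying the McDiarmid-type inequality for Dobrushin measures (Theorem~\ref{thm:dob-conc}) to this conditional distribution then gives
\[
\Pr\left[\abss{L_{\rv{S}}(h) - \EE[L_{\rv{S}}(h)\mid \rv{S}_I]} \ge t \bigm\vert \rv{S}_I\right] \le 2\exp\left(-\frac{(1-\a)t^2}{2\sum_i \lambda_i^2}\right) \le 2\exp\left(-\frac{(1-\a)t^2 m}{2R^2}\right),
\]
which is exactly the claimed bound.

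For Property 1 (bias of the conditional mean) I decompose the difference coordinate-by-coordinate, splitting according to whether $i \in I$. For $i \in I$ the conditional expectation is deterministic, $\EE[\ell_h(\rv{s}_i)\mid \rv{S}_I = a] = \ell_h(a_i)$, so each such term contributes at most $R$ in absolute value, giving at most $kR/m$ in total. For the free coordinates I plan to use the tower property to write $\EE[\ell_h(\rv{s}_i)] = \EE_{a'}\EE[\ell_h(\rv{s}_i)\mid \rv{S}_I = a']$, reducing the task to bounding, for fixed $a,a'$, the sum $\sum_{i \notin I}\abss{\EE[\ell_h(\rv{s}_i)\mid \rv{S}_I=a'] - \EE[\ell_h(\rv{s}_i)\mid \rv{S}_I=a]}$. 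Here I invoke the coupling $(\rv{U},\rv{V})$ of Lemma~\ref{lem:hamm-exp-bound}, with $\rv{U}$ drawn from the conditioning $a$ and $\rv{V}$ from $a'$; using $\abss{\ell_h(u)-\ell_h(v)} \le R\,\mathbbm{1}_{u \ne v}$ the sum is at most $R\sum_{i\notin I}\Pr[\rv{U}_i \ne \rv{V}_i] = R\,\EE[d_H(\rv{U},\rv{V})] \le Rk\a/(1-\a)$. Combining the two contributions yields $\frac{kR}{m} + \frac{Rk\a}{m(1-\a)}$, and bounding $\a \le 1$ in the numerator of the second term produces the stated $\frac{(2-\a)R\abss{I}}{(1-\a)m}$.

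The main obstacle is the free-coordinate part of Property 1: the passage from a difference of conditional marginal expectations to an expected Hamming distance. This step is exactly what the coupling in Lemma~\ref{lem:hamm-exp-bound} is designed to supply, and the only care needed is (i) to introduce the averaging over $a'$ via the tower property so that the two endpoints of the coupling are genuine conditional laws, and (ii) to observe that because $\ell_h$ depends on coordinate $i$ alone, a disagreement penalty of $R$ per mismatched coordinate is all that is incurred. Property 2, by contrast, is essentially a plug-in of Theorem~\ref{thm:dob-conc} once Lemma~\ref{lem:dobrushin-implies-cond-dobrushin} certifies that conditioning does not spoil the Dobrushin coefficient.
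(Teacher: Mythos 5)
Your proof is correct and takes essentially the same route as the paper's: Property 2 is a conditional application of Theorem~\ref{thm:dob-conc} after invoking Lemma~\ref{lem:dobrushin-implies-cond-dobrushin}, and Property 1 splits the sum into conditioned and free coordinates, bounding the latter via the coupling of Lemma~\ref{lem:hamm-exp-bound} exactly as the paper does (your averaging over $a'$ versus the paper's supremum over $\tilde{\rv{S}}_I$ is an immaterial difference). Your constant is in fact slightly tighter --- you obtain $kR/\bigl(m(1-\a)\bigr)$ where the paper gets $(2-\a)kR/\bigl((1-\a)m\bigr)$ by bounding each conditioned term by $2R$ rather than $R$ --- and this still implies the stated bound.
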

\begin{proof}
	Let $|I|= k$. Again, we assume without loss of generality that $I = \{ 1, \dots, k\}$. The proof idea remains the same for any other set $I$.
	We have
	\begin{align}
		m\abss{\EE\left[ L_{\rv{S}}(h) \right] - \EE\left[ L_{\rv{S}}(h) \vert \rv{S}_I \right]  } \le& \left|\sum_{i=1}^{k} \left(\EE[\ell(\hat{h}_I(\rv{x}_i), \rv{y}_i)] - \EE[\ell(\hat{h}_I(\rv{x}_i), \rv{y}_i) \vert \rv{S}_I]\right)\right| \label{eq:decom3} \\
		&+ \left|\sum_{i=k+1}^m \left(\EE[ \ell(\hat{h}_I(\rv{x}_i), \rv{y}_i)] - \EE[\ell(\hat{h}_I(\rv{x}_i), \rv{y}_i) \vert \rv{S}_I]\right) \right|. \label{eq:decom4}
	\end{align}
	Since $0 \le \ell(.) \le R$, we get that $\eqref{eq:decom3} \le 2Rk$.
	\eqref{eq:decom4} is bounded using Lemma~\ref{lem:hamm-exp-bound} as follows. Let $\rv{S}_I$ and $\tilde{\rv{S}}_I$ be two realizations of the sample on the set of indices $I$. Then,
		\begin{align}
		&\abss{\sum_{i=k+1}^m \left(\EE[\ell(\hat{h}_I(\rv{x}_i), \rv{y}_i) \mid \rv{S}_I] - \EE[\ell(\hat{h}_I(\rv{x}_i), \rv{y}_i)]\right)} \\
		&~~\le \sup_{\tilde{\rv{S}}_I \ne \rv{S}_I} \sum_{i=k+1}^m \left(\EE[\ell(\hat{h}_I(\rv{x}_i), \rv{y}_i) \mid \rv{S}_I] - \EE[\ell(\hat{h}_I(\rv{x}_i), \rv{y}_i) \mid \tilde{\rv{S}}_I]\right) \label{eq:cig8}\\
		&~~\le \sup_{\tilde{\rv{S}}_I \ne \rv{S}_I} \EE\left[d_H(\rv{z}_{i=k+1}^m, \rv{\tilde{z}}_{i=k+1}^m) L \mid \rv{S}_I, \tilde{\rv{S}}_I\right]  \le \frac{Rk\a}{1-\a}, \label{eq:cig9}
		\end{align}
		where \eqref{eq:cig8} utilized the fact that for two random variables $\rv{x}$ and $\rv{y}$ where $\rv{x} \ge 0$, $\abss{\EE[\rv{x}] - \EE[\rv{x} \mid \rv{y}=y_1 ]} \le \sup_{y_2}  \abss{\EE[\rv{x} \mid \rv{y}=y_2] - \EE[\rv{x} \mid \rv{y}=y_1 ]}$ because $\EE[\rv{x}]$ is a convex combination of values of the form $\EE[\rv{x} | \rv{y}=y]$. In \eqref{eq:cig9}, $\rv{z}_{i=k+1}^m$ is sampled from the distribution conditioned on $\rv{S}_I$ and $\rv{\tilde{z}}_{i=k+1}^m$ is sampled from the distribution conditioned on $\tilde{\rv{S}}_I$. The first inequality in \eqref{eq:cig9} follows by coupling the two conditional probability spaces together with employing the fact that $|\ell| \le R$, and the second is proved in Lemma~\ref{lem:hamm-exp-bound}.
		Putting the two bounds together, we get the first conclusion of the Lemma.
		\begin{align}
			 \abss{\EE\left[ L_{\rv{S}}(h) \right] - \EE\left[ L_{\rv{S}}(h) \vert \rv{S}_I \right]  } \le \frac{Rk(2-\a)}{m(1-\a)}.
		\end{align}
		
	Next, we show the second conclusion of the Lemma which involves the distribution obtained by conditioning on $\rv{S}_I$. Recall that $I = \{ 1,2,\ldots,k \}$.
	Firstly, we notice that due to the conditioning
	\begin{align}
		m\left(L_{\rv{S}}(h) - \EE[L_{\rv{S}}(h) \vert \rv{S}_I] \right) = \sum_{i=k+1}^m \left(  \ell(\hat{h}_I(\rv{x}_i,\rv{y}_i)) - \EE[\ell(\hat{h}_I(\rv{x}_i), \rv{y}_i) \vert \rv{S}_I] \right). \label{eq:decom5}
	\end{align}
	Denote $\mu(\rv{S}_I) = \sum_{i=k+1}^m\EE[\ell(\hat{h}_I(\rv{x}_i), \rv{y}_i) \vert \rv{S}_I]$. Note that conditioned on $\rv{S}_I$, $\hat{h}_I$ is fixed. 
	Next we invoke Lemma~\ref{lem:dobrushin-implies-cond-dobrushin} to argue that since $((\rv{x}_i,\rv{y}_i))_{i=1}^m$ comes from a distribution satisfying Dobrushin's condition with coefficient $\a$, the conditional distribution of $((\rv{x}_i,\rv{y}_i))_{i=k+1}^m | \rv{S}_I$ satisfies Dobrushin's condition with coefficient $\a$ as well.  
	Hence we get the following concentration bound for \eqref{eq:decom5} by employing Theorem~\ref{thm:dob-conc} for Dobrushin distributions.
	\[
	\Pr\left[ \left|\sum_{i=k+1}^m \ell(\hat{h}_I(\rv{x}_i,\rv{y}_i)) - \mu(\rv{S}_I)\right| > \left(\sqrt{m-k}\right) t ~\middle|~ \rv{S}_I \right]
	\le 2\exp\left(-\frac{t^2 (1-\a)}{2R^2} \right) .
	\]
	Replacing $\sqrt{m-k}$ with $\sqrt{m}$, one obtains the second conclusion of the Lemma.
\end{proof}

Lemmas \ref{lem:compression-generalization-conditions} and \ref{lem:dobrushin-properties} imply Corollary \ref{cor:compression-implies-generalization}.
\begin{corollary}[Sample Compression Schemes Generalize under Dobrushin]
	\label{cor:compression-implies-generalization}
	Consider a sample $\rv{S}=\{ (\rv{x}_1,\rv{y}_1),\ldots,(\rv{x}_m,\rv{y}_m) \} \sim D^{(m)}$ which satisfies Dobrushin's condition with coefficient $\a\left(D^{(m)}\right)$. 
	For any agnostic sample compression scheme $(\k,\r)$ of size $k$ on a sample $\rv{S}$, and any loss function $\ell$ bounded by $R \ge 0$, we have that, for some constant $C(\a)$ depending on $\a$,
	\begin{align*}
	\Pr\left[ \abss{L_{\rv{S}}( \r(\k(\rv{S})) ) - L_D( \r(\k(\rv{S})) )} \ge C(\a)R\sqrt{ \frac{\left(k\log m + \log(1/\d)\right)}{m}} \right] \le \d.
	\end{align*}
\end{corollary}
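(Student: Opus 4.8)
The plan is to observe that Corollary~\ref{cor:compression-implies-generalization} follows by feeding the distributional bounds established in Lemma~\ref{lem:dobrushin-properties} into the abstract generalization guarantee of Lemma~\ref{lem:compression-generalization-conditions}. The latter lemma is stated for an arbitrary dependent distribution $D^{(m)}$ subject to two hypotheses---a bound on how much conditioning on a block $\rv{S}_I$ shifts the expected empirical loss (with constant $C_2$), and a sub-Gaussian concentration of the empirical loss around its conditional mean (with constant $C_1$). All I would need to do is exhibit admissible constants $C_1,C_2$ for a Dobrushin distribution, which is exactly the content of Lemma~\ref{lem:dobrushin-properties}.

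Concretely, first I would match the two pairs of inequalities. Comparing condition~1 of Lemma~\ref{lem:compression-generalization-conditions} with conclusion~1 of Lemma~\ref{lem:dobrushin-properties} shows that the first hypothesis holds with $C_2 = (2-\a)/(1-\a)$; comparing the exponents in the two tail bounds, $t^2 m/(2C_1 R^2)$ against $t^2 m (1-\a)/(2R^2)$, shows that the second hypothesis holds with $C_1 = 1/(1-\a)$. With these two constants in hand, Lemma~\ref{lem:compression-generalization-conditions} applies verbatim to $\rv{S} \sim D^{(m)}$ and yields
\begin{align*}
\Pr\left[ \abss{L_{\rv{S}}( \r(\k(\rv{S})) ) - L_D( \r(\k(\rv{S})) )} \ge C R\sqrt{ \frac{k\log m + \log(1/\d)}{m}} \right] \le \d,
\end{align*}
where the constant $C$ produced by that lemma now depends on $C_1$ and $C_2$ through the factors $\sqrt{C_1}$ and $C_2$ appearing in its proof, and is therefore a function $C(\a)$ of the Dobrushin coefficient alone. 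This is precisely the claimed inequality.

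There is essentially no hard step here; the corollary is a bookkeeping composition of the two lemmas, and the only thing to be careful about is tracking how the $1/(1-\a)$ factors in $C_1,C_2$ propagate into the final constant $C(\a)$. I would also remark that the use of Lemma~\ref{lem:dobrushin-properties} is where Dobrushin's condition genuinely enters: its two conclusions rely, respectively, on the Hamming-distance coupling bound of Lemma~\ref{lem:hamm-exp-bound} and on the fact (Lemma~\ref{lem:dobrushin-implies-cond-dobrushin}) that conditioning preserves the Dobrushin coefficient, so that the concentration inequality of Theorem~\ref{thm:dob-conc} can be invoked on the conditional law of $(\rv{x}_i,\rv{y}_i)_{i>k}$ given $\rv{S}_I$.
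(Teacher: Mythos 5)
Your proposal is correct and is exactly the paper's own argument: the paper derives Corollary~\ref{cor:compression-implies-generalization} by plugging the two conclusions of Lemma~\ref{lem:dobrushin-properties} into the two hypotheses of Lemma~\ref{lem:compression-generalization-conditions}, which is precisely your constant-matching with $C_2 = (2-\a)/(1-\a)$ and $C_1 = 1/(1-\a)$. Your explicit tracking of how these constants propagate into $C(\a)$ is a faithful (indeed slightly more detailed) rendering of what the paper leaves implicit.
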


Lemmas \ref{lem:compression-generalization-conditions} and \ref{lem:dobrushin-properties} together imply Corollary \ref{cor:compression-implies-generalization}.
Corollary \ref{cor:compression-implies-generalization} together with the property of an agnostic sample compression scheme implies Theorem~\ref{thm:pac-learn}.
\begin{proof}
	\textbf{of Theorem~\ref{thm:pac-learn}}. Let $\epsilon = \epsilon_1 + \epsilon_2$ where 
	\begin{align*}
		&\epsilon_1 = C(\a)R\sqrt{\left(k\log m + \log(2/\d)\right)/m}, \\
		&\epsilon_2 = 2R\sqrt{(1-\a)^{-1} \log(2/\d)/m}.
	\end{align*}
	Since $(\k,\r)$ is an agnostic sample compression scheme, we have that $L_{\rv{S}}(\r(\k(\rv{S}))  ) \le \inf_{h \in \cH} L_{\rv{S}}(h)$. We will show that this implies that $L_{\rv{S}}(\r(\k(\rv{S}))  ) - \inf_{h \in \cH} L_{D}(h) \le \epsilon_2$ with probability $\ge 1-\d/2$. Let $h^* = \argmin_{h \in \cH} L_D(h)$.
	\begin{align}
	L_{\rv{S}}(\r(\k(\rv{S}))  ) \le \inf_{h \in \cH} L_{\rv{S}}(h) \le  L_{\rv{S}}(h^*). \notag
	\end{align}
	For any $h \in \cH$, we have using Theorem~\ref{thm:dob-conc} that
	\begin{align}
	\Pr\left[ \abss{	L_{\rv{S}}(h) - L_D(h)}  \ge \epsilon_2 \right] \le \d/2. \label{eq:main3}
	\end{align}
	Hence,
	\begin{align}
		&\Pr\left[ L_D( \r(\k(\rv{S})) ) \ge \inf_{h \in \cH} L_D(h) +   \epsilon \right] \notag\\
		=& \Pr\left[  L_D( \r(\k(\rv{S})) ) \ge L_{\rv{S} }( \r(\k(\rv{S})) ) +  \epsilon + \left(\inf_{h \in \cH} L_D(h) - L_{\rv{S} }( \r(\k(\rv{S}))  \right) \right] \notag\\
		\le& \Pr\left[  L_D( \r(\k(\rv{S})) ) \ge L_{\rv{S} }( \r(\k(\rv{S})) ) +  \epsilon_1 \right] +  \Pr\left[ L_{\rv{S} }( \r(\k(\rv{S}))  \ge  \inf_{h \in \cH} L_D(h) + \epsilon_2 \right] \notag\\
		\le &  \d/2 + \Pr\left[ L_{\rv{S}}(h^*) \ge L_D(h^*) + \epsilon_2\right] \le \d. \label{eq:main4}
	\end{align}
   where the first inequality in \eqref{eq:main4} follows from Corollary~\ref{cor:compression-implies-generalization} and the second inequality follows from \eqref{eq:main3}.

\end{proof}

\subsection{Extending Learnability to Bounded Loss Functions}
\label{sec:general-loss}
Learnability under Dobrushin's condition can also be extended for general loss functions which are bounded by some constant $L$. This can be shown using $\epsilon$-approximate sample compression schemes defined in Section 4 of \cite{david2016statistical}. Lemma~\ref{lem:compression-generalization-conditions}  continues to hold for these sample compression schemes and the rest of the result follows suit.
\section{Uniform Convergence for Weakly Dependent Data}
\label{sec:uc}

In this section, we obtain uniform convergence bounds for the empirical loss on weakly dependent training data. We could not derive such bounds for distributions satisfying Dobruhsin's condition and we do not know if such bounds apply for all classes of finite VC dimension. Hence, we present such bounds for a smaller family of distributions, which contain high temperature Markov Random Fields with pairwise potentials, however, allows an arbitrary structure of correlations (as long as they are sufficiently weak). We define the log-influences $I_{j, i}^{\log}$, a notion stronger than Dobrushin's influences $I_{j\to i}$, which replaces the total variation distance appearing in the definition with a stronger bound on the maximal log-ratio of probabilities. Analogously, we obtain the log-coefficient $\alpha_{\log}$, as defined below:
\begin{definition}[Log-influence and log-coefficient]
	Let $\rv{z} = (\rv{z}_1,\dots,\rv{z}_m)$ be a random variable over $\Omega^m$ and let $P_{\rv{z}}$ denote either its probability distribution if discrete or its density if continuous. Assume that $P_z > 0$ on all $\Omega^m$.
	For any $i \ne j \in [m]$, define the \emph{log-influence} between $j$ and $i$ as\footnote{To be more formal, one can define $P_{\rv{z}} = d\mu$ where $\rv{z} \sim \mu$ and replace the supremum with an essential supremum.}
	\[
	I^{\log}_{j, i}(\rv{z})
	= \frac{1}{4} \sup_{\substack{z_{-i-j} \in \Omega^{m-2} \\ z_i,z_i', z_j,z_j' \in \Omega}}
	\log \frac{P_{\rv{z}}[z_i z_j z_{-i-j}]P_{\rv{z}}[z_i' z_j' z_{-i-j}]}{P_{\rv{z}}[z_i' z_j z_{-i-j}]P_{\rv{z}}[z_i z_j' z_{-i-j}]}.
	\]
	Define the \emph{log-coefficient} of $\rv{z}$ as $\alpha_{\log}(\rv{z}) = \max_{i\in [m]} \sum_{j \ne i} I^{\log}_{j,i}(\rv{z})$.
\end{definition}

Note that the log influence is symmetric: $I_{j,i}^{\log} = I_{i,j}^{\log}$. The following relation holds:
\begin{lemma} \label{lem:inf-bnd}
	For any random variable $\rv{z}$ and $i,j \in [m]$, $I_{j \to i}(\rv{z}) \le I^{\log}_{j, i}(\rv{z}) \le \beta_{j,i}(\rv{z})$.
\end{lemma}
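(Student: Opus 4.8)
The plan is to prove the two inequalities separately, since they have different flavors. The right inequality $I^{\log}_{j,i}(\rv{z}) \le \beta_{j,i}(\rv{z})$ is meaningful when $\rv{z}$ is an MRF with pairwise potentials (this is where $\beta_{j,i}$ is defined), and it reduces to a cancellation. Substituting the product form $P_{\rv{z}}[z] = \prod_k e^{\varphi_k(z_k)}\prod_{k<l}e^{\psi_{kl}(z_k,z_l)}$ into the four-fold ratio inside $I^{\log}_{j,i}$, every element-wise potential $\varphi_k$ and every pairwise potential $\psi_{kl}$ that does not involve both coordinates $i$ and $j$ appears the same number of times in the numerator and the denominator, and hence cancels. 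The only survivor is the $\psi_{ij}$ contribution
\[
\psi_{ij}(z_i,z_j) + \psi_{ij}(z_i',z_j') - \psi_{ij}(z_i',z_j) - \psi_{ij}(z_i,z_j').
\]
Each of the four terms is at most $\beta_{j,i}(\rv{z}) = \sup|\psi_{ij}|$ in absolute value, so the log-ratio is bounded by $4\beta_{j,i}(\rv{z})$; dividing by $4$ gives the claim.

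For the left inequality $I_{j\to i}(\rv{z})\le I^{\log}_{j,i}(\rv{z})$, I would fix $z_{-i-j}, z_j, z_j'$ and write $p(\cdot)=P_{\rv{z}_i\mid\rv{z}_{-i}}(\cdot\mid z_{-i-j} z_j)$ and $q(\cdot)=P_{\rv{z}_i\mid\rv{z}_{-i}}(\cdot\mid z_{-i-j} z_j')$, the two conditionals whose total variation distance defines $I_{j\to i}(\rv{z})$. First I would unfold each conditional as the joint $P_{\rv{z}}[\cdot\, z_j\, z_{-i-j}]$ divided by its normalizer (the integral/sum over the $i$-th coordinate). The ratio of the two normalizers is a constant $c$ independent of the free coordinate $x$, so
\[
\log\frac{p(x)}{q(x)} = \log c + \log\frac{P_{\rv{z}}[x\, z_j\, z_{-i-j}]}{P_{\rv{z}}[x\, z_j'\, z_{-i-j}]}.
\]
Consequently the difference $\log(p(x)/q(x)) - \log(p(y)/q(y))$ is exactly the log-ratio appearing in the definition of $I^{\log}_{j,i}$ with $(z_i,z_i')=(x,y)$, so writing $\gamma = I^{\log}_{j,i}(\rv{z})$ we obtain $|\log(p(x)/q(x)) - \log(p(y)/q(y))| \le 4\gamma$ for all $x,y$ in the (full) support; that is, the log-likelihood-ratio $\log(p/q)$ has oscillation at most $4\gamma$.

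It then remains to convert bounded oscillation of the log-likelihood-ratio into a TV bound. Setting $\lambda(x)=p(x)/q(x)$, we have $d_{TV}(p,q)=\tfrac12\,\EE_{x\sim q}|\lambda(x)-1|$ subject to the two constraints $\EE_q[\lambda]=1$ (which forces $\lambda_{\min}\le 1 \le \lambda_{\max}$) and $\lambda_{\max}/\lambda_{\min}\le e^{4\gamma}$. I would treat this as a one-dimensional extremal problem over the law of $\lambda$ under $q$: since $t\mapsto|t-1|$ is convex and the mean is fixed, the maximum of $\EE_q|\lambda-1|$ over distributions supported in a fixed interval is attained by a two-point distribution on the endpoints $\{\lambda_{\min},\lambda_{\max}\}$. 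Computing that two-point value explicitly and optimizing the placement of the interval (the interior critical point sits where $\lambda_{\min}\lambda_{\max}=1$, the geometric-mean centering) yields the sharp bound $d_{TV}(p,q)\le \frac{\sqrt{R}-1}{\sqrt{R}+1}$ with $R=e^{4\gamma}$, i.e. $d_{TV}(p,q)\le\tanh(\gamma)\le\gamma$. Taking the maximum over $z_{-i-j},z_j,z_j'$ gives $I_{j\to i}(\rv{z})\le I^{\log}_{j,i}(\rv{z})$.

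The routine parts are the MRF cancellation and the unfolding of conditionals. The hard part will be the final extremal step, which is also the only place where the precise normalization ($\tfrac14$) in the definition of $I^{\log}_{j,i}$ matters: a naive bound using only $\lambda\in[e^{-4\gamma},e^{4\gamma}]$ gives $d_{TV}\le\tfrac12(e^{4\gamma}-1)\approx 2\gamma$, which is too weak by a constant factor. Exploiting the mean constraint $\EE_q[\lambda]=1$ through convexity is exactly what sharpens this to $\tanh(\gamma)\le\gamma$ and makes the inequality hold with the stated constant.
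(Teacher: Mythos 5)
Your proof is correct. The second inequality ($I^{\log}_{j,i} \le \beta_{j,i}$ via cancellation of all potentials except $\psi_{ij}$ in the cross-ratio) coincides with the paper's argument, but your proof of the first inequality $I_{j\to i}(\rv{z}) \le I^{\log}_{j,i}(\rv{z})$ takes a genuinely different route. The paper fixes the event $F$ attaining the total variation distance, arranges the four joint probabilities (of $\rv{z}_i$ in $F$ or its complement, against $\rv{z}_j = z_j$ or $z_j'$) into a $2\times 2$ table, and invokes an auxiliary lemma (Lemma~\ref{lem:8}) asserting that the difference of the two conditional probabilities is at most one quarter of the log cross-ratio of the table; that lemma is in turn proved by writing the table as a two-spin Ising model and appealing to the cited influence-versus-temperature bound of Lemma~\ref{lem:hightemp-implies-dobrushin}. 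You instead work directly with the likelihood ratio $\lambda = p/q$ of the two conditional laws of $\rv{z}_i$: the normalizers cancel in differences, so $\log\lambda$ has oscillation at most $4\gamma$ with $\gamma = I^{\log}_{j,i}(\rv{z})$, and you then solve the extremal problem of maximizing $\tfrac12\EE_q|\lambda-1|$ subject to $\EE_q[\lambda]=1$ and the range constraint, where convexity forces the maximizer to be a two-point law on the endpoints and optimizing the interval gives the sharp bound $\tanh(\gamma)\le\gamma$. Both arguments exploit the same two facts (the mean-one constraint and the bounded log-range), but yours is self-contained rather than resting on a cited calculation, yields the sharp constant $\tanh(\gamma)$, and sidesteps a step the paper leaves implicit: the log cross-ratio of the event-level table is defined through probabilities of $F$ and its complement, whereas $I^{\log}_{j,i}$ is a supremum over point values, so one must integrate the pointwise inequality over $F$ times its complement to legitimize the paper's application of Lemma~\ref{lem:8}. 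The paper's route, in exchange, is shorter on the page because the hard computation is delegated to the cited lemma.
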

The proof is simple and appears in Section~\ref{sec:pr-influence-compare}. 
The main result of this section shows that uniform convergence holds whenever the log-coefficient is less than a half. In that regime, the maximal generalization error of hypotheses from $H$, $\sup_{h \in H} |L_S(h) - L_D(h)|$, is bounded in terms of the Gaussian complexity of $H$:
\begin{theorem} \label{thm:main}
	Let $\cH$ be a hypothesis class, let $\ell \colon \cY^2 \to [-L,L]$ be a loss function, and let  $\mathcal{L}_H = \{\ell_h \colon h \in H\}$. Let $D^{(m)}$ be a distribution over $(X \times Y)^m$, with all $m$ marginals equaling $D$ and $\alpha_{\log}(D^{(m)}) < 1/2$. Then, for all $t > 0$,
	\[
	\Pr_{\rv{S} \sim D^{(m)}} \left[ \sup_{h \in H} \left|L_S(h) - L_D(h)\right|
	> C \left( \GC_{D^{(m)}}(\mathcal{L}_H)
	+ \frac{L t}{\sqrt{m} } \right)
	\right]
	\le e^{-t^2 / 2},
	\]
	(where $C$ is a universal constant whenever $1/2-\alpha_{\log}\left(D^{(m)}\right)$ is bounded away from zero).
\end{theorem}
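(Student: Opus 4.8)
The plan is to follow the classical two-step recipe for uniform convergence—concentrate the supremum $F(S) := \sup_{h\in H}|L_S(h)-L_D(h)|$ around its mean, and then bound the mean by a complexity measure through symmetrization—while accounting at each step for the dependence permitted by the log-coefficient condition. The first observation is that by Lemma~\ref{lem:inf-bnd} the hypothesis $\alpha_{\log}(D^{(m)})<1/2$ forces $\alpha(D^{(m)})\le\alpha_{\log}(D^{(m)})<1/2$, so $D^{(m)}$ satisfies Dobrushin's condition with $1-\alpha>1/2$ and the full strength of Theorem~\ref{thm:dob-conc} is available.

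For the concentration step I would apply Theorem~\ref{thm:dob-conc} directly to $F$. Since each $\ell_h$ takes values in $[-L,L]$ and enters $L_S(h)$ with weight $1/m$, altering a single coordinate of $S$ changes $F$ by at most $2L/m$; hence $F$ has the bounded-differences property with $\lambda_i=2L/m$ and $\sum_i\lambda_i^2=4L^2/m$. Theorem~\ref{thm:dob-conc} then yields $\Pr[F\ge \EE F+u]\le 2\exp(-(1-\alpha)u^2 m/(8L^2))$, and substituting $u=C'Lt/\sqrt m$ together with $1-\alpha>1/2$ produces a tail of the form $e^{-t^2/2}$ after adjusting the universal constant. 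This reduces the theorem to the expectation bound $\EE_S[F]\le C\,\GC_{D^{(m)}}(\Loss_H)$, which is the heart of the matter.

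To bound $\EE_S[F]$ I would begin with the ghost-sample step, which survives the dependence because only the marginals enter it: as every marginal of $D^{(m)}$ equals $D$, we have $L_D(h)=\EE_{S'}[L_{S'}(h)]$ for an independent copy $S'\sim D^{(m)}$, so Jensen's inequality gives $\EE_S[F]\le \EE_{(S,S')}[\,\sup_h|\tfrac1m\sum_i(\ell_h(s_i)-\ell_h(s_i'))|\,]$. Once symmetrizing multipliers are present, the endgame is also unchanged from the i.i.d.\ case: for fixed $(S,S')$ the sign-weighted supremum is an empirical Rademacher average over the $2m$ points, which is dominated (up to $\sqrt{\pi/2}$) by the corresponding empirical Gaussian average, and splitting the $2m$ points back into two blocks drawn from $D^{(m)}$ bounds it by $2\,\GC_{D^{(m)}}(\Loss_H)$.

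The main obstacle is inserting the multipliers. In the i.i.d.\ proof one flips the sign of the $i$-th difference by swapping $s_i\leftrightarrow s_i'$, a move that leaves the joint law of $(S,S')$ invariant by independence across coordinates; under $D^{(m)}$ this invariance fails, since a single-coordinate swap can even push the configuration off the support of the measure. One cannot rescue this by a pointwise Radon–Nikodym bound between $P\otimes P$ and its swap-symmetrization: for a distribution that is a product of strongly coupled pairs, the expected per-coordinate swap density ratio is exponentially small in $m$, so the correct symmetrization for such data is per-block rather than per-coordinate. The fix must therefore respect the correlation structure rather than decouple coordinates one at a time. I expect the cleanest route is to linearize the pairwise interactions by a Gaussian (Hubbard–Stratonovich) decoupling, introducing an auxiliary Gaussian field so that, conditionally on it, the coordinates become independent; standard i.i.d.\ symmetrization then applies to the conditional product measure, the correlations are absorbed into the field, and this is also what would make the \emph{Gaussian} (rather than merely Rademacher) complexity the natural quantity on the right-hand side. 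The role of $\alpha_{\log}<1/2$ is to bound the relevant operator norm / exponential moment—the log-influences $I^{\log}_{j,i}$ are exactly the coefficients controlling the induced change of measure—so that the decoupling is valid and its cost is a universal constant. Verifying this threshold bound, and handling distributions that are not literally pairwise MRFs, is where essentially all the work lies; the concentration and ghost-sample steps are routine adaptations.
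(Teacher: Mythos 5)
Your concentration and ghost-sample steps are exactly the paper's (Theorem~\ref{thm:high-probabiliy} applies Theorem~\ref{thm:dob-conc} to the supremum with bounded differences $2L$, using $\alpha \le \alpha_{\log} < 1/2$ via Lemma~\ref{lem:inf-bnd}; the ghost sample enters through equation~\eqref{eq:sup-S-Sp} precisely because the marginals are identical), and you correctly diagnose that the whole difficulty is inserting the multipliers, since the coordinate-wise swap $\rv{s}_i \leftrightarrow \rv{s}'_i$ does not preserve the joint law. But at that point your proposal stops being a proof: the Hubbard--Stratonovich decoupling you invoke is left entirely unverified, and it cannot serve as stated. First, it only makes sense for distributions that literally are pairwise MRFs with a quadratic (and suitably signed) interaction, whereas Theorem~\ref{thm:main} assumes only $\alpha_{\log}(D^{(m)}) < 1/2$; you acknowledge this but that caveat is the theorem's full generality. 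Second, even in the Ising-like case, conditioning on the auxiliary field makes the coordinates independent but \emph{tilts} their marginals: conditional symmetrization then compares $L_S(h)$ to the field-dependent conditional mean rather than to $L_D(h)$, and it produces a Rademacher/Gaussian average with respect to the tilted product measure rather than $\GC_{D^{(m)}}(\mathcal{L}_H)$. Controlling the fluctuation of the conditional mean over the field, and relating the tilted complexity back to $\GC_{D^{(m)}}$, is not a constant-factor bookkeeping step, and nothing in the proposal addresses it.

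The paper's actual mechanism is different and is worth contrasting. It draws i.i.d.\ uniform signs $\rv{\sigma}$, uses them to shuffle $(\rv{S},\rv{S}')$ into $(\rv{T},\rv{T}')$, and then performs a change of measure (equation~\eqref{eq:p-rad}): conditionally on $(\rv{T},\rv{T}')=(T,T')$, the signs $\rv{\sigma}_{T,T'}$ are no longer i.i.d., but Lemma~\ref{lem:subg-log} shows they remain zero-mean with $I^{\log}_{j,i}(\rv{\sigma}_{T,T'}) \le 2 I^{\log}_{j,i}(D^{(m)})$, so by Lemma~\ref{lem:inf-bnd} and Theorem~\ref{thm:dob-conc} they form a $C/(1-2\alpha_{\log}(D^{(m)}))$-subGaussian vector --- this is where the threshold $1/2$ enters. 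The resulting dependent-multiplier complexity $\eCom^{\rv{\sigma}_{T,T'}}_{\rv{T}}(\cF)$ is then compared to the Gaussian complexity via the Fernique--Talagrand majorizing measure theorem (Theorem~\ref{thm:majorizing-measure}, Lemma~\ref{lem:subg-implies-rad}), which is the real reason the \emph{Gaussian} complexity appears, and finally Lemma~\ref{lem:bnd-complexities} de-shuffles: $\GC_{\rv{T}}(\cF) \le 2\GC_{D^{(m)}}(\cF)$. None of these three steps (conditional sign law is subGaussian; subGaussian-to-Gaussian process comparison; de-shuffling) has a counterpart in your outline, so the central expectation bound $\EE_{\rv{S}}[F] \le C\,\GC_{D^{(m)}}(\mathcal{L}_H)$ remains unproved in your proposal.
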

The proof appears in Section~\ref{sec:pr-thm-main-cor} and is a direct corollary of Theorem~\ref{thm:pairwise} which is presented below.
Note that Lemma~\ref{lem:inf-bnd} implies that Theorem~\ref{thm:main} also holds whenever $D^{(m)}$ is an MRF with pairwise potentials and $\beta(D^{(m)}) < 1/2$. Since the condition $\beta(D^{(m)}) < 1$ sufficient for concentration inequalities to hold, we suspect that Theorem~\ref{thm:main} may hold as well in this regime. However, when $\beta(D^{(m)}) > 1$, Theorem~\ref{thm:main} is not generally true, since concentration inequalities are not guaranteed to hold.

Applying Theorem~\ref{thm:main} on any hypothesis class $H$ with finite VC, one obtains the same sample complexity bounds of i.i.d data up to constant factors: 
\begin{equation} \label{eq:sid}
O\left(\frac{\mathrm{VC}(H) + \log (1/\delta)}{\varepsilon^2}\right).
\end{equation}
This follows from the fact that the Gaussian complexity of $\mathcal{L}_H$ is bounded by $O\left(\sqrt{VC(H)/m}\right)$. The proof is almost identical to the proof bounding the Rademacher complexity by the same quantity (see, for instance, \citealp{shalev2014understanding}, Chapter 27).

Although the Rademacher and Gaussian complexities are not identical, they are almost equivalent.
Both notions were introduced to the learning community by \cite{bartlett2002rademacher}, and \cite{tomczak1989banach} proved the following:
\[
c\eRad_S(\cF) \le \eGC_S(\cF) \le C \ln m ~\eRad_S(\cF),
\]
for some universal constants $c, C > 0$. \cite{bednorz2014boundedness} resolved the long-standing Bernoulli conjecture by Talagrand and gave an exact characterization of the relation between these two notions.
The standard techniques for bounding the Rademacher complexity, based on Chernoff-Hoeffding bounds, apply for bounding the Gaussian complexity as well, with the same constants (this includes chaining and covering numbers).

Theorem~\ref{thm:main} is based on a more general result, bounding the expected suprema of empirical processes with respect to the corresponding Gaussian complexity. Here, the supremum is taken over an arbitrary family of unbounded functions, rather than bounded loss functions. Also, the $m$ marginals of the weakly correlated distribution are not assumed to be identical. Hence, one can derive a variant of Theorem~\ref{thm:main} with non-identical marginals.

\begin{theorem} \label{thm:pairwise}
	Let $D^{(m)}$ be a random vector over some domain $\cZ^m$ and let $\cF$ be a class of functions from $\cZ$ to $\mathbb{R}$. If $\alpha_{\log}(D^{(m)}) < 1/2$, then
	\begin{equation}\label{eq:main}
	\EE_{\rv{S} \sim D^{(m)}} \sup_{f \in \cF} \left(\frac{1}{m} \sum_{i=1}^m f(\rv{s_i}) - \EE_{\rv{S}} \left[\frac{1}{m} \sum_{i=1}^m f(\rv{s_i})\right]\right)
	\le \frac{C \GC_{D^{(m)}}(\cF)}{\sqrt{1 - 2\alpha_{\log}(D^{(m)})}},
	\end{equation}
	where $C > 0$ is a universal constant.
\end{theorem}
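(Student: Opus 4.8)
The plan is to reduce, by symmetrization with a ghost sample, to controlling a \emph{signed} empirical process, and then to import the Gaussian complexity bound by inserting Rademacher (hence Gaussian) weights through a change of measure whose cost is governed by the log-coefficient. Write $\mu = D^{(m)}\otimes D^{(m)}$ for the law of an independent pair $(\rv{S},\rv{S}')$, both drawn from $D^{(m)}$. Since every marginal of $D^{(m)}$ equals $D$, the ghost sample has the same mean as $\rv{S}$, so Jensen's inequality gives
\[
\EE_{\rv{S}}\sup_{f\in\cF}\left(\frac1m\sum_{i=1}^m f(\rv{s}_i)-\EE\frac1m\sum_{i=1}^m f(\rv{s}_i)\right)\le\EE_{(\rv{S},\rv{S}')\sim\mu}\sup_{f\in\cF}\frac1m\sum_{i=1}^m\left(f(\rv{s}_i)-f(\rv{s}_i')\right).
\]
The important point is that this step uses only identical marginals and convexity, so it survives the dependence unchanged; denote the right-hand integrand by $\Phi(\rv{S},\rv{S}')$.

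The crux is the next step. For a sign pattern $\sigma\in\{-1,1\}^m$, swapping $\rv{s}_i\leftrightarrow\rv{s}_i'$ on the coordinates with $\sigma_i=-1$ converts $\Phi$ into the signed process $\frac1m\sum_i\sigma_i(f(\rv{s}_i)-f(\rv{s}_i'))$. In the i.i.d.\ case each such swap preserves $\mu$, and this is precisely what permits inserting free Rademacher signs; under dependence a single-coordinate swap changes the density, so standard symmetrization breaks. The key computation I would carry out is to show that the log--density change produced by swapping coordinate $i$ is a telescoping sum of mixed second differences of $\log P_{D^{(m)}}$, taken over the coordinates on which $\rv{S}$ and $\rv{S}'$ disagree; by the very definition of the log-influence each such second difference is at most $4\,I^{\log}_{j,i}(D^{(m)})$, so summing over $j$ bounds the per-swap cost by the log-coefficient. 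Averaging over $\sigma$ produces the coordinate-wise symmetrized law $\bar\mu$, and one compares $\EE_\mu[\Phi]$ to $\EE_{\bar\mu}[\Phi]$ through the Radon--Nikodym derivative $d\mu/d\bar\mu$. An $L^2$ (equivalently, exponential-moment) control of this derivative, in which squaring the density \emph{doubles} the influence budget, is exactly what forces the threshold $\alpha_{\log}<1/2$ and yields the factor $(1-2\alpha_{\log}(D^{(m)}))^{-1/2}$: the governing series converges precisely when $2\alpha_{\log}<1$ and sums to $(1-2\alpha_{\log})^{-1}$, the square root arising from Cauchy--Schwarz.

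Once the signs are in place, the triangle inequality for the supremum together with re-integrating $\sigma$ bounds the signed process by twice the Rademacher complexity $\Rad_{D^{(m)}}(\cF)$ of the sample, and the standard comparison $\Rad_S(\cF)\le\sqrt{\pi/2}\,\GC_S(\cF)$ (valid with the same chaining/covering constants, as noted in the text preceding this theorem) upgrades this to $\GC_{D^{(m)}}(\cF)$. Combining the three stages gives the stated inequality with a universal constant $C$.

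The main obstacle is the change-of-measure step. Unlike the concentration statement in Theorem~\ref{thm:dob-conc}, which needs only $\alpha<1$, inserting signs requires \emph{quantitative} control of the density distortion caused by coordinate swaps, and it is the second-moment estimate underlying a clean Cauchy--Schwarz bound that degrades the admissible range to $\alpha_{\log}<1/2$. I expect the genuine technical care to lie in (i) writing the per-swap log-density change as a sum of mixed second differences and matching it term-by-term to the log-influence definition, and (ii) organizing the average over $\sigma$ so that the supremum is treated additively rather than squared, so that the final bound stays \emph{linear} in the Gaussian complexity rather than in its square root.
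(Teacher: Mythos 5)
Your first step (ghost sample plus Jensen) is exactly how the paper's proof begins, and your intuition that the influence budget gets \emph{doubled} because two independent copies are in play is also correct in spirit: it is the content of the paper's Lemma~\ref{lem:subg-log}, where the log-ratio for the conditional sign vector splits into a $P_{\rv{S}}$ term and a $P_{\rv{S}'}$ term, each bounded by $I^{\log}_{j,i}(D^{(m)})$. But the mechanism you build around this---a global change of measure from $\mu = D^{(m)}\otimes D^{(m)}$ to the coordinate-wise symmetrized law $\bar\mu$, controlled via an $L^2$ bound on $d\mu/d\bar\mu$ and Cauchy--Schwarz---fails for two separate reasons. First, the claimed $L^2$ control of the density ratio is false: $\chi^2$-type divergences accumulate \emph{multiplicatively} across coordinates, and no bound on $\alpha_{\log}$ prevents this. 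Concretely, let $m=2n$ and let $D^{(m)}$ be a product of $n$ independent two-spin Ising pairs, $P(z_{2k-1},z_{2k}) \propto e^{\theta z_{2k-1}z_{2k}}$ with $0<\theta<1/2$. Then $\alpha_{\log}(D^{(m)})=\theta<1/2$, yet both $\mu$ and $\bar\mu$ factor over the pairs, so
\begin{equation*}
\EE_{\bar\mu}\left[\left(\frac{d\mu}{d\bar\mu}\right)^{2}\right]
= \prod_{k=1}^{n} \EE_{\bar\mu_k}\left[\left(\frac{d\mu_k}{d\bar\mu_k}\right)^{2}\right]
= \left(1+\chi^2(\mu_1\,\|\,\bar\mu_1)\right)^{n},
\end{equation*}
which grows exponentially in $m$ since $\mu_1\ne\bar\mu_1$ whenever $\theta\ne 0$. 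Second, even granting a bounded density ratio, Cauchy--Schwarz leaves you with $\EE_{\bar\mu}[\Phi^2]^{1/2}$, and this second moment is \emph{not} controlled by the Gaussian complexity: already for a singleton class $\cF=\{f\}$ one has $\GC_{D^{(m)}}(\cF)=0$, while $\EE_{\bar\mu}[\Phi^2]^{1/2} = \sqrt{2\,\mathrm{Var}_D(f)/m}>0$ for non-constant $f$. This is precisely your worry (ii), and it is not an organizational difficulty; it is fatal to any argument that converts the dependence into a multiplicative correction against the supremum (a Donsker--Varadhan variant fails for the same reason, since $\mathrm{KL}(\mu\|\bar\mu)$ also grows linearly in $m$ in the example above).

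The paper's proof never compares two measures. It makes the measure-preserving change of variables $(\rv{S},\rv{S}',\rv{\sigma})\leftrightarrow(\rv{T},\rv{T}',\rv{\sigma})$ and then \emph{conditions}: writing $\rv{\sigma}_{T,T'}$ for the conditional law of the signs given $(\rv{T},\rv{T}')=(T,T')$, the symmetrized quantity is bounded by $2\,\EE_{\rv{T},\rv{T}'}\eCom_{\rv{T}}^{\rv{\sigma}_{\rv{T},\rv{T}'}}(\cF)$ exactly, with no Radon--Nikodym factor. The dependence is now carried by the sign vector itself: $\rv{\sigma}_{T,T'}$ is zero-mean by symmetry, has $I^{\log}_{j,i}(\rv{\sigma}_{T,T'})\le 2I^{\log}_{j,i}(D^{(m)})$ (Lemma~\ref{lem:subg-log}), hence Dobrushin coefficient at most $2\alpha_{\log}(D^{(m)})<1$, hence by Theorem~\ref{thm:dob-conc} it is a $C/(1-2\alpha_{\log}(D^{(m)}))$-subGaussian vector---this is where the threshold $1/2$ and the factor $(1-2\alpha_{\log})^{-1/2}$ actually come from. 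The ingredient your proposal is missing entirely is how to pass from a \emph{dependent} subGaussian sign vector back to the Gaussian complexity while staying linear in $\GC$: the comparison $\Rad_S(\cF)\le\sqrt{\pi/2}\,\GC_S(\cF)$ you invoke applies only to i.i.d.\ Rademacher signs. The paper instead applies Talagrand's majorizing measure theorem (Theorem~\ref{thm:majorizing-measure}) to the process $\rv{w}_f=\frac1m\sum_i \rv{\sigma}_i f(t_i)$, obtaining $\eCom_{T}^{\rv{\sigma}_{T,T'}}(\cF)\le C\,\eGC_T(\cF)/\sqrt{1-2\alpha_{\log}(D^{(m)})}$ (Lemma~\ref{lem:subg-implies-rad}), and finishes with Lemma~\ref{lem:bnd-complexities}, which gives $\GC_{\rv{T}}(\cF)\le 2\GC_{D^{(m)}}(\cF)$. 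Comparison of a subGaussian process to its matching Gaussian process---not a change of measure---is what keeps the final bound linear in the Gaussian complexity.
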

The proof appears in Section~\ref{sec:pr-uc}. Theorem~\ref{thm:main} follows from Theorem~\ref{thm:pairwise} simply by applying a McDiarmind-like inequality on weakly correlated data satisfying Dobrushin's condition (Theorem~\ref{thm:dob-conc}).


\section{Proofs for Section~\ref{sec:uc}} \label{sec:pr-uc}

In Section~\ref{sec:prel-stochastic-proc} we present some preliminaries for the proof. In Section~\ref{pr:thm-pairwise} we prove the main result, Theorem~\ref{thm:pairwise}. Then, in Section~\ref{sec:pr-influence-compare} we present the proof of Lemma~\ref{lem:inf-bnd} and in Section~\ref{sec:pr-thm-main-cor} the proof of Theorem~\ref{thm:main}.

\subsection{Preliminaries: sub Gaussian distributions and stochastic processes} \label{sec:prel-stochastic-proc}

A joint distribution $P^{(m)}$ over $\mathbb{R}^m$ is a $K^2$-subGaussian if has subGaussian tails in any direction:
\begin{definition}
	A zero-mean distribution $P^{(m)}$ over $\mathbb{R}^m$ is a $K^2$-subGaussian if for any $\theta \in \mathbb{R}^n$ ($\theta \ne 0$) and any $t > 0$,
	\[
	\Pr_{\rv{w} \sim P^{(m)}}\left[ \left|\sum_{i=1}^m \theta_i \rv{w}_i\right| > t \right]
	\le 2 \exp \left( \frac{-t^2}{2 K^2 \sum_{i=1}^m \theta_i^2}\right).
	\]
\end{definition}

A \emph{stochastic process} is a collection of joint random variables, $\{\rv{w}_i \}_{i \in I}$, taking values in $\mathbb{R}$, with some (possibly infinite) index-set $I$. A basic quantity of interest when talking about stochastic processes is the supremum, and in particular, the expected supremum, $\EE \sup_{i \in I} \rv{w}_i$. We will be focusing on \emph{zero-mean} processes, namely, those which satisfy $\EE \rv{w}_i = 0$ for all $i \in I$. We present two important types of stochastic processes: Gaussian and subGaussian processes. A \emph{Gaussian} process is a stochastic process where the variables are jointly Gaussian, namely, for any finite $U \subseteq I$, the collection $\{\rv{w}_i \}_{i \in U}$ is a multivariate Gaussian variable. A \emph{subGaussian process} is a stochastic process for which $w_i - w_j$ is subGaussian for all $i,j \in I$. The following statement by \citet{talagrand1996majorizing} upper bounds the expected maximum of a subGaussian process by that of a corresponding Gaussian process:

\begin{theorem}[The majorizing measure theorem] \label{thm:majorizing-measure}
	Fix $I$ to be some index set and let $\{\rv{w}_i\}_{i \in I}$ and $\{\rv{g}_i\}_{i \in I}$ be subGaussian and Gaussian zero-mean processes, respectively. For any $i,j \in I$, let $\sigma_{ij}^2$ denote the variance of $\rv{g}_i - \rv{g}_j$. Assume that for any $i,j \in I$, $\rv{w}_i - \rv{w}_j$ is a $\sigma_{ij}^2$-subGaussian random variable. Then,
	\[
	\EE\left[ \sup_{i \in I} \rv{w}_i \right]
	\le C \EE\left[ \sup_{i \in I} \rv{g}_i \right],
	\]
	for a universal constant $C>0$.
\end{theorem}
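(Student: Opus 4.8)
The plan is to factor the comparison through Talagrand's $\gamma_2$ functional, a purely geometric (distribution-free) quantity that simultaneously upper-bounds the subGaussian supremum and lower-bounds the Gaussian supremum. Equip $I$ with the canonical metric $d(i,j) = \sigma_{ij}$; by hypothesis this is precisely the $L^2$ metric of the Gaussian process $\{\rv{g}_i\}$, and it also controls the subGaussian increments of $\{\rv{w}_i\}$, since $\rv{w}_i - \rv{w}_j$ is $\sigma_{ij}^2$-subGaussian. Recall that an admissible sequence is an increasing sequence $(\mathcal{A}_n)_{n \ge 0}$ of partitions of $I$ with $|\mathcal{A}_0| = 1$ and $|\mathcal{A}_n| \le 2^{2^n}$, and
\[
\gamma_2(I, d) = \inf_{(\mathcal{A}_n)} \sup_{i \in I} \sum_{n \ge 0} 2^{n/2} \Delta(\mathcal{A}_n(i)),
\]
where $\mathcal{A}_n(i)$ is the block of $\mathcal{A}_n$ containing $i$ and $\Delta(\cdot)$ its $d$-diameter. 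The theorem then reduces to two inequalities, $\EE \sup_i \rv{w}_i \le C_1 \gamma_2(I,d)$ and $\gamma_2(I,d) \le C_2 \EE \sup_i \rv{g}_i$, which chain together because both processes share the same metric $d$.

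For the upper bound I would run the generic chaining argument. Fix a near-optimal admissible sequence and, for each $i$, a sequence of pins $\pi_n(i) \in \mathcal{A}_n(i)$ with $\pi_0$ a fixed root, so that $\rv{w}_i - \rv{w}_{\pi_0} = \sum_{n \ge 0}(\rv{w}_{\pi_{n+1}(i)} - \rv{w}_{\pi_n(i)})$ telescopes, each increment being subGaussian with proxy at most $\Delta(\mathcal{A}_n(i))$. At level $n$ there are at most $2^{2^{n+2}}$ distinct pairs $(\pi_{n+1}(i),\pi_n(i))$, so a union bound over the subGaussian tails, with the deviation threshold at level $n$ taken proportional to $2^{n/2}\Delta(\mathcal{A}_n(i))$ (so that $\log(\#\text{pairs}) \approx 2^{n+2}$ is absorbed by the squared threshold), shows that with overwhelming probability every chain stays within $\sum_n 2^{n/2}\Delta(\mathcal{A}_n(i)) \le \gamma_2(I,d)$ of the root. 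Integrating the tail yields $\EE \sup_i \rv{w}_i \le C_1 \gamma_2(I,d)$. This direction uses only the subGaussian increment hypothesis and is the routine half.

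The lower bound $\gamma_2(I,d) \le C_2 \EE\sup_i \rv{g}_i$ is the substance of the majorizing measure theorem and is where essentially all the difficulty lies. The strategy is to build an admissible sequence witnessing $\gamma_2$ by a top-down recursive partitioning that is driven by the Gaussian supremum itself. The two engines are (i) Sudakov minoration --- if $I$ contains $N$ points pairwise $d$-separated by $\varepsilon$, then $\EE\sup_i \rv{g}_i \ge c\varepsilon\sqrt{\log N}$ --- and (ii) Gaussian concentration (Borell--Tsirelson--Ibragimov--Sudakov) used to decouple the suprema over well-separated balls. The core lemma is a growth condition: if $B_1, \dots, B_N$ are balls of radius $\sim r/4$ centered at points that are $\sim r$-separated, then
\[
\EE\sup_{i \in \cup_\ell B_\ell}\rv{g}_i \ge c\, r\sqrt{\log N} + \min_\ell \EE\sup_{i \in B_\ell}\rv{g}_i,
\]
so the separation contributes an additive $r\sqrt{\log N}$ on top of the worst internal supremum. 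Iterating this down the recursively built partition tree, and accounting at scale $2^{-n}$ for the weight $2^{n/2}$ in $\gamma_2$, reassembles the sum $\sum_n 2^{n/2}\Delta(\mathcal{A}_n(i))$ as a lower bound for $\EE\sup_i \rv{g}_i$.

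The main obstacle is unquestionably this lower bound: establishing and iterating the growth condition so that local Gaussian suprema combine \emph{additively} rather than being merely dominated by the largest one. A naive union/triangle-inequality argument yields only Dudley's estimate $\EE\sup_i\rv{g}_i \le C\int_0^\infty \sqrt{\log \mathcal{N}(I,d,\varepsilon)}\,d\varepsilon$, which is not tight and cannot be reversed; extracting the extra $r\sqrt{\log N}$ genuinely requires the concentration of the Gaussian maximum about its mean to push individual ball-suprema apart with high probability, and the recursion must be organized --- via a suitable functional on subsets and a careful balancing of radii across scales --- so that the per-level errors telescope rather than accumulate. Since this is exactly the Fernique--Talagrand theorem, in the write-up I would either reproduce Talagrand's partitioning proof in full or, more economically, invoke it as the known majorizing measure theorem and present in detail only the chaining upper bound, which is all the comparison additionally needs.
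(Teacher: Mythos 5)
Your outline is correct and is the standard route: the comparison factors through Talagrand's $\gamma_2$ functional exactly as you describe, with generic chaining along a near-optimal admissible sequence giving $\EE\sup_{i}\rv{w}_i \le C_1\gamma_2(I,d)$ from the subGaussian increment hypothesis alone, and the genuinely hard half being the lower bound $\gamma_2(I,d) \le C_2\,\EE\sup_i \rv{g}_i$ via Sudakov minoration, Gaussian concentration, and the recursive growth-condition partitioning. The paper offers no proof of this statement at all---it is quoted as a known result of \citet{talagrand1996majorizing}---so your closing suggestion to invoke the majorizing measure theorem rather than reproduce the partitioning argument is precisely what the paper does, and your sketch contains no gap beyond the standard (and acknowledged) deferral of that argument, modulo routine caveats such as reducing to finite index sets for the telescoping chain.
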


\subsection{Proof of Theorem~\ref{thm:pairwise}} \label{pr:thm-pairwise}

Here is the proof structure: first, we bound the left hand side of \eqref{eq:main} by the $\rv{\sigma}$-complexity of $\mathcal{F}$ (Eq.~\eqref{eq:p-rad}), where $\rv{\sigma}$ does not consist of i.i.d random signs, but rather it is a subGaussian distribution with zero mean (Lemma~\ref{lem:subg-log} and the explanation afterwards). Furthermore, this $\rv{\sigma}$-complexity is not with respect to $D^{(m)}$ but rather with respect to a different distribution. Then, we bound this $\rv{\sigma}$-complexity by the Gaussian complexity of $\mathcal{F}$, with respect to $D^{(m)}$ (Lemma~\ref{lem:subg-implies-rad} and Lemma~\ref{lem:bnd-complexities}).

Assume that $\rv{S} = (\rv{s}_i)_{i \in [m]} \sim D^{(m)}$, and $\rv{S'} = (\rv{s'}_i)_{i \in [m]} $ is another i.i.d. random variable drawn from $D^{(m)}$.
The following holds:
\begin{multline} \label{eq:sup-S-Sp}
\EE_{\rv{S}} \sup_{f \in \cF} \left(
\frac{1}{m}\sum_{i=1}^m f(\rv{s}_i) - \EE_{\rv{S}} \frac{1}{m}\sum_{i=1}^m f(\rv{s}_i) \right)
= \EE_{\rv{S}} \sup_{f \in \cF} \left(
\frac{1}{m}\sum_{i=1}^m f(\rv{s}_i) - \EE_{\rv{S}'} \frac{1}{m}\sum_{i=1}^m f(\rv{s}'_i) \right) \\
\le \EE_{\rv{S}, \rv{S'}} \sup_{f \in \cF} \left(\frac{1}{m}\sum_{i=1}^m f(\rv{s}_i) - \frac{1}{m}\sum_{i=1}^m f(\rv{s}'_i)\right).
\end{multline}
Indeed, one can verify that the supremum of a sum is at most the sum of supremums.
We randomly shuffle $\rv{S}$ and $\rv{S}'$, to create samples $\rv{T}$ and $\rv{T}'$. 
Formally, $m$ i.i.d and uniform random signs are drawn, $\rv{\sigma} = (\rv{\sigma}_1, \dots, \rv{\sigma}_m) \in \{-1,1\}^m$. Then, $\rv{T} = (\rv{t}_1,\dots, \rv{t}_m)$ and $\rv{T}' = (\rv{t}'_1,\dots, \rv{t}'_m)$ are defined as functions of $\rv{S}$, $\rv{S}'$ and $\rv{\sigma}$, as follows: for any $i \in \{1,\dots, m\}$, if $\rv{\sigma}_i = 1$ then $\rv{t}_i = \rv{s}_i$ and $\rv{t}'_i = \rv{s}'_i$, and otherwise, $\rv{t}_i = \rv{s}'_i$ and $\rv{t}'_i = \rv{s}_i$.

For any $T$ and $T'$ denote by $\rv{\sigma}_{T,T'}$ a random variable sampled from $P_{\rv{\sigma} \mid \rv{T} \rv{T}'}(\cdot \mid T,T')$, the conditional distribution of $\rv{\sigma}$, conditioned on $\rv{T} = T$ and $\rv{T}' = T'$.
We bound the right hand side of \eqref{eq:sup-S-Sp}, substituting $\rv{S}$ and $\rv{S}'$ with $\rv{T}$ and $\rv{T}'$, in a \emph{change of measure} argument:
\begin{multline} \label{eq:p-rad}
\EE_{\rv{S},\rv{S}'} \left[ \sup_{f \in \cF} \left(\frac{1}{m}\sum_{i=1}^m f(\rv{s}_i) - \frac{1}{m}\sum_{i=1}^m f(\rv{s}'_i) \right)\right] 
= \EE_{\rv{T}, \rv{T'}, \rv{\sigma}}\left[ \sup_{f \in \cF} \frac{1}{m}\sum_{i=1}^m \rv \sigma_i \left(f(\rv{t}_i) - f(\rv{t}'_i)\right) \right] \\
\le \EE_{\rv{T}, \rv{T}', \rv{\sigma}}\left[ \sup_{f \in \cF} \frac{1}{m}\sum_{i=1}^m \rv \sigma_i f(\rv{t}_i) \right]
+ \EE_{\rv{T}, \rv{T'}, \rv{\sigma}}\left[ \sup_{f \in \cF} \frac{1}{m}\sum_{i=1}^m \rv \sigma_i (-f(\rv{t}'_i)) \right] \\
=_{(*)} 2 \EE_{\rv{T}, \rv{T'}, \rv{\sigma}}\left[ \sup_{f \in \cF} \frac{1}{m}\sum_{i=1}^m \rv \sigma_i f(\rv{t}_i) \right]
= 2 \EE_{\rv{T}, \rv{T}'} \eCom_{\rv{T}}^{\rv{\sigma}_{\rv{T},\rv{T'}}}(\cF),
\end{multline}
where the equality $~(*)~$ follows from the fact that the joint distribution of $\rv T$ and $\rv\sigma$ equals the joint distribution of $\rv T'$ and $-\rv\sigma$. 
Note that $\rv{\sigma}_{T,T'}$ is generally not a product distribution, however, we can show that it is a zero mean subGaussian.

\begin{lemma} \label{lem:subg-log}
	For any $T$ and $T'$, $\rv{\sigma}_{T,T'}$ is zero-mean and satisfies $I_{j, i}^{\log}(\rv{\sigma}_{T,T'}) \le 2 I_{j, i}^{\log}(D^{(m)})$ for any $i \ne j$.
\end{lemma}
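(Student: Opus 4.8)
The plan is to first identify the conditional law of $\rv{\sigma}$ given $(\rv{T},\rv{T}') = (T,T')$ explicitly, and then read off both claims directly from its formula.

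\textbf{Step 1 (conditional density).} For a fixed sign pattern $\sigma \in \{-1,1\}^m$, the pair $(T,T')$ determines $(S,S')$ coordinate-wise: $(s_i,s'_i) = (t_i,t'_i)$ if $\sigma_i = 1$, and $(s_i,s'_i) = (t'_i,t_i)$ if $\sigma_i = -1$. Write $S(\sigma), S'(\sigma)$ for the resulting samples. Since $\rv{S},\rv{S}'$ are i.i.d.\ from $D^{(m)}$ while $\rv{\sigma}$ is uniform, and since for each fixed $\sigma$ the map $(T,T') \mapsto (S,S')$ is a coordinate-wise bijection, I would conclude
\[
P_{\rv{\sigma} \mid \rv{T}\rv{T}'}(\sigma \mid T,T') \;\propto\; P_{D^{(m)}}[S(\sigma)]\, P_{D^{(m)}}[S'(\sigma)] \;=:\; Q(\sigma),
\]
the normalizer being the sum of $Q$ over the $2^m$ sign patterns.

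\textbf{Step 2 (zero mean).} Flipping all signs, $\sigma \mapsto -\sigma$, simply interchanges the two copies, i.e.\ $S(-\sigma) = S'(\sigma)$ and $S'(-\sigma) = S(\sigma)$, so $Q(-\sigma) = Q(\sigma)$. Hence $\rv{\sigma}_{T,T'}$ has a law symmetric under negation, giving $\EE[\rv{\sigma}_{T,T'}] = 0$.

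\textbf{Step 3 (log-influence).} Fix $i \ne j$. In the four-point ratio defining $I^{\log}_{j,i}(\rv{\sigma}_{T,T'})$ the normalizer cancels, so I may work with $Q$. Over the binary alphabet the ratio is trivial unless the two values at $i$ (and at $j$) differ, so it suffices to take $\sigma_i' = -\sigma_i$ and $\sigma_j' = -\sigma_j$. Fixing the remaining coordinates $\sigma_{-i-j}$ pins the background of the $S$-copy on $[m]\setminus\{i,j\}$ to some $w$ and that of the $S'$-copy to some $w'$, which need not agree. Writing $a_i = t_i$, $b_i = t'_i$, $a_j = t_j$, $b_j = t'_j$, and writing $Q(\pm,\pm)$ for $Q$ evaluated at the pattern with $(\sigma_i,\sigma_j)$ set to the indicated signs, substitution gives the factorization
\[
\frac{Q(+,+)\,Q(-,-)}{Q(-,+)\,Q(+,-)}
= \frac{P_{D^{(m)}}[a_i a_j w]\,P_{D^{(m)}}[b_i b_j w]}{P_{D^{(m)}}[b_i a_j w]\,P_{D^{(m)}}[a_i b_j w]}
\cdot \frac{P_{D^{(m)}}[a_i a_j w']\,P_{D^{(m)}}[b_i b_j w']}{P_{D^{(m)}}[a_i b_j w']\,P_{D^{(m)}}[b_i a_j w']}.
\]
Each factor is exactly an expression whose logarithm is controlled by $4\, I^{\log}_{j,i}(D^{(m)})$ (taking $z_{-i-j}$ equal to $w$ and $w'$ respectively). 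Taking logs, adding the two bounds to get $8\, I^{\log}_{j,i}(D^{(m)})$, and multiplying by the $\tfrac14$ prefactor yields $I^{\log}_{j,i}(\rv{\sigma}_{T,T'}) \le 2\, I^{\log}_{j,i}(D^{(m)})$.

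The main obstacle is the bookkeeping in Steps 1 and 3: one must track carefully that conditioning on $(T,T')$ together with $\sigma_{-i-j}$ leaves \emph{two} separate background configurations, $w$ for the $S$-copy and $w'$ for the $S'$-copy, which are generically distinct. It is precisely this splitting of a single four-point ratio for $\rv{\sigma}$ into a product of two four-point ratios for $D^{(m)}$ that produces the factor $2$. The remaining manipulations (cancellation of the normalizer and the reduction to antipodal values at $i$ and $j$) are routine.
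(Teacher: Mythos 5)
Your proposal is correct and follows essentially the same route as the paper's proof: the zero-mean claim via the symmetry of the (unnormalized) conditional law under the global flip $\sigma \mapsto -\sigma$, and the influence bound via cancellation of the normalizer in the four-point ratio followed by factorization into two four-point ratios for $D^{(m)}$ (one per copy, with distinct backgrounds), each contributing $4\,I^{\log}_{j,i}(D^{(m)})$ and hence the factor $2$ after the $\tfrac14$ prefactor. Your version merely makes explicit what the paper writes abstractly through $P_{\rv{\sigma},\rv{T},\rv{T}'}$ and the notation $S(\sigma,T,T')$, $S'(\sigma,T,T')$.
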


\begin{proof}
	Fix $T$ and $T'$. By definition of $\rv{\sigma}$, for any $\sigma \in \{-1,1\}^m$, 
	$\Pr[\rv{T} = T, \rv{T}' = T', \rv{\sigma} = \sigma] = 
	\Pr[\rv{T} = T, \rv{T}' = T', \rv{\sigma} = -\sigma]$. This implies that $\Pr[\rv{\sigma}_{T,T'} = \sigma] = \Pr[\rv{\sigma}_{T,T'} = -\sigma]$, which implies that $\rv{\sigma}_{T,T'}$ is zero-mean.
	
	Next, we prove the inequality on the influence. For any $\sigma \in \{-1,1\}^m$, define $S(\sigma,T,T')$ and $S'(\sigma,T,T')$ as the values that $\rv{S}$ and $\rv{S}'$ get when $\rv{T} = T$, $\rv{T}' = T'$ and $\rv{\sigma} = \sigma$.
	
	Fix $i \ne j$ and fix $\sigma_i, \sigma_i', \sigma_j, \sigma_j' \in \{-1,1\}$ and $\sigma_{-i-j} \in \{-1,1\}^{m-2}$. Then,
	\begin{align*}
	&\frac{1}{4}\log \left(\frac{
		P_{\rv{\sigma}_{T,T'}}(\sigma_{-i-j} \sigma_i \sigma_j)
		P_{\rv{\sigma}_{T,T'}}(\sigma_{-i-j} \sigma_i' \sigma_j')}{
		P_{\rv{\sigma}_{T,T'}}(\sigma_{-i-j} \sigma_i' \sigma_j)
		P_{\rv{\sigma}_{T,T'}}(\sigma_{-i-j} \sigma_i \sigma_j')
	}\right) \\
	&=\frac{1}{4}\log \left(\frac{
		P_{\rv{\sigma}, \rv{T}, \rv{T}'}(\sigma_{-i-j} \sigma_i \sigma_j, T, T')
		P_{\rv{\sigma}, \rv{T}, \rv{T}'}(\sigma_{-i-j} \sigma_i' \sigma_j', T, T')}{
		P_{\rv{\sigma}, \rv{T}, \rv{T}'}(\sigma_{-i-j} \sigma_i' \sigma_j, T, T')
		P_{\rv{\sigma}, \rv{T}, \rv{T}'}(\sigma_{-i-j} \sigma_i \sigma_j', T, T')
	}\right) \\
	&= \frac{1}{4}\log \left(\frac{
		P_{\rv{S}}(S(\sigma_{-i-j} \sigma_i \sigma_j, T, T'))
		P_{\rv{S}}(S(\sigma_{-i-j} \sigma_i' \sigma_j', T, T'))}{
		P_{\rv{S}}(S(\sigma_{-i-j} \sigma_i' \sigma_j, T, T'))
		P_{\rv{S}}(S(\sigma_{-i-j} \sigma_i \sigma_j', T, T'))
	}\right) \\
	&+\frac{1}{4}\log \left(\frac{
		P_{\rv{S}'}(S'(\sigma_{-i-j} \sigma_i \sigma_j, T, T'))
		P_{\rv{S}'}(S'(\sigma_{-i-j} \sigma_i' \sigma_j', T, T'))}{
		P_{\rv{S}'}(S'(\sigma_{-i-j} \sigma_i' \sigma_j, T, T'))
		P_{\rv{S}'}(S'(\sigma_{-i-j} \sigma_i \sigma_j', T, T'))
	}\right) \\
	&\le 2 I_{j, i}^{\log}(D^{(m)}),
	\end{align*}
	where the last step follows from the definition of the log-influences. The proof follows.
\end{proof}

If follows from Lemma\ref{lem:inf-bnd} and Lemma~\ref{lem:subg-log} that $\alpha(\rv{\sigma}_{T,T'}) \le \alpha^{\log}(\rv{\sigma}_{T,T'}) \le 2 \alpha^{\log}(D^{(m)})$. From Theorem~\ref{thm:dob-conc} it follows that $\rv{\sigma}_{T,T'}$ is a $C/(1-\alpha(\rv{\sigma}_{T,T'}))$-subGaussian, hence it is a $C/(1-2\alpha^{\log}(D^{(m)}))$-subGaussian.
We will use this to show that the $\rv{\sigma}_{T,T'}$ complexity of $\mathcal{F}$ can be bounded in terms of the Gaussian complexity. This will bound the right hand side of \eqref{eq:p-rad}. The proof follows from the Fernique-Talagrand Majorizing measure theory.

\begin{lemma} \label{lem:subg-implies-rad}
	Fix $z \in \cZ^m$. If $\rv{\tau}$ is a $K^2$-subgaussian, then,
	\[
	\eCom^{\rv{\tau}}_z(\cF)
	\le CK \eGC_{z}(\cF),
	\]
	(for some universal constant $C > 0$).
	In particular, $\eCom^{\rv{\sigma}_{T,T'}}_z(\cF)
	\le C \eGC_{z}(\cF) / \sqrt{1-2\alpha^{\log}(D^{(m)})}$.
\end{lemma}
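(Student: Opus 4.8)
The plan is to realize both sides of the claimed inequality as expected suprema of zero-mean stochastic processes indexed by $f \in \cF$, and then invoke the majorizing measure theorem (Theorem~\ref{thm:majorizing-measure}). Concretely, I would fix $z = (z_1,\dots,z_m) \in \cZ^m$ and, for each $f \in \cF$, set $\rv{w}_f = \frac{1}{m}\sum_{i=1}^m \rv{\tau}_i f(z_i)$ and $\rv{g}_f = \frac{1}{m}\sum_{i=1}^m \rv{g}_i f(z_i)$, where $\rv{g} \sim \normal(0, I_m)$. By the definitions of $\eCom$ and $\eGC$ we then have $\EE\big[\sup_{f \in \cF} \rv{w}_f\big] = \eCom^{\rv{\tau}}_z(\cF)$ and $\EE\big[\sup_{f \in \cF} \rv{g}_f\big] = \eGC_z(\cF)$, so the lemma reduces to comparing these two expected suprema via Theorem~\ref{thm:majorizing-measure}.

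Next I would verify the hypotheses of that theorem. Since $\rv{\tau}$ is $K^2$-subGaussian it is in particular zero-mean, so each $\rv{w}_f$ is zero-mean, and the $\rv{g}_f$ form a zero-mean Gaussian process. For the increments, I would write $\rv{w}_f - \rv{w}_{f'} = \sum_{i=1}^m \theta_i \rv{\tau}_i$ with $\theta_i = (f(z_i) - f'(z_i))/m$, and apply the $K^2$-subGaussian tail bound in the direction $\theta$: this shows $\rv{w}_f - \rv{w}_{f'}$ is $K^2 \big(\sum_{i=1}^m \theta_i^2\big)$-subGaussian, where $\sum_{i=1}^m \theta_i^2 = \frac{1}{m^2}\sum_{i=1}^m (f(z_i) - f'(z_i))^2$. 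The matching Gaussian increment $\rv{g}_f - \rv{g}_{f'}$ is centered Gaussian with variance exactly $\sum_{i=1}^m \theta_i^2$.

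To line up the subGaussian parameter of the $\rv{w}$-increments with a genuine Gaussian variance, I would apply Theorem~\ref{thm:majorizing-measure} using the rescaled Gaussian process $K\rv{g}_f$, whose increment variance is $K^2\big(\sum_{i=1}^m \theta_i^2\big)$ --- precisely the subGaussian parameter of $\rv{w}_f - \rv{w}_{f'}$. The theorem then gives $\EE\big[\sup_f \rv{w}_f\big] \le C\, \EE\big[\sup_f K\rv{g}_f\big] = CK\, \EE\big[\sup_f \rv{g}_f\big]$, i.e.\ $\eCom^{\rv{\tau}}_z(\cF) \le CK\, \eGC_z(\cF)$, as desired. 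For the ``in particular'' clause I would simply substitute the value $K^2 = C'/(1 - 2\alpha^{\log}(D^{(m)}))$ established for $\rv{\sigma}_{T,T'}$ immediately above the lemma (via Lemma~\ref{lem:subg-log} and Theorem~\ref{thm:dob-conc}), obtaining $\eCom^{\rv{\sigma}_{T,T'}}_z(\cF) \le C''\, \eGC_z(\cF)/\sqrt{1 - 2\alpha^{\log}(D^{(m)})}$.

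The step demanding the most care is this matching of the subGaussian increment parameter to an honest Gaussian variance, which is what forces the scaling by $K$; but there is no deep obstacle, since Theorem~\ref{thm:majorizing-measure} does all the heavy lifting and applies verbatim to an arbitrary (possibly infinite) index set $\cF$. The only remaining points, handled implicitly as elsewhere in the paper, are the usual measurability and integrability conventions for the suprema of these processes.
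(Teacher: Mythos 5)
Your proposal is correct and follows essentially the same route as the paper's proof: the same processes $\rv{w}_f$ and $\rv{g}_f$, the same subGaussian bound on the increments $\rv{w}_f - \rv{w}_{f'}$ matched against the variance of the rescaled Gaussian process $K\rv{g}_f$, and the same application of Theorem~\ref{thm:majorizing-measure}, with the ``in particular'' clause obtained by plugging in the subGaussian constant for $\rv{\sigma}_{T,T'}$ established just before the lemma. No gaps to report.
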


\begin{proof}
	Define the random process, $\{ \rv{w}_f \}_{f \in \cF}$, where each $\rv{w}_f$ equals $\frac{1}{m}\sum_{i=1}^m f(z_i) \rv{\tau}_i$. Let $\rv{g} \sim N(0, \Id_m)$, and define the Gaussian process $\{\rv{g}_f \}_{f \in \cF}$ by $\rv{g}_f = \frac{1}{m}\sum_{i=1}^m \rv{g}_i f(z_i)$. 
	Note that by the definition of a subGaussian random variable, for any $f, f' \in \cF$ and any $t > 0$,
	\[
	\Pr\left[ \left|\rv{w}_f - \rv{w}_{f'}\right| > t \right]
	= \Pr\left[ \left|\frac{1}{m}\sum_{i=1}^m \rv{\tau}_i \left( f(z_i) - f'(z_i) \right)\right| > t \right]
	\le 2 \exp\left( \frac{m^2 t^2}{2 K^2 \sum_{i=1}^m \left( f(z_i) - f'(z_i) \right)^2} \right),
	\]
	which implies that $\rv{w}_f - \rv{w}_{f'}$ is a $K^2 \sum_{i=1}^m \left( f(z_i) - f'(z_i) \right)^2/m^2$-subGaussian.
	Additionally, 
	\[
	\mathrm{Var}(\rv{g}_f - \rv{g}_{f'})
	= \mathrm{Var}\left(\frac{1}{m}\sum_{i=1}^m \rv{g}_i \left( f(z_i) - f'(z_i) \right)\right)
	= \frac{1}{m^2}\sum_{i=1}^m \left( f(z_i) - f'(z_i) \right)^2.
	\]
	Theorem~\ref{thm:majorizing-measure} implies that
	\[
	\eCom^{\rv{\tau}}_z(\cF)
	= \EE \sup_{f \in \cF} \rv{w}_f \le C \EE\sup_{f \in \cF} K\rv{g}_f
	= CK \eGC_z(\cF).
	\]
\end{proof}


Lemma~\ref{lem:subg-implies-rad} implies that the right hand side of \eqref{eq:p-rad} is bounded as follows:
\begin{equation}\label{eq:15}
\EE_{\rv{T}, \rv{T}'} \eCom_{\rv{T}}^{\rv{\sigma}_{\rv{T},\rv{T'}}}(\cF)
\le \frac{C}{\sqrt{1 - 2\beta(D^{(m)})}}\EE_{\rv{T}, \rv{T}'} \eGC_{\rv{T}}(\cF) = 
\frac{C}{\sqrt{1 - 2\beta(D^{(m)})}}\GC_{\rv{T}}(\cF).
\end{equation}
We will bound this last term by the Gaussian complexity of $D^{(m)}$.
\begin{lemma} \label{lem:bnd-complexities}
The following holds:
\[
\GC_{\rv{T}}(\cF) \le 2 \GC_{\cD^{(m)}}(\cF).
\]	
\end{lemma}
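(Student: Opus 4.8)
The plan is to condition on the random signs $\rv{\sigma}$ and exploit the fact that, given $\rv{\sigma}$, the shuffled sample $\rv{T}$ splits into two independent blocks, each distributed as a marginal of $\cD^{(m)}$. Write $A = \{i : \rv{\sigma}_i = 1\}$ and $B = [m] \setminus A$. By the definition of the shuffle, conditioned on $\rv{\sigma}$ we have $\rv{T}_A = \rv{S}_A$ and $\rv{T}_B = \rv{S}'_B$; since $\rv{S}$ and $\rv{S}'$ are independent draws from $\cD^{(m)}$, the block $\rv{T}_A$ is distributed as the marginal of $\cD^{(m)}$ on the coordinates $A$, and $\rv{T}_B$ as the marginal on $B$. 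Because $\GC_{\rv{T}}(\cF) = \EE_{\rv{\sigma}} \EE_{\rv{T} \mid \rv{\sigma}} \eGC_{\rv{T}}(\cF)$, it suffices to bound the inner conditional expectation by $2\GC_{\cD^{(m)}}(\cF)$ for every fixed $\rv{\sigma}$.

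For a subset $A \subseteq [m]$ and $z \in \cZ^m$, I would introduce the partial empirical Gaussian complexity $\eGC^{A}_{z}(\cF) = \EE_{\rv{g}} \sup_{f \in \cF} \frac{1}{m}\sum_{i \in A} \rv{g}_i f(z_i)$, keeping the $1/m$ normalization, and establish two elementary facts. First, \emph{subadditivity}: since the supremum of a sum is at most the sum of the suprema, $\eGC_z(\cF) \le \eGC^{A}_z(\cF) + \eGC^{B}_z(\cF)$. Second, \emph{monotonicity}: $\eGC^{A}_z(\cF) \le \eGC_z(\cF)$. The latter follows by integrating out the Gaussian coordinates $\rv{g}_B$ first: for fixed $\rv{g}_A$, the map $\rv{g}_B \mapsto \sup_{f \in \cF} \frac{1}{m}\big(\sum_{i\in A}\rv{g}_if(z_i) + \sum_{i \in B}\rv{g}_if(z_i)\big)$ is convex, being a supremum of affine functions, so Jensen's inequality together with $\EE \rv{g}_B = 0$ gives $\EE_{\rv{g}_B}[\,\cdots] \ge \sup_{f \in \cF} \frac{1}{m}\sum_{i\in A}\rv{g}_if(z_i)$; averaging over $\rv{g}_A$ yields the claim.

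Combining these, for each fixed $\rv{\sigma}$ I would bound $\EE_{\rv{T}\mid \rv{\sigma}}\eGC_{\rv{T}}(\cF) \le \EE_{\rv{T}\mid\rv{\sigma}}\eGC^{A}_{\rv{T}}(\cF) + \EE_{\rv{T}\mid\rv{\sigma}}\eGC^{B}_{\rv{T}}(\cF)$ by subadditivity. The first term depends on $\rv{T}$ only through the block $\rv{T}_A$, whose law equals that of $\rv{S}_A$, so $\EE_{\rv{T}\mid\rv{\sigma}}\eGC^{A}_{\rv{T}}(\cF) = \EE_{\rv{S}}\eGC^{A}_{\rv{S}}(\cF)$; likewise $\EE_{\rv{T}\mid\rv{\sigma}}\eGC^{B}_{\rv{T}}(\cF) = \EE_{\rv{S}}\eGC^{B}_{\rv{S}}(\cF)$. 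Applying monotonicity inside the expectation, each of these is at most $\EE_{\rv{S}}\eGC_{\rv{S}}(\cF) = \GC_{\cD^{(m)}}(\cF)$, so the conditional expectation is at most $2\GC_{\cD^{(m)}}(\cF)$. Averaging over $\rv{\sigma}$ then gives $\GC_{\rv{T}}(\cF) \le 2\GC_{\cD^{(m)}}(\cF)$.

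There is no serious analytic obstacle; the argument is a symmetrization-style decomposition. The one point demanding care — and really the crux — is the bookkeeping on marginals: one must observe that conditioning on $\rv{\sigma}$ turns $\rv{T}$ into two blocks whose marginal laws on $A$ and $B$ coincide \emph{exactly} with those of $\cD^{(m)}$, which is precisely what allows the partial complexities to be compared against the full-sample complexity $\GC_{\cD^{(m)}}(\cF)$. Notably, the independence of the two blocks is not even required here — only that each block carries the correct marginal — whereas the factor of $2$ is intrinsic, arising because the split across $A$ and $B$ is controlled by two copies of the full-sample complexity rather than one.
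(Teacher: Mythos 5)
Your proposal is correct and is essentially the paper's own argument: the paper also derives the factor of $2$ from subadditivity of the supremum combined with a Jensen step that pads the missing coordinates with independent zero-mean Gaussians, only it performs the padding first (completing the mixed sum over $\rv{T}$ into two full Gaussian sums over $\rv{S}$ and $\rv{S}'$ simultaneously) and the split second, whereas you split into the blocks $A$ and $B$ first and then pad each block via your monotonicity lemma. The two proofs are the same argument modulo this reordering, and both use only the marginal laws of the blocks, never their independence.
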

\begin{proof}
	Recall that $\rv{T} = T(\rv{S},\rv{S}',\rv{\sigma})$ is a mixture of two samples $\rv{S}$ and $\rv{S}'$ drawn from $\cD^{(m)}$: $\rv{t}_i = \rv{s}_i$ if $\rv{\sigma}_i = 1$ and otherwise $\rv{t}_i = \rv{s}'_i$. 
	Fix $S, S' \in Z^m$ and $\sigma \in \{-1,1\}^m$, and let $T = T(S,S',\sigma)$. Taking expectation over $\rv{g \sim \normal(0, \Id_m)}$, one obtains:
	\begin{multline*}
		\EE_{\rv{g} \sim \normal(0,\Id_m)} \left[ \sup_{f \in \cF} \sum_{i=1}^m \rv{g}_i f(t_i) \right]
		= \EE_{\rv{g} \sim \normal(0,\Id_m)} \left[ \sup_{f \in \cF} \left( \sum_{i \colon \rv{\sigma}_i = 1} \rv{g}_i f(s_i) + \sum_{i \colon \rv{\sigma}_i = -1} \rv{g}_i f(s'_i) \right)\right] \\
		= \EE_{\rv{g} \sim \normal(0,\Id_m)} \left[ \sup_{f \in \cF} \left( \sum_{i \colon \rv{\sigma}_i = 1} \rv{g}_i f(s_i) 
		+ \sum_{i \colon \rv{\sigma}_i = -1} \rv{g}_i f(s'_i) 
		+ \EE_{\rv{g}' \sim \normal(0,\Id_m)} \left[
		\sum_{i \colon \rv{\sigma}_i = -1} \rv{g}'_i f(s_i) 
		+ \sum_{i \colon \rv{\sigma}_i = 1} \rv{g}'_i f(s'_i)
		\right]
		\right)\right] \\
		\le \EE_{\rv{g}, \rv{g}' \in \normal(0,\Id_m)} \left[ \sup_{f \in \cF} \left( \sum_{i \colon \rv{\sigma}_i = 1} \rv{g}_i f(s_i) 
		+ \sum_{i \colon \rv{\sigma}_i = -1} \rv{g}_i f(s'_i) 
		+ \sum_{i \colon \rv{\sigma}_i = -1} \rv{g}'_i f(s_i) 
		+ \sum_{i \colon \rv{\sigma}_i = 1} \rv{g}'_i f(s'_i)
		\right)\right] \\
		= \EE_{\rv{g}, \rv{g}' \sim \normal(0,\Id_m)} \left[ \sup_{f \in \cF} \left( \sum_{i = 1}^m \rv{g}_i f(s_i) 
		+ \sum_{i =1}^m \rv{g}'_i f(s'_i) 
		\right)\right] 
		\le 2 \EE_{\rv{g} \sim \normal(0,\Id_m)} \left[ \sup_{f \in \cF} \left( \sum_{i = 1}^m \rv{g}_i f(s_i) 
		\right)\right].
	\end{multline*}
	Taking expectation in both sides over $\rv{T} = T$ and $\rv{S} = S$, the result follows.
\end{proof}

The proof concludes by equations \eqref{eq:sup-S-Sp}, \eqref{eq:p-rad}, \eqref{eq:15} and Lemma~\ref{lem:bnd-complexities}.

\subsection{Proof of Lemma~\ref{lem:inf-bnd}} \label{sec:pr-influence-compare}

First, we bound the log-influences $I_{j, i}^{\log}(\rv{z})$ by $\beta_{j,i}(\rv{z})$ for an MRF with pairwise potentials. Assume $\rv{z}$ has pairwise potentials $\psi_{ij}(z_i,z_j)$ and node-wise potentials $\varphi_i(z_i)$. Fix $i \ne j$, $z_i,z_i', z_j, z_j'$ and $z_{-i-j}$, and note that
\[
	\frac{1}{4} \log \frac{P_{\rv{z}}[z_i z_j z_{-i-j}]P_{\rv{z}}[z_i' z_j' z_{-i-j}]}{P_{\rv{z}}[z_i' z_j z_{-i-j}]P_{\rv{z}}[z_i z_j' z_{-i-j}]}
	= \frac{1}{4} \left(\psi_{ij}(z_i z_j) + \psi_{ij}(z_i' z_j') - \psi_{ij}(z_i' z_j) - \psi_{ij}(z_i z_j')\right) \le \beta_{ij}(\rv{z}).
\]
This concludes that $I^{\log}_{j,i}(\rv{z}) \le \beta_{ij}(\rv{z})$. Next, we bound the Dobrushin influences with respect to the log-influences. We begin with the following auxiliary lemma.
\begin{lemma} \label{lem:8}
	Let $\{M_{a, b}\}_{a,b \in \{-1,1\}}$ be positive numbers. Then,
	\begin{equation} \label{eq:37}
	\left|\frac{M_{1,1}}{M_{1,1}+M_{-1,1}} - \frac{M_{1,-1}}{M_{1,-1}+M_{-1,-1}}\right|
	\le \frac{1}{4} \max \left\{ \log \frac{M_{1,1} M_{-1,-1}}{M_{1,-1} M_{-1,1}}, \log \frac{M_{1,-1} M_{-1,1}}{M_{1,1} M_{-1,-1}} \right\}.
	\end{equation}
\end{lemma}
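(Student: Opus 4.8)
The plan is to recognize that, after a logarithmic change of variables, the left-hand side is the difference of two values of the logistic sigmoid while the right-hand side is $\tfrac14$ times the distance between the corresponding arguments; the inequality then reduces to the elementary fact that the sigmoid is $\tfrac14$-Lipschitz.

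Concretely, since all four numbers $M_{a,b}$ are strictly positive, I would set $u = \log(M_{1,1}/M_{-1,1})$ and $v = \log(M_{1,-1}/M_{-1,-1})$, both well-defined finite reals. Writing $\sigma(x) = e^x/(1+e^x)$ for the logistic function, a direct computation gives
\[
\frac{M_{1,1}}{M_{1,1}+M_{-1,1}} = \frac{e^u}{1+e^u} = \sigma(u), \qquad
\frac{M_{1,-1}}{M_{1,-1}+M_{-1,-1}} = \sigma(v),
\]
so the left-hand side of \eqref{eq:37} is exactly $|\sigma(u) - \sigma(v)|$. On the right-hand side, note that $u - v = \log\big(M_{1,1}M_{-1,-1}/(M_{1,-1}M_{-1,1})\big)$, and that $\max\{\log R, \log(1/R)\} = |\log R|$ for any $R>0$; hence the right-hand side equals $\tfrac14 |u-v|$. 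Thus the claim is equivalent to $|\sigma(u)-\sigma(v)| \le \tfrac14 |u-v|$.

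To finish, I would establish the Lipschitz bound. Differentiating gives $\sigma'(x) = \sigma(x)\big(1-\sigma(x)\big)$, and since $\sigma(x)\in(0,1)$, the AM–GM inequality yields $\sigma'(x) \le \tfrac14$ for all $x$. The mean value theorem (or integrating $\sigma'$ between $v$ and $u$) then gives $|\sigma(u)-\sigma(v)| \le \tfrac14 |u-v|$, which is precisely the desired inequality.

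I do not expect a genuine obstacle here: the whole content is spotting the sigmoid reparametrization, after which everything is a one-line Lipschitz estimate. The only points requiring minor care are that positivity of the $M_{a,b}$ guarantees the logits $u,v$ are finite so the substitution is legitimate, and the bookkeeping identity $u-v = \log R$ matching the cross-ratio inside the $\max$ on the right-hand side.
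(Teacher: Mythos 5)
Your proof is correct, and it takes a more self-contained route than the paper's. The paper proves this lemma by first scaling the $M_{a,b}$ to sum to $1$, viewing them as a distribution of a two-coordinate binary random variable $\rv{w}$, and writing that distribution as an Ising model $\Pr[\rv{w}=(a,b)] = e^{\varphi_1(a)+\varphi_2(b)+ab\theta}$; it then observes that the left-hand side of \eqref{eq:37} is exactly the Dobrushin influence $I_{2\to 1}(\rv{w})$ and the right-hand side is exactly $|\theta| = \beta_{12}(\rv{w})$, so the inequality is an instance of Lemma~\ref{lem:hightemp-implies-dobrushin} (whose proof the paper defers to a calculation in Chatterjee's work). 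Your argument instead parametrizes the two conditional probabilities directly as $\sigma(u)$ and $\sigma(v)$ with $u-v$ equal to the log cross-ratio, and closes with the elementary fact that the logistic function is $\tfrac14$-Lipschitz. The two proofs are morally the same calculation --- the influence-versus-potential bound for a two-spin Ising model is itself proved by exactly your sigmoid estimate --- but yours buys self-containedness and makes the origin of the constant $\tfrac14$ transparent (it is $\sup_x \sigma'(x)$), whereas the paper's buys brevity and a conceptual link to its broader framework: the lemma is revealed as the two-variable case of the general principle that Dobrushin influences are dominated by pairwise potentials, with the calculation outsourced to a cited reference.
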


\begin{proof}
	We can assume that $\sum_{a,b} M_{a,b} = 1$, by scaling. Define a random variable $\rv{w}$ over $\{-1,1\}^2$ with $\Pr[\rv{w} = (a,b)] = M_{a,b}$. Any random variable with two binary coordinates can be written as an Ising model. In particular, there exists $\theta \in \mathbb{R}$, and $\varphi_1,\varphi_2 \colon \{-1,1\} \to \mathbb{R}_+$ such that $\Pr[\rv{w} = (a,b)] = e^{\varphi_1(a)+ \varphi_2(b) + ab \theta}$. 
	Note that the left hand side of \eqref{eq:37} equals $I_{2\to 1}(\rv{w})$, and from Lemma~\ref{lem:hightemp-implies-dobrushin}, it is bounded by $|\theta|$. On the other hand, the right hand size of \eqref{eq:37} equals $|\theta|$, and the proof follows.
\end{proof}

Fix $z_{-i-j} \in \Omega^{m-2}$ and $z_j,z_j' \in \Omega$ and let $F$ be the event such that
\[
d_{TV}(P_{\rv{z}_i \mid \rv{z}_{-i}}(\cdot \mid z_{-i-j} z_j), P_{\rv{z}_i \mid \rv{z}_{-i}}(\cdot \mid z_{-i-j} z_j'))
= P_{\rv{z}_i \mid \rv{z}_{-i}}(F \mid z_{-i-j} z_j) - P_{\rv{z}_i \mid \rv{z}_{-i}}(F \mid z_{-i-j} z_j').
\]
The proof follows by applying Lemma~\ref{lem:8} with
\begin{align*}
	M_{1,1} = \Pr[\rv{z}_i \in F, \rv{z}_j = z_j, \rv{z}_{-i-j} = z_{-i-j}] ; \quad&
	M_{1,-1} = \Pr[\rv{z}_i \in F, \rv{z}_j = z_j', \rv{z}_{-i-j} = z_{-i-j}] \\
	M_{-1,1} = \Pr[\rv{z}_i \in F^\mathsf{c}, \rv{z}_j = z_j, \rv{z}_{-i-j} = z_{-i-j}] ; \quad&
	M_{-1,-1} = \Pr[\rv{z}_i \in F^\mathsf{c}, \rv{z}_j = z_j', \rv{z}_{-i-j} = z_{-i-j}].
\end{align*}

\subsection{Proof of Theorem~\ref{thm:main}} \label{sec:pr-thm-main-cor}

We prove a slightly more general result.

\begin{theorem} \label{thm:high-probabiliy}
	Assume the same setting as in Theorem~\ref{thm:pairwise} and additionally, that there exists $L > 0$ such that for every $f \in \cF$ and $z \in Z$, $|f(z)| \le L$. Then, for any $t > 0$,
	\[
	\Pr_{\rv{S} \sim D^{(m)}} \left[ \sup_{f \in \cF} \left|\sum_{i=1}^m f(\rv{s_i}) - \EE_{\rv{S}} \left[\sum_{i=1}^m f(\rv{s_i})\right]\right|
	> \frac{C \GC_{D^{(m)}}(\cF)}{\sqrt{1 - 2\alpha_{\log}(D^{(m)})}} + CL \sqrt{m} t
	\right]
	\le e^{-t^2 / 2},
	\]
	for some universal constant $C > 0$.
\end{theorem}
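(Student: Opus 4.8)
The plan is to derive this high-probability statement from the in-expectation bound of Theorem~\ref{thm:pairwise} together with the McDiarmid-type concentration inequality of Theorem~\ref{thm:dob-conc}, exactly as foreshadowed after the statement of Theorem~\ref{thm:pairwise}. Introduce the functional
\[
g(S) = \sup_{f \in \cF} \left| \sum_{i=1}^m f(s_i) - \EE_{\rv{S}}\left[\sum_{i=1}^m f(\rv{s}_i)\right] \right|,
\]
so that the event of interest is $\{g(\rv{S}) > (\text{first term}) + CL\sqrt{m}\,t\}$. I would first bound $\EE[g(\rv{S})]$ and then show that $g$ concentrates sharply around its mean; the advertised tail then follows by a union of the two estimates.

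For the mean, note that Theorem~\ref{thm:pairwise} controls only the \emph{one-sided} supremum $\sup_{f}\left(\sum_i f(\rv{s}_i) - \EE\sum_i f(\rv{s}_i)\right)$ (its left-hand side, rescaled by $m$ since the functional here sums rather than averages). To recover the absolute value I would write $g = \max\!\big(\sup_f a_f,\ \sup_f(-a_f)\big)$ with $a_f = \sum_i f(\rv{s}_i) - \EE\sum_i f(\rv{s}_i)$, bound the expected maximum by the sum of the two expected suprema, and observe that $\sup_f(-a_f)$ is precisely the one-sided supremum over the negated class $-\cF$. Since the standard Gaussian is symmetric, $\GC_{D^{(m)}}(-\cF) = \GC_{D^{(m)}}(\cF)$, so applying Theorem~\ref{thm:pairwise} to both $\cF$ and $-\cF$ yields $\EE[g(\rv{S})] \le 2Cm\,\GC_{D^{(m)}}(\cF)/\sqrt{1 - 2\alpha_{\log}(D^{(m)})}$, which is the first term of the stated bound.

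For the concentration I would first check that Theorem~\ref{thm:dob-conc} is applicable: by Lemma~\ref{lem:inf-bnd}, $\alpha(D^{(m)}) \le \alpha_{\log}(D^{(m)}) < 1/2 < 1$, so $D^{(m)}$ satisfies Dobrushin's condition in the sense of Definition~\ref{def:dob}. Next I would verify that $g$ has the bounded-differences property with uniform parameters $\lambda_i = 2L$: replacing a single coordinate $s_k$ by $s_k'$ changes each inner sum by at most $|f(s_k) - f(s_k')| \le 2L$, and since $x \mapsto \sup_f |x_f|$ is $1$-Lipschitz, $|g(S) - g(S')| \le 2L$ whenever $S,S'$ differ in one coordinate. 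Theorem~\ref{thm:dob-conc} with $\sum_i \lambda_i^2 = 4L^2 m$ then gives $\Pr[g(\rv{S}) - \EE[g(\rv{S})] \ge s] \le 2\exp\!\big(-(1-\alpha)s^2/(8L^2 m)\big)$; choosing $s = CL\sqrt{m}\,t$ and using $1-\alpha > 1/2$ makes the right-hand side at most $e^{-t^2/2}$ after adjusting the universal constant.

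Combining, with probability at least $1 - e^{-t^2/2}$ we have $g(\rv{S}) \le \EE[g(\rv{S})] + CL\sqrt{m}\,t$, and inserting the expectation bound yields the claim. I do not expect a genuine obstacle here, since both ingredients are already established; the steps requiring the most care are the symmetrization-free treatment of the absolute value through the negated class (leaning on the symmetry of the Gaussian complexity) and the verification that the supremum functional inherits the uniform bounded-differences constant $2L$, which is what feeds the $L\sqrt{m}\,t$ deviation term through Theorem~\ref{thm:dob-conc}.
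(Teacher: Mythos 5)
Your overall architecture matches the paper's: combine the in-expectation bound of Theorem~\ref{thm:pairwise} with the bounded-differences inequality of Theorem~\ref{thm:dob-conc} (applicable since $\alpha(D^{(m)})\le\alpha_{\log}(D^{(m)})<1/2$ by Lemma~\ref{lem:inf-bnd}), and handle the absolute value via the negated class $-\cF$. Your concentration step is sound: the supremum functional does inherit bounded differences $2L$. The genuine gap is in your expectation step. Writing $g=\max(X,Y)$ with $X=\sup_f a_f$ and $Y=\sup_f(-a_f)$, you claim $\EE[\max(X,Y)]\le\EE[X]+\EE[Y]$. Pointwise, $\max(X,Y)\le X+Y$ holds if and only if $\min(X,Y)\ge 0$, and nothing guarantees the one-sided suprema are non-negative for an arbitrary class (they would be if $0\in\cF$ or $\cF=-\cF$, but no such assumption is made). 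Concretely, take $\cF=\{f_0\}$ a singleton and $D^{(m)}$ an i.i.d.\ product measure (so $\alpha_{\log}=0$): then $X=-Y=a_{f_0}$, so $\EE[X]+\EE[Y]=0$, while $\EE[\max(X,Y)]=\EE\left|a_{f_0}\right|$ is typically of order $L\sqrt{m}$; moreover $\GC_{D^{(m)}}(\{f_0\})=0$ because the supremum over a singleton is a mean-zero Gaussian average. Hence your intermediate claim $\EE[g]\le 2Cm\,\GC_{D^{(m)}}(\cF)/\sqrt{1-2\alpha_{\log}(D^{(m)})}$ is false, and the error it introduces is an \emph{additive} term of order $L\sqrt{m}$ --- the same size as the deviation term at $t\approx 1$ --- so it cannot be repaired by enlarging the universal constant $C$ uniformly over all $t>0$.

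The fix is exactly what the paper does: never bound the expectation of the two-sided supremum at all. Apply Theorem~\ref{thm:dob-conc} separately to each one-sided functional $M_{\cF}(S)=\sup_f\left(\sum_{i=1}^m f(s_i)-\EE\sum_{i=1}^m f(\rv{s}_i)\right)$ and $M_{-\cF}(S)$, each of which has bounded differences $2L$ and whose expectation is individually controlled by Theorem~\ref{thm:pairwise} applied to $\cF$ and to $-\cF$ (this is where your correct observation $\GC_{D^{(m)}}(-\cF)=\GC_{D^{(m)}}(\cF)$ enters). Then union bound the two tail events:
\begin{equation*}
\Pr\left[\sup_{f\in\cF}\left|a_f\right|>B+s\right]\le\Pr\left[M_{\cF}>B+s\right]+\Pr\left[M_{-\cF}>B+s\right],
\end{equation*}
where $B$ bounds both expectations. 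This costs only a factor of $2$ in front of the exponential --- a benign constant that the paper itself glosses over --- rather than an additive $L\sqrt{m}$ error inside the bound. With that substitution, the rest of your argument (the bounded-differences verification and the choice $s=CL\sqrt{m}\,t$) goes through as written.
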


\begin{proof}
	We start by bounding the probability that $\sup_{f \in \cF} \left(\sum_{i=1}^m f(\rv{s_i}) - \EE_{\rv{S}} \left[\sum_{i=1}^m f(\rv{s_i})\right]\right)$ is larger than the corresponding value, removing the absolute value (later, we will argue for the opposite inequality).
	The proof follows from a McDiarmid-like inequality for dependent distributions. Define the function $M \colon Z^m \to \mathbb{R}$ by 
	\[ 
	M(S) 
	= \sup_{f \in \cF} \left(\sum_{i=1}^m f(s_i) - \EE_{\rv{S}} \sum_{i=1}^m f(s_i)\right).
	\] 
	For any $S = (s_1, \dots, s_m)$ and $S' = (s_1',\dots,s_m') \in Z^m$, it holds that $\left| M(S) - M(S')\right| \le \sum_{i=1}^m 2L \mathbbm{1}_{s_i \ne s_i'}$. Lemma~\ref{lem:inf-bnd} and Theorem~\ref{thm:dob-conc} imply that for any $t' > 0$,
	\[
	\Pr_{\rv{S} \sim D^{(m)}}\left[M(\rv{S}) - \EE M(\rv{S}) > t' \right]
	\le \exp\left( \frac{-t'^2\left( 1 - \alpha_{\log}\left( D^{(m)} \right) \right)}{C' L^2 m} \right)
	\le \exp\left( \frac{-t'^2 }{2C' L^2 m} \right).
	\]
	Substituting $\EE M(S)$ using Theorem~\ref{thm:high-probabiliy} and setting $t := t' \sqrt{1 / (C' L^2 m)}$, the bound concludes.
	
	To bound the opposite inequality, namely, the probability that
	$\sup_{f \in \cF} \left(\EE_{\rv{S}} \left[\sum_{i=1}^m f(\rv{s_i})\right] - \sum_{i=1}^m f(\rv{s_i})\right)$ is large, one can apply the same arguments on $- \cF := \{ -f \colon f \in \cF \}$, and note that the Gaussian complexity of $\cF$ equals that of $-\cF$.
\end{proof}
\section{Comparison to Related Work on Time Series}
\label{sec:timeseries-comparison}

Much of the work on non-iid Rademacher complexity has focused on time series.
A {\em time series} is a distribution $D^{(\infty)}$ on random variables that are indexed by integer times.
In our setting, we let the random sequence be $\rv{z} = \ldots, \rv{z_{-2}}, \rv{z_{-1}}, \rv{z_0}, \rv{z_1}, \rv{z_2}, \ldots$, where each $\rv{z_i} = (\rv{x_i}, \rv{y_i})$.
Such a series is said to be {\em stationary} if for any times $t$ and $t'$ and any positive integer $k$, the joint distribution of $(\rv{z_t},\ldots,\rv{z_{t+k}})$ and $(\rv{z_t'},\ldots,\rv{z_{t'+k}})$ are the same.
A time series is said to be {\em mixing} if events that are farther spread out in time are closer to being independent from one another.
More formally, for any $t, t' \in \mathbb{Z} \cup \{-\infty, +\infty \}$, let $\sigma_t^{t'}$ be the $\sigma$-algebra generated by $\{\rv{z_i} \vert t \le i \le t' \}$.
The $\alpha$-function is defined as $\alpha(k) = \sup_t \{\abss{\Pr(A,B) - \Pr(A)\Pr(B)} \mid B \in \sigma_{-\infty}^t, A \in \sigma_{t+k}^\infty\}$, and
the $\beta$-function is defined as $\beta(k) = \sup_t \EE_{B \in \sigma_{-\infty}^t} \sup_{A \in \sigma_{t+k}^\infty} \abss{\Pr(A | B) - \Pr(A)}$.
A time series $D^{(\infty)}$ is said to be $\alpha$-mixing if $\lim_{k \rightarrow \infty} \alpha(k) = 0$ and $\beta$-mixing if $\lim_{k \rightarrow \infty} \beta(k) = 0$.
It is a well known that for all $k$ $\beta(k) > \alpha(k)$, and thus $\beta$-mixing implies $\alpha$-mixing.  

Since our work focuses on distributions with identical marginals, we compare it to previous work on stationary time series.
Of these works, \cite{mohri2009rademacher} is most relevant to ours, since McDonald et. al.'s work on non-mixing time series solves the forecasting problem, predicting $\rv{z}_{m+1}$  given previous data, rather than on predicting $\rv{y}$ given $\rv{x}$ as in our setting.	
\cite{mohri2009rademacher} derive the following uniform convergence Rademacher complexity bound for stationary $\beta$-mixing time series.


\begin{theorem}[Theorem 1 in \cite{mohri2009rademacher}]
	Let $H$ be a hypothesis class, $\ell$ a loss function bounded by $L \ge 0$,
	and $\cF = \{\ell \circ h \mid h \in H \}$.
	Then, for any size $m$ sample $S$ from a stationary $\beta$-mixing time series $D^{(\infty)}$ with marginal distribution $D$,
	and any $\mu, a > 0$ with $2\mu a = m$ and $\delta > 2(\mu - 1)\beta(a)$,
	then with probability at least $1 - \delta$, the following inequality holds for all $h \in H$:
	
	$$ \abss{L_D(h) - L_S(h)} \le \Rad_{D^\mu}(\cF) + L \sqrt{\frac{\log \frac{2}{\delta - 2(\mu - 1)\beta(a)}}{2\mu}}$$
	\label{thm:Mohri-UC}
\end{theorem}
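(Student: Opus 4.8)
The plan is to prove this via the classical \emph{independent-blocks} method for $\beta$-mixing sequences (going back to Yu, 1994), which reduces the dependent uniform-convergence problem to its i.i.d.\ counterpart. This is conceptually different from the Dobrushin-based arguments developed earlier in the paper: here there is a genuine temporal ordering and a decay of dependence with time-separation, and the whole point is to exploit that separation to manufacture near-independence, rather than to bound an influence matrix.

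First I would set up the blocking. Since $m = 2\mu a$, partition $\{1,\dots,m\}$ into $2\mu$ consecutive blocks of length $a$, and single out the $\mu$ odd-indexed blocks $O_1,\dots,O_\mu$, treating the interleaved even-indexed blocks as buffers, so that any two retained blocks are separated by a time-gap of at least $a$. Stationarity guarantees each $O_j$ has the same law, namely that of the first $a$ coordinates of the process, so the marginal distribution of each block is pinned down. The crux is then the coupling step: by Berbee's/Yu's lemma, for a stationary $\beta$-mixing process the joint law of $(O_1,\dots,O_\mu)$ lies within total-variation distance $(\mu-1)\beta(a)$ of the product measure in which the $\mu$ blocks are mutually independent with the common block-marginal. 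Hence for any measurable event the probabilities under the true process and under this independent-block surrogate differ by at most $(\mu-1)\beta(a)$; this is where the mixing coefficient $\beta(a)$ enters and where I expect essentially all the difficulty to concentrate.

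On the independent-block surrogate the argument is purely i.i.d.: a standard symmetrization yields $\EE \sup_{h\in H}\bigl(L_D(h)-L_S(h)\bigr)\le \Rad_{D^\mu}(\cF)$, with the effective sample size $\mu$ coming from the number of independent blocks. Then, since $|\ell|\le L$ and each of the $\mu$ blocks contributes a fraction $O(1/\mu)$ of the empirical average, $\sup_{h}\bigl|L_D(h)-L_S(h)\bigr|$ has bounded differences of order $L/\mu$ in each block; McDiarmid's inequality over the $\mu$ independent blocks gives, with probability at least $1-\delta'$ on the surrogate, a deviation from its mean of at most $L\sqrt{\log(2/\delta')/(2\mu)}$ (the $2$ inside the logarithm being the two-sided tail). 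Transferring this high-probability statement back to the true process via the coupling inflates the failure probability additively, and handling both tails of the deviation costs the coupling twice, so $\delta'$ is replaced by $\delta = \delta' + 2(\mu-1)\beta(a)$; this produces precisely the admissibility condition $\delta > 2(\mu-1)\beta(a)$ and the stated bound.

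The main obstacle is the coupling step together with the constant bookkeeping in the final transfer. One must fix the buffer structure so that exactly $\mu$ nearly-independent blocks remain (rather than the naive $2\mu$), verify carefully that boundedness $|\ell|\le L$ produces bounded differences scaling like $L/\mu$ \emph{after} blocking, and track how the total-variation cost of the coupling propagates into the confidence parameter so as to land on the factor $2(\mu-1)\beta(a)$ and the $\log\!\bigl(2/(\delta-2(\mu-1)\beta(a))\bigr)$ term rather than some other constant. Once the coupling and this accounting are in place, the symmetrization and concentration are entirely routine and identical to the i.i.d.\ case.
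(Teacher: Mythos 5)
You should first note that the paper never proves this statement: it is quoted verbatim (as Theorem 1 of Mohri and Rostamizadeh) purely for comparison in the time-series section, and the paper only sketches the original argument in one paragraph --- split the sample into blocks, argue that a thinned subsample (one point per block) is nearly i.i.d., then invoke i.i.d.\ machinery. Your proposal follows exactly that independent-blocks route (Yu's coupling lemma), so the overall strategy matches the cited source; the problems are in the execution.

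There are two concrete gaps. First, you treat the even-indexed blocks as discarded buffers, so every quantity you control is $\sup_h \abss{L_D(h) - L_{S_o}(h)}$, where $S_o$ is the odd half of the sample --- not the stated quantity $\sup_h \abss{L_D(h) - L_S(h)}$, in which $L_S$ averages over all $m = 2\mu a$ points, including the even blocks that your surrogate never sees. The correct argument writes $L_S(h) = \tfrac{1}{2}\left(L_{S_o}(h) + L_{S_e}(h)\right)$, runs the coupling separately on each half (each half is a sequence of $\mu$ blocks with gaps of length $a$, the other half serving as buffers), and union-bounds over the two halves; this doubling is precisely the source of the factor $2$ in the admissibility condition $\delta > 2(\mu-1)\beta(a)$ and of the $2$ inside the logarithm --- not, as you claim, the cost of ``handling both tails'' of the deviation. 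Second, symmetrization over the $\mu$ independent blocks does not directly produce $\Rad_{D^\mu}(\cF)$: it produces a \emph{block} Rademacher complexity, $\EE_{\sigma} \sup_h \frac{1}{\mu}\sum_{j=1}^{\mu} \sigma_j \cdot \frac{1}{a}\sum_{i \in O_j} \ell_h(z_i)$, i.e., the complexity of the block-averaged class evaluated on $\mu$ block-valued samples. To reach $\Rad_{D^\mu}(\cF)$ --- the Rademacher complexity of $\cF$ on $\mu$ points drawn i.i.d.\ from the marginal $D$ --- you need one more step: by sub-additivity of the supremum, the block complexity is at most the average over the $a$ within-block positions of the Rademacher complexity of the $\mu$ points occupying that position across blocks, and under the independent-block surrogate each such $\mu$-tuple is exactly i.i.d.\ from $D$. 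This positional decomposition is the ``one data point from each block'' thinning step that the paper's sketch alludes to; without it, your symmetrization does not land on the stated complexity term. Both gaps are repairable by standard arguments, but as written the proposal bounds the wrong empirical average and asserts a Rademacher bound that does not follow from block-level symmetrization alone.
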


Mohri {\em et. al.} derive their result by exploiting the fact that samples that are sufficiently far apart in the time series are close to independent.
They split the sample $(\rv{z_1},\ldots,\rv{z_m})$ into large blocks, and argue that if one data point is taken from each block, then the resultant subsample is close to being iid.
Once they ``thin'' the original sample in this way, they finish the argument by appealing to methods for proving Rademacher bounds for iid distributions.

While Mohri et. al.'s thinning argument makes analysis simple and yields Rademacher bounds, our work significanlty improves the sample complexity bounds if the time series is also an MRF with pairwise potentials, or, if it is Dobrushin - at least for learnability.
We demonstrate a super-quadratic improvement in the following example.

\subsection{Example: Uniform Convergence Sample Complexity Comparison}
\label{subsec:time-series-example}

Given a symmetric real matrix $\Theta \in \mathbb{R}^{2n+1 \times 2n+1}$,
we define the distribution $P_{\Theta,n}$ on the $2n + 1$ variables $((\rv{x_{-n}}, \rv{y_{-n}}),\ldots,(\rv{x_{n}}, \rv{y_n}))$ as follows. 
Each $\rv{x_i}$ takes a value in the set $[-1,1]$ with the probability density function 
$p(x_{-n},\ldots,x_n) \propto \prod_{i \ne j} e^{\theta_{i,j} x_i x_j}$.
Then, we let $\rv{y_i} = f(\rv{x_i})$, where $f: [-1,1] \rightarrow \{-1, +1\}$ is any function.

In order for Theorem~\ref{thm:main} to apply to $P_{\Theta, n}$, we must ensure that the sums of the pairwise potentials including any one node $i$ are less than (and bounded away from) $0.5$.
So, we let $\theta_{i,j} =\displaystyle\frac{c}{|i - j| \log^2 (|i - j| + 1)}$ for each $i \ne j$, where $c$ is some constant that ensures the convergent series 
$\displaystyle\sum_{k = 1}^\infty \frac{c}{k\log^2 (k + 1)}$ is bounded by a sufficiently small constant.

While the $\beta(k)$ coefficients are bounded by $o(1)$ for $P_{\Theta, n}$, the distribution is not techincally $\beta$-mixing, since a $\beta$-mixing distribution must technically be a distribution on a countable collection of variables (not just on $2n + 1$ variables).
This technicality could be resolved by taking the limiting distribution that arises when $n$ tends to infinity.
The resultant distribution would also be stationary by the symmetry of the $\theta$s.
In order to avoid technicalities of probability theory in favor of more clearly illustrating the sample complexity differences, we just consider a huge (but finite) value of $n$, and call that resulting distribution $D^{(n)}$.

Let $H \subseteq \{h: [-1,1] \rightarrow \{-1,+1\} \}$ be any hypothesis class of finite VC-dimension $d$, and consider the 0-1 loss.
A sample complexity bound $m(\eps, \delta)$ is a bound on the number of samples needed to ensure that the generalization gap is less than $\eps$ with probability at least $1 - \delta$.
We compare our sample complexity bound obtained from Equation~\eqref{eq:sid}, with Mohri et. al.'s bound, derived from Theorem~\ref{thm:Mohri-UC}.
A key to the comparison is the following fact. 

\begin{claim}
	\label{lem:example-is-slow-mixing}
	For the distribution $D$, $\beta(k) \ge \alpha(k) = \Omega(\theta_{0, k}) = \Omega\left(\frac{c}{k \log^2 (k+1)}\right)$.
\end{claim}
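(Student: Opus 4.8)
Since $\beta(k) \ge \alpha(k)$ is the standard comparison recalled in the excerpt, it suffices to lower bound the $\alpha$-mixing coefficient, and for this I would exhibit events on the well-separated coordinates $\rv{x}_0$ and $\rv{x}_k$ whose correlation is of order $\theta_{0,k}$. Because $\rv{y}_i = f(\rv{x}_i)$ is a deterministic function of $\rv{x}_i$, the $\sigma$-algebra generated by $\rv{z}_i$ coincides with that of $\rv{x}_i$; so, taking $t = 0$ in the definition of $\alpha(k)$, every event measurable in $\rv{x}_0$ lies in $\sigma_{-\infty}^0$ and every event measurable in $\rv{x}_k$ lies in $\sigma_k^\infty$. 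Hence $\alpha(k) \ge |\Pr[\rv{x}_0 \in E,\ \rv{x}_k \in E'] - \Pr[\rv{x}_0 \in E]\Pr[\rv{x}_k \in E']|$ for all Borel $E,E'$, and the task reduces to showing that $\rv{x}_0$ and $\rv{x}_k$ are correlated at scale $\theta_{0,k}$.

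I would work with the scalar correlation $\mathrm{Cov}(\rv{x}_0,\rv{x}_k) = \EE[\rv{x}_0\rv{x}_k]$ (the marginals are symmetric, so $\EE\rv{x}_i = 0$). To pass from this scalar back to events, I would use the layer-cake identity $x = \int_0^1 (\mathbbm{1}_{x\ge s} - \mathbbm{1}_{x \le -s})\,ds$, valid for $x \in [-1,1]$. Writing $g_s(x) = \mathbbm{1}_{x \ge s} - \mathbbm{1}_{x \le -s}$ and applying Fubini gives $\mathrm{Cov}(\rv{x}_0,\rv{x}_k) = \int_0^1\!\int_0^1 \mathrm{Cov}(g_s(\rv{x}_0), g_t(\rv{x}_k))\,ds\,dt$. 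Each integrand expands into a signed sum of four quantities of the form $\Pr[\rv{x}_0 \in E, \rv{x}_k \in E'] - \Pr[\rv{x}_0 \in E]\Pr[\rv{x}_k \in E']$, each bounded in absolute value by $\alpha(k)$; this yields the clean inequality $\EE[\rv{x}_0\rv{x}_k] = \mathrm{Cov}(\rv{x}_0,\rv{x}_k) \le 4\,\alpha(k)$, i.e. $\alpha(k) \ge \tfrac14\EE[\rv{x}_0\rv{x}_k]$.

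It then remains to prove $\EE[\rv{x}_0\rv{x}_k] = \Omega(\theta_{0,k})$, which I expect to be the crux. The danger is that the many indirect interaction paths from $0$ to $k$ through the other coordinates could cancel the direct coupling $\theta_{0,k}$. The ferromagnetic structure rules this out: the a priori single-site law is uniform on $[-1,1]$ (hence even) and every coupling $\theta_{i,j} = c/(|i-j|\log^2(|i-j|+1))$ is strictly positive, so Griffiths' (GKS) correlation inequalities apply. By the second Griffiths inequality $\frac{\partial}{\partial\theta_{a,b}}\EE[\rv{x}_0\rv{x}_k] = \EE[\rv{x}_0\rv{x}_k\rv{x}_a\rv{x}_b] - \EE[\rv{x}_0\rv{x}_k]\EE[\rv{x}_a\rv{x}_b] \ge 0$, so $\EE[\rv{x}_0\rv{x}_k]$ is nondecreasing in every coupling. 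Setting all couplings other than $\theta_{0,k}$ to zero can therefore only decrease it, reducing the lower bound to the explicit two-variable model with density proportional to $e^{2\theta_{0,k}x_0x_k}$ on $[-1,1]^2$.

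Finally I would compute this two-variable correlation directly. A first-order expansion of $e^{2\theta_{0,k}x_0x_k}$, together with $\int_{-1}^1 x\,dx = 0$ and $\int_{-1}^1 x^2\,dx = 2/3$, gives $\EE_{\mathrm{2var}}[\rv{x}_0\rv{x}_k] = \tfrac{2\theta_{0,k}}{9} + O(\theta_{0,k}^2)$. Since $\theta_{0,k} \to 0$ as $k \to \infty$ (and $c$ is chosen small), this is $\Omega(\theta_{0,k})$, so combining with $\alpha(k) \ge \tfrac14\EE[\rv{x}_0\rv{x}_k] \ge \tfrac14\EE_{\mathrm{2var}}[\rv{x}_0\rv{x}_k]$ yields $\alpha(k) = \Omega(\theta_{0,k}) = \Omega(c/(k\log^2(k+1)))$, as claimed. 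The one point requiring care is invoking Griffiths' inequalities for continuous rather than Ising spins, which is standard for even ferromagnetic single-site measures such as the uniform law on $[-1,1]$ via Ginibre's formulation.
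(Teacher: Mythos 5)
Your proposal is correct, and it shares the paper's essential skeleton: lower-bound $\alpha(k)$ by a correlation between coordinates $0$ and $k$, use ferromagnetic monotonicity in the couplings to strip away every interaction except $\theta_{0,k}$, and Taylor-expand the resulting two-variable model. The difference is which functional you track and how it connects to $\alpha(k)$. The paper works directly with the orthant events $E_i \equiv (\rv{x}_i > 0)$, so that $\Pr(E_0,E_k) - \Pr(E_0)\Pr(E_k)$ is literally one of the quantities appearing in the definition of $\alpha(k)$, asserts (without proof) that $\Pr(E_0,E_k)$ can only decrease when the other couplings are zeroed out, and then computes $\Pr(E_0,E_k) = 1/4 + \theta_{0,k}/16 + O(\theta_{0,k}^2)$. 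You instead track the spin-spin correlation $\EE[\rv{x}_0 \rv{x}_k]$, which costs you an extra conversion step---the layer-cake/Fubini argument giving $\mathrm{Cov}(\rv{x}_0,\rv{x}_k) \le 4\alpha(k)$, i.e.\ the standard covariance inequality for bounded $\alpha$-mixing variables---but buys you a monotonicity step that is genuinely textbook: GKS-II in Ginibre's formulation applies verbatim to monomial correlations under an even single-site measure (uniform on $[-1,1]$) with ferromagnetic pair couplings. The paper's corresponding step, monotonicity of the orthant probability, amounts via the flip symmetry $\Pr(E_0 \cap E_k) = \tfrac{1}{4}\left(1 + \EE[\mathrm{sgn}(\rv{x}_0)\,\mathrm{sgn}(\rv{x}_k)]\right)$ to a Griffiths inequality for the odd functions $\mathrm{sgn}(\rv{x}_i)$, which is also covered by Ginibre's theorem but is precisely the point the paper glosses over. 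Your two-variable expansion ($2\theta_{0,k}/9$, using the factor-$2$ coupling that comes from reading $\prod_{i\ne j}$ as an ordered product, versus the paper's $\theta_{0,k}/16$ under the unordered convention) is correct either way, and the constant is immaterial to the $\Omega(\theta_{0,k})$ conclusion. Net assessment: same strategy, a slightly longer chain of reductions on your side, in exchange for a cleaner justification of the one nontrivial correlation-inequality step.
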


\begin{proof}
	{\bf Sketch}
	It is always true that the $\beta$ mixing coefficient is larger than the $\alpha$ mixing coefficient.
	In order to show that the $\alpha$-mixing coefficient is not too small, we study the events 
	$E_i \equiv (\rv{x_i} > 0)$.
	In particular, we note that $E_0 \in \sigma_{-\infty}^0$ and $E_k \in \sigma_{k}^\infty$ and we show that $\Pr(E_0, E_k) - \Pr(E_0) \Pr(E_k) = \Omega(\theta_{0,k})$.
	The density function is symmetric, i.e. $p((\rv{x_i})_{i=-n}^n = (a_i)_{i=-n}^n) = p((\rv{x_i})_{i=-n}^n = (-a_i)_{i=-n}^n)$,
	so $\Pr(E_0) \Pr(E_k) = 1/4$.
	
	It now suffices to show that $\Pr(E_0, E_k) = 1/4 + \Omega(\theta_{0,k})$.
	We first observe that since all the $\theta_{i,j}$ coefficients are non-negative in the distribution $P_{\Theta, n}$,
	$\Pr(E_0, E_k)$ can only be made smaller if all but the coefficient $\theta_{0,k}$ are made zero.
	Let this new distribution be called $P$.
	Under $P$, the probability 
	\begin{align}
		\Pr(E_0, E_k) &= \frac{\int_0^1\int_0^1 e^{\theta_{0,k} x y} dx dy}{\int_{-1}^1\int_{-1}^1 e^{\theta_{0,k} x y} dx dy}	\\
		&= \frac{1 + \theta_{0,k}/4 + O(\theta_{0,k}^2)}{4 + O(\theta_{0, k}^2)} \text{ (Taylor expansion)}				\\
		&= 1/4 + \theta_{0, k}/16 + O(\theta_{0, k}^2)																						\\
		&= 1/4 + \Omega(\theta_{0, k})
	\end{align}

\end{proof}

We ignore polylogarithmic-factors for clarity in the following calculations.
Since, $\cH$ has VC-dimension $d$, Mohri et. al.'s Theorem~\ref{thm:Mohri-UC} implies that the generalization gap satisfies 
$\eps \le \sqrt{\frac{d}{\mu}} + \sqrt{\frac{1}{\mu}}$.
By the conditions of the theorem,
the block size $a$ and the number of blocks $\mu$ must satisfy $\mu a = m$ and $\mu \beta(a) \le \delta$.
Adding in the constraint $\beta(a) = \tilde{\Omega}(1/a)$ of Lemma~\ref{lem:example-is-slow-mixing},
yields their tightest sample complexity bound

$$m_{\text{prior-work}}(\eps, \delta) = \tilde{\Theta}\left(\frac{d^2}{\delta\eps^4}\right).$$

Alternately, our sample complexity bound from Equation~\eqref{eq:sid} is 

$$m_{\text{this-paper}}(\eps, \delta) = \Theta\left(\frac{d + \log \frac{1}{\delta}}{\eps^2}\right).$$

The set of stationary $\beta$-mixing time series studied by Mohri et. al. is not a subset of the Dobrushin and pairwise-potential-MRF distributions we study in this paper.
However, in this example, where the time series is also a pairwise-potential-MRF our analysis improves the sample requirement quadratically in $\eps$ and $d$, and {\em exponentially} in $1/\delta$.
The quadratic improvement quantifies the ineffciency of the thinning method in this context,
while the exponential improvement in the dependence on $1/\delta$ results from our use of powerful measure concentration inequalities in our analysis.

\bibliographystyle{plain}
\bibliography{biblio,ising}
\appendix

\section{The Gibbs Sampling Algorithm}
\label{sec:gibbs}

In Section~\ref{sec:learnability} we use the Gibbs sampling algorithm associated with Dobrushin distributions.
For completeness we present the algorithm below.

\begin{algorithm}[H]
\label{alg:gibbs}
	\KwIn{Set of variables $V$, Configuration $x_0 \in S^{|V|}$, Distribution $\pi$}
	initialization\;
	\For{$t = 1$ \KwTo $T$ }{
		Sample $i$ uniformly from $\{1,2,\ldots,n\}$;\\
		Sample $X_i \sim \Pr_{\pi}\left[. | X_{-i} = x_{-i} \right]$ and set $x_{i,t} = X_i$;\\
		For all $j \neq i$, set $x_{j,t} = x_{j,t-1}$;
	}
	\caption{Gibbs Sampling}
\end{algorithm}
\section{Omitted Details from Section~\ref{sec:learnability}}

\label{sec:app-learnability}
%
%

\subsection{Lemmas Required for the Proof of Theorem~\ref{thm:pac-learn}}

\doblemmaone*
\begin{proof}
	. For any $k+1 \le i,j \le n$, such that $i \ne j$, the influence of $i$ on $j$ under the conditional measure is
	\begin{align}
		I_{\pi_{\vec{a}}}(i \to j)  &= \sup_{\substack{(a_{k+1},\ldots,a_n)\backslash (a_i,a_j) \in \Omega^{n-k-2} \\ a_i, a_i' \in \Omega}} \onenorm{\pi(. | \rv{x}_{-i,-j}=a_{-i,-j}, x_i=a_i) - \pi(. |\rv{x}_{-i,-j}=a_{-i,-j},x_i = a_i')} \label{eq:dicd1}\\
		&\le I_{\pi}(i \to j), \label{eq:dicd2}
	\end{align}
	where \eqref{eq:dicd2} holds because the influence of variable $i$ on $j$ is the supremum total variation over conditionings which are Hamming distance one apart and hence upper bounds the partial conditioning we have in \eqref{eq:dicd1}
	Since $\pi$ satisfies the property that for each $i$, $\sum_{j} I_{\pi}(i \to j) \le \alpha$ the above implies that $\sum_{j} I_{\pi_{\vec{a}}}(i \to j) \le \alpha$. Hence the Lemma follows.
\end{proof}

\begin{proof} \textbf{of Lemma \ref{lem:hamm-exp-bound}.}
	We will use the Gibbs sampling algorithm (\ref{alg:gibbs}) for our proof. Since \\$D^{(m)}\left( . |(z_i)_{i=1}^k = (a_i)_{i=1}^k  \right)$ and $D^{(m)}\left( . |(z_i)_{i=1}^k = (a_i')_{i=1}^k  \right)$ satisfy Dobrushin's condition, they have associated Gibbs sampling Markov chains over the state space $(\cX \times \cY)^{m-k}$ which are ergodic. Denote the chains by $M_U$ and $M_V$ respectively. Let $(\rv{U}_t)_{t \ge 0}$ and $(\rv{V}_t)_{t \ge 0}$ be executions of $M_U$ and $M_V$ respectively such that $\rv{\rv{U}}_0 = \rv{V}_0$. We couple these two executions in the following way. At each time step $t$, we choose an index $i \in I_k = \{k+1,\ldots,m \}$ uniformly at random and resample $\rv{U}_{i,t}$ and $\rv{V}_{i,t}$ according to their conditional distributions. And we update $\rv{U}_{i,t}$ and $\rv{V}_{i,t}$ so as to minimize $\Pr[\rv{U}_{i,t} \ne \rv{V}_{i,t}]$. Such a coupling is known as the greedy coupling. We now argue via induction that for all $t$, $\EE\left[d_H(\rv{U}_t,\rv{V}_t) | d_H(\rv{U}_0,\rv{V}_0)=0\right] \le \frac{k\a}{1-\a}$. 
	We have $d_H(\rv{U}_0,\rv{V}_0) = 0$. Assume that for some $t > 0$, $\EE\left[d_H(\rv{U}_t,\rv{V}_t) | d_H(\rv{U}_0,\rv{V}_0)=0\right] \le \frac{k\a}{1-\a}$.
	We will compute a bound on $\EE\left[d_H(\rv{U}_{t+1},\rv{V}_{t+1}) | (\rv{U}_t,\rv{V}_t)\right]$ first.
	We partition the indices $i \in I_k$ into two sets $I_k^{=}$ and $I_k^{\ne}$ as follows.
	\begin{align}
		&I_k^{=} = \left\{ i \in I_k \mid \rv{U}_{i,t} = \rv{V}_{i,t} \right\} \\
		&I_k^{\ne} = \left\{ i \in I_k \mid \rv{U}_{i,t} \ne \rv{V}_{i,t} \right\}
	\end{align}
	To understand whether the Hamming distance goes up or down at time $t+1$, we perform a case analysis.
	Suppose index $i$ was chosen in time step $t+1$ from the set $I_k^{=}$. Then $d_H(\rv{U}_{t+1}.\rv{V}_{t+1}) - d_H(\rv{U}_t,\rv{V}_t) = 1$ or $0$.
	\begin{align}
		&\Pr\left[d_H(\rv{U}_{t+1},\rv{V}_{t+1})-d_H(\rv{U}_t,\rv{V}_t) = 1 \mid \rv{U}_t,\rv{V}_t,i \text{ was chosen for step } t+1 \text{ from } I_k^{=}\right] \notag \\
		&= \Pr\left[ \rv{U}_{i,t+1} \ne \rv{V}_{i,t+1} \mid \rv{U}_t,\rv{V}_t,i \text{ was chosen for step } t+1 \text{ from } I_k^{=}\right] \notag\\
		&\le \norm{D^{(m)}(. | (z_i)_{i=1}^k = (a_i)_{i=1}^k, (z_i)_{i=k+1}^m = \rv{U}_t) - D^{(m)}(. | (z_i)_{i=1}^k = (a_i')_{i=1}^k, (z_i)_{i=k+1}^m = \rv{V}_t)}_{TV} \label{eq:ehb6}\\
		&\le \sum_{j \in [k] \cup I_k^{\ne}} I(j \to i). \label{eq:ehb7}
	\end{align}
	\eqref{eq:ehb6} follows from the definition of Gibbs sampling update probability and the property of total variation distance that it is equal to the worst-case probability of disagreement of a draw from the two distributions over all valid couplings of the two distributions. Hence the greedy coupling should satisfy inequality \eqref{eq:ehb6}. 
	To get \eqref{eq:ehb7}, we use the triangle inequality of total variation distance and bound the total variation in the expression with a sum of total variations between where each term is TV between conditional distributions whose conditioned states have Hamming distance $\le 1$. Each of these total variations is then bounded by their corresponding influence terms (since the influence is defined as a supremum over such conditionings).
	
	Now suppose $i$ was chosen from the set $I_k^{\ne}$ instead. Then $d_H(\rv{U}_{t+1}.\rv{V}_{t+1}) - d_H(\rv{U}_t,\rv{V}_t) = -1$ or $0$.
	\begin{align}
		&\Pr\left[d_H(\rv{U}_{t+1},\rv{V}_{t+1})-d_H(\rv{U}_t,\rv{V}_t) = -1 \mid \rv{U}_t,\rv{V}_t,i \text{ was chosen for step } t+1 \text{ from } I_k^{\ne}\right] \\
		&= \Pr\left[ \rv{U}_{i,t+1} = \rv{V}_{i,t+1} \mid \rv{U}_t,\rv{V}_t,i \text{ was chosen for step } t+1 \text{ from } I_k^{\ne}\right] \\
		&= 1 - \Pr\left[ \rv{U}_{i,t+1} \ne \rv{V}_{i,t+1} \mid \rv{U}_t,\rv{V}_t,i \text{ was chosen for step } t+1 \text{ from } I_k^{\ne}\right] \\
		&\ge 1 - \norm{D^{(m)}(. | (z_i)_{i=1}^k = (a_i)_{i=1}^k, (z_i)_{i=k+1}^m = \rv{U}_t) - D^{(m)}(. | (z_i)_{i=1}^k = (a_i')_{i=1}^k, (z_i)_{i=k+1}^m = \rv{V}_t)}_{TV} \\
		&\ge 1 - \sum_{j \in [k] \cup I_k^{\ne}} I(j \to i), \label{eq:ehb10}
	\end{align}
	using a similar reasoning as above.
	The expected change in the Hamming distance is
	\begin{align}
		&\EE\left[ d_H(\rv{U}_{t+1},\rv{V}_{t+1})-d_H(\rv{U}_t,\rv{V}_t) \mid \rv{U}_t,\rv{V}_t \right] \\
		&~\le \frac{1}{m-k} \EE\left[ \sum_{i \in I_k^{=}} \sum_{j \in [k] \cup I_k^{\ne}} I(j \to i) - \sum_{i \in I_k^{\ne}} \left(1 -  \sum_{j \in [k] \cup I_k^{\ne}} I(j \to i)\right) \mid \rv{U}_t, \rv{V}_t\right] \label{eq:ehb11}\\
		&~\le \frac{1}{m-k}\sum_{j \in [k] \cup I_k^{\ne}} \sum_{i \in [m]\backslash [k]} I(j \to i) - \frac{1}{m-k}d_H(\rv{U}_t,\rv{V}_t) \\
		&\le \frac{(k+d_H(\rv{U}_t,\rv{V}_t))\a - d_H(\rv{U}_t,\rv{V}_t)}{m-k},
	\end{align}
	where \eqref{eq:ehb11} follows from \eqref{eq:ehb7} and \eqref{eq:ehb10}.
	Now,
	\begin{align}
		&\EE\left[ d_H(\rv{U}_{t+1},\rv{V}_{t+1}) | d_H(\rv{U}_0,\rv{V}_0)=0 \right] = \EE\left[ \EE\left[ d_H(\rv{U}_{t+1},\rv{V}_{t+1}) \mid \rv{U}_t,\rv{V}_t\right] \mid d_H(\rv{U}_0,\rv{V}_0) = 0 \right] \\
		&\le \EE\left[ d_H(\rv{U}_t,\rv{V}_t) + \frac{(k+d_H(\rv{U}_t,\rv{V}_t))\a - d_H(\rv{U}_t,\rv{V}_t)}{m-k} \mid d_H(\rv{U}_0,\rv{V}_0) = 0 \right] \\
		&\le \frac{k\a}{1-\a}\left(1 - \frac{1-\a}{m-k} \right) + \frac{k \a }{m-k} = \frac{k\a}{1-\a}. \label{eq:ehb14}
	\end{align}
	Hence we have shown that $\EE\left[ d_H(\rv{U}_{t+1},\rv{V}_{t+1}) | d_H(\rv{U}_0,\rv{V}_0)=0 \right] \le \frac{k\a}{1-\a}$. Now, we have that 
	\begin{align}
		\EE[d_H(\rv{U},\rv{V})] &= \lim_{t \to \infty} \EE[d_H(\rv{U}_t,\rv{V}_t) \mid \rv{U}_0,\rv{V}_0] \label{eq:ehb15}\\
		&\le  \frac{k\a}{1-\a}, \label{eq:ehb16}
	\end{align}
	where \eqref{eq:ehb15} follows because the Gibbs sampler we consider is ergodic and \eqref{eq:ehb16} follows from \eqref{eq:ehb14}.
\end{proof}

\end{document}